\documentclass[reqno]{amsart}

\usepackage[a4paper, margin=1in]{geometry}
\usepackage[T1]{fontenc}
\usepackage[parfill]{parskip}

\usepackage{natbib}
\setcitestyle{authoryear,round}
\usepackage[colorlinks=true,linkcolor=blue,citecolor=purple,urlcolor=blue]{hyperref}

\usepackage{amsmath}
\usepackage{amssymb}
\usepackage{mathtools}
\usepackage{stmaryrd}
\usepackage{dsfont}
\usepackage{nicefrac}

\usepackage{nicematrix}
\usepackage{blkarray}
\usepackage{arydshln}
\NiceMatrixOptions{allow-duplicate-names}
\usepackage{graphicx}
\usepackage{caption}
\usepackage{subcaption}
\usepackage{adjustbox}
\usepackage{wrapfig}
\usepackage{float}
\usepackage{placeins}
\usepackage{tikz}
\usetikzlibrary{arrows.meta, positioning, matrix, decorations.pathreplacing}

\usepackage{multicol}
\usepackage{fancyhdr}
\usepackage{amsaddr}
\usepackage{xpatch}
\usepackage{enumitem}
\usepackage{booktabs}
\usepackage{microtype}
\usepackage[super]{nth}
\usepackage{algorithm}
\usepackage[noend]{algpseudocode}

\makeatletter
\renewcommand\section{\@startsection{section}{1}{\z@}%
  {-3.5ex \@plus -1ex \@minus -.2ex}%
  {2.3ex \@plus.2ex}%
  {\raggedright\normalfont\Large\scshape}} %
\makeatother

\makeatletter
\renewcommand\subsection{\@startsection{subsection}{2}{\z@}%
  {-3.25ex\@plus -1ex \@minus -.2ex}%
  {1.5ex \@plus .2ex}%
  {\raggedright\normalfont\large\scshape}}
\makeatother

\makeatletter
\xapptocmd{\@sect}{\csname #1mark\endcsname{#7}}{}{}
\makeatother  

\makeatletter
\def\paragraph{\@startsection{paragraph}{4}%
  \z@\z@{-\fontdimen2\font}%
  {\normalfont\bfseries}}
\makeatother

\makeatletter
\renewcommand{\eqref}[1]{Eq.~\hyperref[#1]{\ref*{#1}}}
\makeatother

\usepackage{etoolbox}
\makeatletter
\patchcmd{\@maketitle}{\global\topskip42\p@\relax}
  {\global\topskip42\p@\relax \vspace*{-38pt}}
  {}{}
\makeatother

\usepackage{amsmath,amsfonts,bm}
\usepackage{thm-restate}

\DeclareMathOperator{\Concat}{Concat}

\def\eqref#1{equation~\ref{#1}}

\def\1{\bm{1}}

\DeclareMathAlphabet{\mathsfit}{\encodingdefault}{\sfdefault}{m}{sl}
\SetMathAlphabet{\mathsfit}{bold}{\encodingdefault}{\sfdefault}{bx}{n}

\def\gL{{\mathcal{L}}}
\def\gM{{\mathcal{M}}}

\def\gP{{\mathcal{P}}}

\def\gT{{\mathcal{T}}}

\def\sN{{\mathbb{N}}}

\def\sP{{\mathbb{P}}}

\def\sR{{\mathbb{R}}}

\newcommand{\E}{\mathbb{E}}

\graphicspath{{./}{../}}

\theoremstyle{plain}

\newtheorem{proposition}{Proposition}[section]
\newtheorem{lemma}{Lemma}[section]

\theoremstyle{definition}
\newtheorem{definition}{Definition}[section]

\theoremstyle{remark}

\definecolor{mgreen}{HTML}{52796f}
\definecolor{mred}{HTML}{c1121f}
\definecolor{mblue}{HTML}{003566}
\definecolor{mgold}{HTML}{ffb703}
\definecolor{mlightblue}{HTML}{5B99C2}
\definecolor{mpale}{HTML}{e9d8a6}
\definecolor{teal}{HTML}{008080}
\newcommand{\rbox}[2][gray!20]{\tikz[baseline=(n.base)]\node[rounded corners,fill=#1,inner sep=1.5pt, outer sep=0, minimum size=1.2em] (n) {#2};}

\usepackage{xurl}
\usepackage{zref-clever}
\zcsetup{
  cap,
  noabbrev,
  nameinlink=false
}
\newcommand{\cref}[1]{\zcref{#1}}
\newcommand{\Cref}[1]{\zcref[S]{#1}}

\makeatletter
\renewcommand{\footnoterule}{\kern-.4\p@\hrule\@width\columnwidth\kern11\p@\kern-\footnotesep}
\renewcommand{\@makefntext}{\ifx\@makefnmark\relax\noindent\else\indent\@makefnmark\fi}
\makeatother

\title{\Large Induction Heads Interpolate N-Grams}

\author{\NoCaseChange{%
\begin{tabular}{@{}c@{\hspace{2em}}c@{}}
{\large Francesco D'Angelo\textsuperscript{1,$\ast$}} & {\large O\u{g}uz Kaan Y\"uksel\textsuperscript{1,$\ast$}}\\[0.7em]
{\large Swathi Shree Narashiman\textsuperscript{2,$\dagger$}} & {\large Nicolas Flammarion\textsuperscript{1}}
\end{tabular}%
}}
\thanks{%
  \textsuperscript{1}TML Lab, EPFL, Switzerland.\quad\textsuperscript{2}IIT Madras, India.\\
  $^{\ast}$Equal contribution.\quad$^{\dagger}$Work done during an internship at the TML Lab, EPFL.\\
  \phantom{$^{\ast}$}Correspondence to \href{mailto:francesco.dangelo@epfl.ch}{francesco.dangelo@epfl.ch}, \href{mailto:oguz.yuksel@epfl.ch}{oguz.yuksel@epfl.ch}.\\
  \phantom{$^{\ast}$}Published as a conference paper at ICML 2026. \href{https://openreview.net/forum?id=BSY7jhBxM1}{openreview.net/forum?id=BSY7jhBxM1}.}
\date{}

\begin{document}

\begin{abstract}
Induction heads are attention circuits believed to underlie in-context learning in transformers, yet a precise characterization of the estimators they implement remains elusive.
We study transformers trained on order-$k$ Markov chains and identify two complementary smoothing mechanisms.
First, at finite attention-weight scale, the circuit implements a soft context-matching estimator: it aggregates contributions from exact and partial context matches, weighted exponentially by their overlap, and induces a data-dependent interpolation across context orders analogous to Jelinek--Mercer smoothing.
Second, a beginning-of-sequence (BOS) token induces additive pseudo-counts, recovering Dirichlet-style smoothing.
We construct a disentangled transformer implementing both mechanisms and show that trained transformers recover the predicted attention patterns.
Across settings where pseudo-count smoothing is optimal or lower-order contexts provide structured evidence, trained transformers match or outperform classical count-based baselines.
Our results bridge mechanistic interpretability of induction heads with classical statistical smoothing, revealing that transformers learn to regularize in-context estimation rather than simply count.
\end{abstract}

\maketitle

\section{Introduction}
\label{sec:introduction}

A striking capability of large language models is in-context learning (ICL): adapting to new tasks from examples in the prompt, without any parameter updates~\citep{brown2020language,min2022rethinking,bubeck2023sparks}.
A growing body of work explains ICL by identifying \emph{circuits} that transformers implement in their forward pass to solve in-context tasks.
Several such circuits realize principled statistical procedures: gradient-based optimization and least-squares for regression and autoregressive systems~\citep{garg2022can,akyurek2022learning,von2023transformers,zhang2024trained,sander2024how,vladarean2026on}, mirror descent for latent-mixture inference~\citep{dangelo2026transformers}, implicit Bayesian inference~\citep{xie2022explanation,zhang2024what}, and algorithm selection~\citep{bai2023transformers}.
For language models, the canonical circuit is the \emph{induction head}~\citep{elhage2021mathematical,olsson2022context}: two attention layers that search the context for a matching pattern and copy the token that followed it, implementing in-context associative recall.

In controlled sequential settings such as Markov chains and $n$-grams~\citep{bietti2023birth,edelman2024evolution,nichani2024how,chen2024unveiling,ekbote2026what}, prior analyses characterize induction-head circuits in the hard-attention limit, where attention selects exact matches and the resulting predictor reduces to maximum-likelihood (ML) counting.
This counting view is mechanistically appealing but statistically incomplete: for high-order dependencies, exact $k$-gram matches are too rare to support reliable prediction, and the ML estimator assigns zero mass to every unseen continuation.
Classical language models address this problem through smoothing and backoff; redistributing probability mass to unseen events and interpolating estimates across shorter context lengths~\citep{mackay1995hierarchical,liu2024infinigram}.
The induction-head literature for transformers, by contrast, stops at unregularized counting. This raises the following question:
\begin{center}
\textit{Do induction heads implement a richer class of estimators beyond strict counting?}
\end{center}
We study this in order-$k$ Markov prediction: a setting simple enough for exact analysis yet rich enough to capture the finite-sample tradeoff faced by any context-based predictor. Each sequence is generated by a latent transition rule, and the model must predict the next token from the history.
\par\textbf{Contributions.}
We show that the induction-head circuit, analyzed beyond its hard-attention limit, implements analogues of smoothing procedures developed for classical $n$-gram language models.
ML $k$-gram counting is one limiting case of this circuit, recovered when the attention weights are large; at finite attention-weight scale, together with structural tokens such as BOS, the same circuit expresses a richer family of smoothed estimators.

\begin{itemize}[topsep=1pt,itemsep=1pt,parsep=0pt,partopsep=0pt]
\item \textbf{Soft context matching.} We give a constructive proof that a two-layer induction-head circuit implements a \emph{soft context-matching estimator}.
Rather than relying only on exact order-$k$ matches, it scores every candidate context in the history based on the positions where it matches the current context, allowing both contiguous and non-contiguous matches, and aggregates their next-token predictions with weights that depend exponentially on this overlap.
The scale of the attention weights controls this interpolation: large weights recover hard $k$-gram counting, while finite weights spread mass toward lower-order and context-independent estimates, inducing a data-dependent interpolation over context orders analogous to Jelinek--Mercer smoothing.
\item \textbf{BOS as additive pseudo-counts.} We show that a beginning-of-sequence (BOS) token, by acting as a sequence-independent attention target, allows the model to add constant pseudo-counts to the context-specific transition counts, recovering add-$\alpha$-type smoothing. This gives a circuit-level account of how a common architectural convention can implement prior-like regularization.
\item \textbf{Empirical validation.} We confirm the theory on trained transformers, including fully standard ones. Trained models exploit the smoothing mechanisms available to them: without BOS, they rely on soft context matching to interpolate across context orders; with BOS, they use pseudo-counts, matching the add-$\alpha$ Bayes-optimal predictor under independent Dirichlet priors and combining it with soft context matching interpolation under hierarchical priors.
\end{itemize}

\section{Preliminaries}
\label{sec:preliminaries}

We study token sequences $\bm{x} = (x_1, \dots, x_T)$ over a finite vocabulary \(V\) of size \(|V|\).
We identify each token $x_i\in V$ with its one-hot vector in $\{0,1\}^{|V|}$ when needed.
For a sequence $\bm{x}$, we write $\bm{x}_a^b := (x_a, x_{a+1}, \ldots, x_b)$ to denote the subsequence from position $a$ to $b$ (inclusive) and also let \(u_t \coloneqq (x_{t-k+1},x_{t-k+2},\dots,x_{t})\in V^k\) to be the length-\(k\) context at time step \(t\) where \(k\) is a fixed integer.
We denote the set \(\{a, a+1, \ldots, b\}\) by \([a, b]\) for any integers \(a < b\) and write \([a] \coloneqq \{1,2,\ldots,a\}\).

\subsection{Disentangled Transformer Models}
\label{sec:models}

The \textit{disentangled transformer} enhances interpretability by removing MLPs and replacing additive residual connections with concatenation \citep{friedman2023learning}. This creates an explicit residual stream that preserves the history of computations.
\citet{nichani2024how} prove that disentangled transformers are equivalent to standard attention-only transformers, making them a faithful object of theoretical study.
Each input token $x_i$ is represented by its one-hot vector in $\{0, 1\}^{|V|}$, and attention is computed via the following simplified mechanism.
For layer $l$ and head $h$, we use a single matrix $\bm{W}_A^{(l,h)} \in \mathbb{R}^{d_{l-1} \times d_{l-1}}$.
The attention scores are defined as:
\begin{equation}
\label{eq:disentangled_attention_score}
    e_{ij}^{(l,h)} = (\bm{h}_i^{(l-1)})^\top \bm{W}_A^{(l,h)} \bm{h}_j^{(l-1)} + \text{PE}_{ij}^{(l,h)}\,,
\end{equation}
where $\text{PE}_{ij}^{(l,h)}$ depends on the positional encoding scheme. The layer update is defined by concatenating the head outputs with the input:
\begin{align}
    \hat{\bm{h}}_{i}^{(l,h)} &= \sum\nolimits_{j=1}^T \mathcal{A}_{ij}^{(l,h)} \bm{v}_{ij}^{(l,h)}\,, \\
    \bm{H}^{(l)} &= \Concat\left(\bm{H}^{(l-1)}, \hat{\bm{H}}^{(l, 1)}, \ldots, \hat{\bm{H}}^{(l, H_l)}\right)\,.
\end{align}
Here, $\bm{v}_{ij}^{(l,h)}$ represents the value vector, which is set to $\bm{h}_j^{(l-1)}$.
The dimensionality of the representation grows as $d_l = d_{l-1} \left(1 + H_l\right)$ where \(H_l\) is the number of heads at layer \(l\). The attention weights $\mathcal{A}_{ij}^{(l,h)}$ are computed via a causally masked softmax, $\mathcal{A}_{ij}^{(l,h)} = \bigl[\mathrm{softmax}\bigl(\bm{e}_i^{(l,h)} + \bm{m}_i\bigr)\bigr]_j$, where $\bm{m}_i \in \{0, -\infty\}^T$ is the causal mask $(\bm{m}_i)_j = 0$ for $j \leq i$ and $(\bm{m}_i)_j = -\infty$ otherwise. After $L$ layers, a final linear layer $\bm{W}_O \in \mathbb{R}^{|V| \times d_L}$ maps the final representation $\bm{H}^{(L)}$ to logit predictions. We write \(\mathcal{T}: V^T \to \Delta^{|V|-1}\) for the transformer map sending an input sequence \(\bm{x}\in V^T\) to its next-token distribution.

\textbf{Relative Positional Encoding (RPE).} 
This method encodes the distance $i-j$ between tokens.
With causal masking ($i \geq j$), we only need relative positions in $\{0, 1, \ldots, T-1\}$.
We introduce a learnable lookup vector $\bm{R}_A^{(l,h)} \in \mathbb{R}^T$ (one scalar per relative position).
We retrieve the scalar $r_{i-j}^{(l,h)}$ corresponding to index $i-j$.
The positional term in the attention is:
\begin{equation}
    \text{PE}_{ij}^{(l,h)} = r_{i-j}^{(l,h)}\,.
\end{equation}
\textbf{BOS Token.}
We consider settings both with and without a prepended Beginning of Sequence (BOS) token.
The primary reason for using a BOS token is to provide a dedicated ``sink'' for attention heads, ensuring they have a valid, neutral state to attend to when no other relevant context is available. We allow the BOS token to have a fixed non-one-hot representation.

\subsection{In-Context Learning of Markov Chains}
\label{sec:in-context}
We study the capabilities of Transformers to perform in-context learning (ICL) on sequences generated by \emph{order-$k$} Markov chains. Each ICL task corresponds to a latent transition rule $\bm{\pi}$ drawn from a prior; conditional on $\bm{\pi}$, a sequence is generated by the induced Markov process.

\par\textbf{Generative Process.}
Fix an order $k\ge 1$ and a vocabulary $V$.
A latent task is represented by a collection of conditional distributions
$\bm{\pi}=\{\bm{\pi}_u\}_{u\in V^k}$, where each $\bm{\pi}_u\in\Delta^{|V|-1}$ specifies the distribution of the next token given the length-$k$ context $u \in V^k$. At time step $t$, the context is $u_{t-1}=(x_{t-k},\dots,x_{t-1})$; i.e.,
\[
\bm{\pi}_{u_{t-1}}(m)=\mathbb{P}(x_t=m \mid u_{t-1})\,,\quad m\in V\,.
\]
The generation of a sequence $\bm{x}$ proceeds as follows:
\begin{enumerate}
    \item \textbf{Sample Task:}
Sample  $ \bm{\pi}$ from a prior distribution (specified below).
    \item \textbf{Sample Sequence:} Sample an initialization $x_{1:k}\sim \text{Unif}(V)^k$. For $t=k+1,\dots,T$, generate tokens according to the order-$k$ Markov property:
    \begin{align*}
        x_t \mid x_{t-k:t-1} &\sim \text{Categorical}(\bm{\pi}_{u_{t-1}})\,,
    \end{align*}
    We write \(\bm{x} \sim \bm{\pi}\) for brevity.
\end{enumerate}
\begin{figure}[t]
  \centering
  \includegraphics[width=\textwidth]{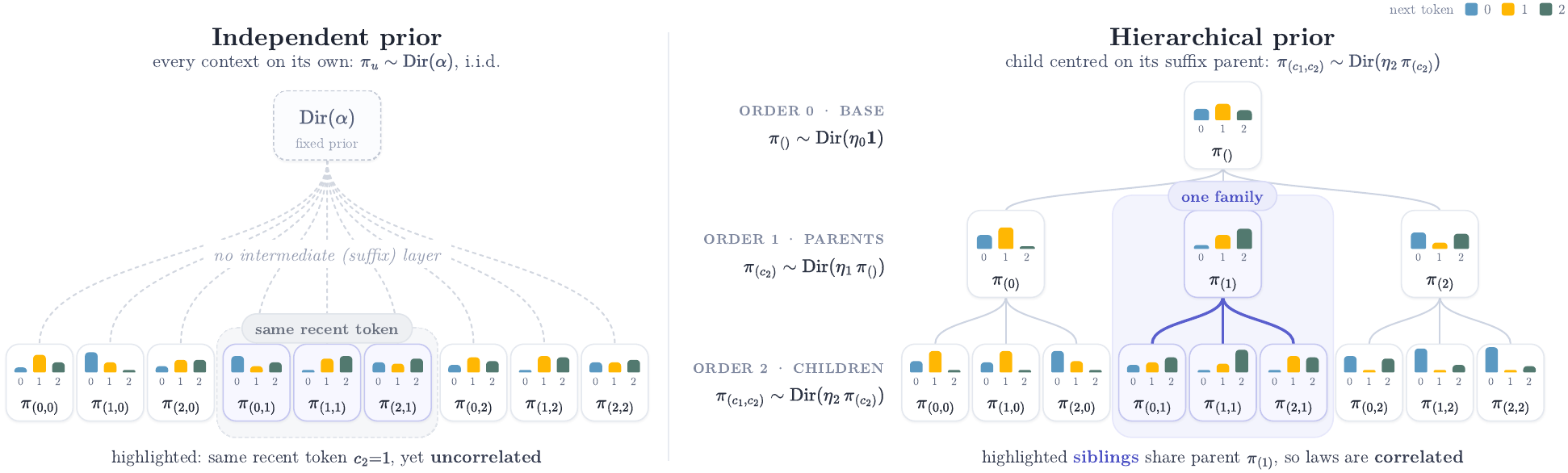}
  \caption{\textbf{The two task priors over the order-$k$ transition tensor} (illustrated for $k{=}2$, $|V|{=}3$).
  At each step the next token $x_t$ is drawn from the law of its length-$k$ context; for $k{=}2$ this context is
  the ordered pair $u=(c_1,c_2)=(\mathrm{older},\mathrm{recent})=(x_{t-2},x_{t-1})$, which indexes a next-token law
  $\bm{\pi}_u(m)=\mathbb{P}(x_{t}{=}m\mid u)\in\Delta^{|V|-1}$. Both priors place a distribution over the
  collection $\{\bm{\pi}_u\}$ and differ only in how these laws are \emph{coupled} across contexts.
  \emph{Left (independent):} each $\bm{\pi}_u$ is an i.i.d.\ $\mathrm{Dirichlet}(\bm{\alpha})$ draw, so there is
  no intermediate layer and distinct contexts are uncorrelated---nothing can be borrowed across them.
  \emph{Right (hierarchical):} each order-$2$ law $\bm{\pi}_{(c_1,c_2)}$ is centred on its suffix parent
  $\bm{\pi}_{(c_2)}$, itself centred on the base $\bm{\pi}_{()}$, so contexts sharing a suffix
  (\emph{siblings}, highlighted) are correlated and a predictor can back off from a rare full context to its
  more frequent suffix. Bars show one sampled next-token distribution per node.}
  \label{fig:task_priors}
\end{figure}

\par\textbf{Task prior over transitions.}
We consider two prior distributions over $\bm{\pi}$, contrasted in \Cref{fig:task_priors}:
\begin{enumerate}
\item \textbf{Independent Dirichlet prior.}
For each context $u\in V^k$, we sample transitions independently as
\begin{equation*}
\bm{\pi}_u \sim \mathrm{Dirichlet}(\bm{\alpha}) \quad \forall u\in V^k\,,
\end{equation*}
where $\bm{\alpha}\in\mathbb{R}_+^{|V|}$ controls sparsity/uniformity (e.g., $\bm{\alpha}=\bm{1}$ yields a uniform prior over the simplex). The $|V|^k$ context rows are then mutually independent, so observing one context says nothing about any other (\Cref{fig:task_priors}, left).
\item \textbf{Hierarchical Dirichlet prior.}
To induce smoothing across context lengths, we define a hierarchy over suffixes. First, sample a base distribution
\begin{equation*}
\bm{\pi}_{()} \sim \mathrm{Dirichlet}(\eta_0\cdot \bm{1})\,,
\end{equation*}
and then, for each level $\ell\in[1,k]$ and each context $c=(c_1,\ldots,c_\ell)\in V^\ell$, sample
\begin{equation*}
\bm{\pi}_{(c_1,\ldots,c_\ell)} \sim \mathrm{Dirichlet}\!\bigl(\eta_\ell \cdot \bm{\pi}_{(c_2,\ldots,c_{\ell})}\bigr)\,,
\end{equation*}
so that higher-order transitions are centered on their length-$(\ell-1)$ suffix. The parameters $\eta_1,\ldots,\eta_k>0$ control the strength of coupling:
large $\eta_\ell$ concentrates $\bm{\pi}_{(c_1,\ldots,c_\ell)}$ around its parent, while small $\eta_\ell$ allows greater deviation.
By construction, all length-$\ell$ contexts sharing their shorter suffix are drawn around a common parent, so their transition rows are correlated.
Thus, shorter suffixes provide evidence about longer contexts that share them. A predictor can exploit this through smoothing or back-off: when the full context is too rare to estimate reliably, it falls back to more frequent shorter suffixes~(\Cref{fig:task_priors}, right).
\end{enumerate}

\par\textbf{The In-Context Learning Task.}
The objective of the model $f$ is to minimize the prediction error for the token $x_{T+1}$ given the context $\bm{x}_{1}^{T}$, marginalized over the prior distribution of latent tasks $\bm{\pi}$. Formally, the optimization problem is:
\begin{equation}
\label{eq:icl_optimization_markov}
\inf_{f \in \mathcal{F}} \mathbb{E}_{\bm{\pi}} \mathbb{E}_{\bm{x} \sim \bm{\pi}} \mathbb{E}_{x_{T+1} \sim \bm{\pi}_{u_T}} \left[ - \log f(x_{T+1} \mid \bm{x}_1^{T}) \right]\,.
\end{equation}
We emphasize that we optimize only the \emph{last-token} negative log-likelihood at $t=T+1$ (instead of the sum over $t$), isolating the model's ability to use the preceding context $\bm{x}_1^{T}$ for in-context inference.
To minimize this loss, the model must predict $x_{T+1}$ from the observed history $\bm{x}_1^{T}$ alone. 

\paragraph{Bayes-Optimal Predictor.}
We consider the Bayes-optimal predictor under each of our two task priors.

\smallskip
\noindent\textbf{(a) Independent Dirichlet prior.}
Due to the conjugacy of the Dirichlet prior with the Categorical likelihood, the Bayes-optimal solution to \Cref{eq:icl_optimization_markov} is analytically tractable in this case.
For any context $u\in V^k$ and symbol $m\in V$, define the (prefix) transition count
\begin{equation}\label{eq:transcount}
    N_u^{(t)}(m)\coloneqq\#\Bigl\{s\in[k+1, t]: u_{s-1}=u,\ x_s=m\Bigr\}
\end{equation}
and $N_u^{(t)} := \sum_{m\in V} N_u^{(t)}(m)$.
The posterior distribution of the row $\bm{\pi}_u$ at time $t$ is
\begin{equation*}
    \mathbb{P}(\bm{\pi}_u \mid \bm{x}_1^{t}, \bm{\alpha}) = \text{Dirichlet}(\bm{\alpha} + \bm{N}_{u}^{(t)})\,,
\end{equation*}
where $\bm{N}_{u}^{(t)} = (N_{u}^{(t)}(m))_{m\in V}\in\mathbb{N}_+^{|V|}$.
Thus the optimal predictive distribution for the next token $x_{t+1}$, given the current context $u_{t}$ is the posterior mean
\begin{align}
    \label{eq:bayes_optimal_markov}
    \mathbb{E}[\bm{\pi}_{u_{t}}(m) \mid \bm{x}_1^{t}] = \frac{\alpha_m + N_{u_{t}}^{(t)}(m)}{\sum_{j \in V} (\alpha_j + N_{u_{t}}^{(t)}(j))}\,.
\end{align}
This implies that the optimal in-context learner effectively implements a count-based estimator of the order-$k$ transition rule with add-$\bm{\alpha}$ smoothing, reducing to Laplace smoothing when $\alpha_m=1$ for all $m\in V$.

\smallskip
\noindent\textbf{(b) Hierarchical Dirichlet prior.}
Under the hierarchical prior, the transition distributions across contexts are \emph{coupled} through shared parent distributions. Consequently, the Bayes-optimal predictive distribution still takes the form
\begin{equation*}
    \mathbb{P}(x_{t+1}=m \mid \bm{x}_1^{t}) \;=\; \mathbb{E}\!\left[\bm{\pi}_{u_{t}}(m)\mid \bm{x}_1^{t}\right]\,,
\end{equation*}
but this posterior expectation no longer admits a simple closed-form expression analogous to \Cref{eq:bayes_optimal_markov}~\citep[see, e.g.,][]{mackay1995hierarchical}.

\section{Induction Heads as Soft Context Matchers}
\label{sec:theory}

In this section, we characterize some estimators of interest that a two-layer disentangled transformer can implement for next-token prediction tasks on order-\(k\) Markov chains.
We present our main result as a formal proposition and then provide the constructive proof.

\subsection{Main Result}

We set $k' = k+2$ when the BOS token is included, and $k' = k+1$ otherwise.
Note that \(k'\) is the smallest index with access to a full-context.
To measure similarity between the query context \(u_t\) and candidate contexts \(u_s\) for \(s \in [k',t]\), we introduce the \emph{match mask}.

\begin{definition}[Mask-Conditioned Counts]
\label{def:match_mask}
For any $s \in [k', t]$, the \emph{match mask} between $u_t$ and $u_s$ is defined as
\begin{equation*}
    M_s^{(t)} := \bigl\{ r \in [k] : x_{t-r+1} = x_{s-r} \bigr\} \subseteq [k]\,.
\end{equation*}
The cardinality $|M_s^{(t)}|$ counts the number of positions where the two contexts agree.
For a fixed query position $t$ and any $M \subseteq [k]$, define the \emph{mask} counts \(N_M^{(t)}\) and the \emph{mask-conditioned transition} counts \(N_M^{(t)}(m)\):
\begin{equation*}
    \begin{split}
        \displaystyle N_M^{(t)} &\coloneqq \#\bigl\{ s \in [k', t] : M_s^{(t)} = M \bigr\}\,, \\
        N_M^{(t)}(m) &\coloneqq \#\bigl\{ s \in [k', t] : M_s^{(t)} = M,\ x_s = m \bigr\}\,.
    \end{split}
\end{equation*}
\end{definition}

Note that $N_M^{(t)} = \sum_{m \in V} N_M^{(t)}(m)$. The special case $M = [k]$ corresponds to \emph{exact} context matches, and $N_{[k]}^{(t)}(m)$ coincides with the standard transition count $N_{u_t}^{(t)}(m)$ defined in \Cref{eq:transcount}.
We now state our main theoretical result, which constructs a two-layer transformer that realizes an interpolated estimator over masks counts.
\begin{proposition}[Transformer Estimator for Order-$k$ Markov Chains]
\label{prop:transformer_estimator}
There exists a two-layer disentangled transformer $\mathcal{T}$ with RPE using $k$ attention heads in the first layer and a single attention head in the second layer, such that for any input sequence $\bm{x} \in V^T$, the model is a probability distribution $\mathcal{T}(\bm{x}) \in \Delta^{|V|-1}$ over $V$ given by
\begin{equation}
\label{eq:transformer_estimator}
    \mathcal{T}(\bm{x})(m) = \frac{e^{\kappa} / |V| + \displaystyle\sum_{M \subseteq [k]} e^{|\bm{\beta}|_M} \, N_M^{(T)}(m)}{e^{\kappa} + \displaystyle\sum_{M \subseteq [k]} e^{|\bm{\beta}|_M} \, N_M^{(T)}}\,,
\end{equation}
where $|\bm{\beta}|_M = \sum_{i \in M} \beta_i$, \(\bm{\beta} = (\beta_1, \ldots, \beta_{k}) \in \sR^{k}\) and \(\kappa \in \sR\) are free parameters, and \(\kappa \neq -\infty\) only when BOS token is prepended. 
\end{proposition}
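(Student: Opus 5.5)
We build the circuit explicitly, in the usual induction-head way. Layer 1 copies the previous tokens into the residual stream using relative positional encodings. Layer 2 compares the query's recent tokens with each key's preceding tokens through a bilinear form. The BOS position supplies the $e^{\kappa}$ pseudo-count.

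\textbf{Layer 1.} Head $r\in[k]$ sets $\bm{W}_A^{(1,r)}=0$. Its RPE takes the value $r^{(1,r)}_{r}=0$ at offset $r$ and $-\infty$ at every other offset; in the exact statement this is a limit, handled by taking large finite values. Position $j$ then attends only to $j-r$, so $\hat{\bm h}^{(1,r)}_j=\bm h^{(0)}_{j-r}$, the one-hot of $x_{j-r}$. After concatenation, $\bm h^{(1)}_j$ contains $x_j$ in its own block and $x_{j-1},\dots,x_{j-k}$ in the $k$ head blocks.

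Near the start of the sequence, an offset may point before position $1$. With BOS prepended, we let such heads land on BOS, which has a non-one-hot representation chosen orthogonal to every token embedding. This is why $k'$ shifts by one when BOS is present.

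\textbf{Layer 2.} We choose $\bm{W}_A^{(2)}$ block-sparse. For each $r\in[k]$, it pairs the query's block containing $x_{t-r+1}$ with the key's head-$r$ block containing $x_{s-r}$, and places $\beta_r\bm I_{|V|}$ in that position. Recall that the query's own token is $x_t$ and $x_{t-r+1}$ for $r\ge 2$ sits in head $(r-1)$'s block. The score is then
\[
e_{ts}=\sum_{r=1}^k\beta_r\mathbf 1[x_{t-r+1}=x_{s-r}]=|\bm\beta|_{M_s^{(t)}}.
\]
Positions $s<k'$ must be excluded, because their head blocks point before the start of the sequence or at BOS. To do this we set the RPE $r^{(2)}_{t-s}$ to $-\infty$ for the few offsets this requires; this only works uniformly when $t=T$ is fixed. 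Alternatively we add a large negative score against the BOS feature in the head-$r$ blocks. We have to decide which of these two devices to use.

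For BOS itself, the query–BOS score is a constant $\kappa$. We obtain it by having the query's constant structure read BOS's own distinguishing direction; with no BOS this term is simply absent, giving $\kappa=-\infty$. The softmax weights are therefore $e^{|\bm\beta|_{M_s}}/Z$ and $e^{\kappa}/Z$, with $Z=e^{\kappa}+\sum_s e^{|\bm\beta|_{M_s}}$. Grouping $s$ by mask value gives $Z=e^{\kappa}+\sum_M e^{|\bm\beta|_M}N_M^{(T)}$.

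\textbf{Output.} The value vector is $\bm h^{(1)}_s$, which contains the one-hot of $x_s$ in its own-token block. The BOS value contains BOS's own representation in that same block. $\bm W_O$ reads this block.

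The output is a convex combination $\sum_s\mathcal A_{Ts}\,e_{x_s}+\mathcal A_{T,\mathrm{BOS}}\,\bm b$, where $\bm b$ is BOS's own embedding. If we choose $\bm b=\bm 1/|V|$ in the token coordinates, the output vector already equals the target distribution in \eqref{eq:transformer_estimator}. The remaining issue is that $\bm W_O$ produces logits that then pass through a softmax. We handle this in one of two ways. We can take the output map to be the identity on probabilities, treating this as the usual disentangled convention. Otherwise we apply $\log$ composition, which we would argue is permitted since the proposition states that $\mathcal T(\bm x)$ equals the distribution.

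\textbf{Main obstacle.} The delicate parts are the boundary handling and the orthogonality of BOS to token directions. BOS must contribute the constant $\kappa$ and the uniform vector, while positions $s<k'$ must contribute nothing. I would first resolve this by giving BOS two extra coordinates. The first is an indicator that the query weights by $\kappa$. The second is a flag that propagates through layer 1 into the head blocks of early positions and is penalized with weight $-\infty$ in layer 2. I would then verify that $N_M^{(T)}$ counts exactly the positions $s\in[k',T]$.
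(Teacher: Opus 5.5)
\textbf{Verdict: genuine gap, in the boundary and BOS handling.} Your core circuit is the paper's: lag-$r$ copy heads in layer 1, the shifted blocks $\beta_r\bm I_{|V|}$ pairing the query block holding $x_{t-r+1}$ with the key block holding $x_{s-r}$, a BOS value $\bm{1}/|V|$ in block $0$, and a readout of block $0$ of the layer-2 head. On the output, the paper takes $\mathcal T(\bm x)=\bm W_O\bm h_T^{(2)}$ directly as the distribution, i.e.\ your first option. A $\log$ cannot be produced by a linear readout, so drop the second.

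The gap is the step that restricts the sum to $s\in[k',T]$ and gives BOS weight exactly $e^{\kappa}$. You leave this undecided, then commit to a flag scheme that fails as stated, for three reasons.
\begin{itemize}
\item BOS is at position $1$, so each of its layer-1 heads can only attend to itself. Its head blocks therefore carry the same flag you propagate into early positions. A $-\infty$ penalty on that flag kills the BOS term too, unless you add an exactly compensating divergent bonus on BOS's own-block indicator, which you do not set up.
\item Without BOS there is no flag at all, so positions $s<k'$ are never removed. Their head blocks hold averages of earlier tokens (your claim that such heads ``land on BOS'' is also not realized by $\bm W_A^{(1,r)}=\bm 0$ with a purely positional RPE). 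These positions receive finite scores and add their $x_s$ to the output.
\item BOS cannot be both orthogonal to the token directions and equal to $\bm{1}/|V|$ in them. With $\bm{1}/|V|$ in all of BOS's blocks, the $\beta_r\bm I$ blocks give BOS an extra content score $|\bm\beta|_1/|V|$, which must be subtracted.
\end{itemize}

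The fix is the device you listed and then set aside. The proposition concerns only the output at the fixed last position $T$. The layer-2 RPE has one entry per offset in $\{0,\dots,T-1\}$, so the paper indexes it relative to $T$:
\begin{itemize}
\item value $0$ at offsets $0,\dots,T-k'$, i.e.\ keys $j\ge k'$;
\item $-\delta_1^{(2)}\to-\infty$ at offsets $T-k'+1,\dots,T-2$, and also at $T-1$ when there is no BOS;
\item with BOS, $\delta_2^{(2)}=\kappa-|\bm\beta|_1/|V|$ at offset $T-1$, where BOS always sits.
\end{itemize}
This gives $e^{(2)}_{T1}=\kappa$ without extra coordinates or flags, and handles the BOS and no-BOS cases uniformly. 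Your observation that this ``only works when $t=T$ is fixed'' is not a limitation here. After this change, the softmax bookkeeping and the grouping by masks go through as you wrote them.
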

Two features of the estimator in \cref{eq:transformer_estimator} are worth noting. First, prepending a BOS token enables an additive constant (pseudo-count) term
$e^{\kappa}/|V|$, yielding an add-$\bm{\alpha}$-type smoothing of the empirical counts. Second, for finite attention-weight parameters $\bm{\beta}$, the factors $e^{|\bm{\beta}|_M}$ induce an interpolation across context orders, producing a Jelinek–Mercer-style estimator.
Together, these provide two complementary knobs, pseudo-count smoothing and order interpolation. \Cref{sec:smoothing} focuses on these two mechanisms, and relate them to classical $n$-gram smoothing techniques.
\begin{figure}[t]
\centering
\includegraphics[width=\textwidth]{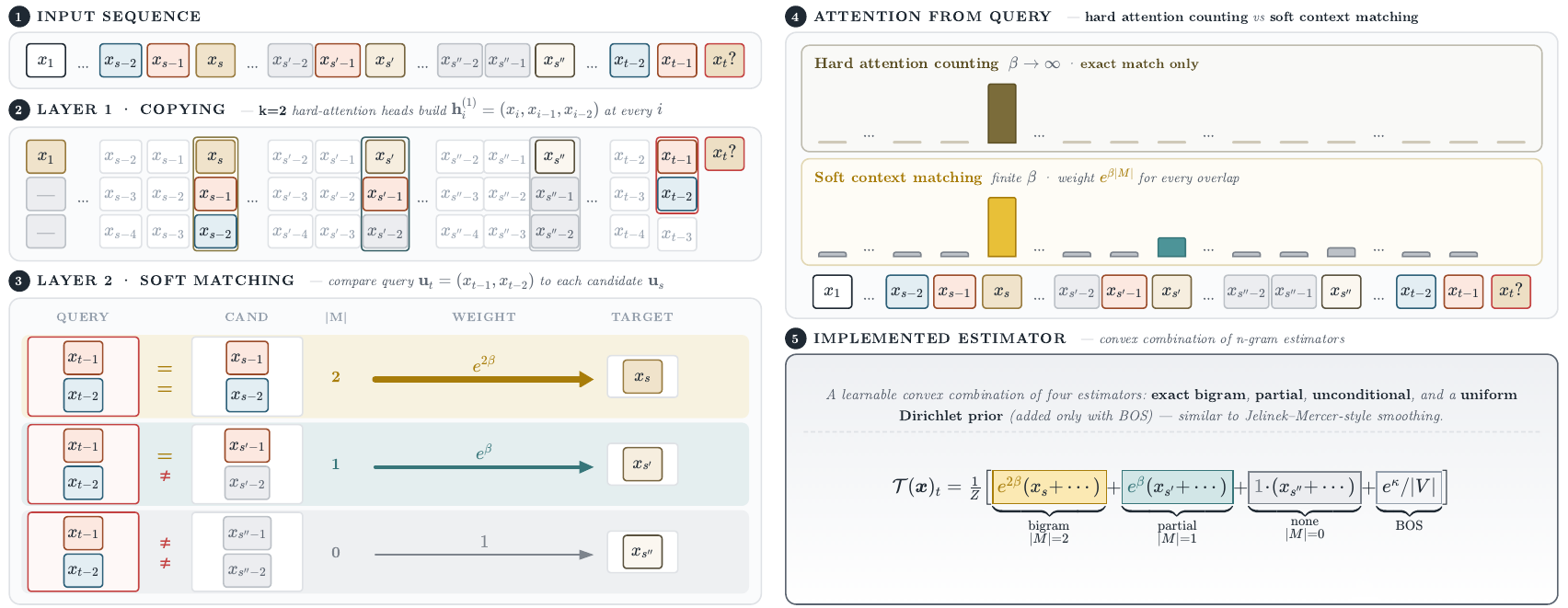}
\caption{Two-layer transformer for order-$k$ Markov chains ($k{=}2$). \textbf{Layer 1}: Copy heads build representations $\bm{h}_i^{(1)} = (x_i, x_{i-1}, x_{i-2})$. \textbf{Layer 2}: Compares query context at $t$ with candidate contexts at positions $s, s', s''$. Vertical lines show position-wise comparison ($=$ match, $\neq$ mismatch). Attention weights scale as $e^{\beta|M|}$ (arrow thickness). Successor tokens are aggregated into a probability distribution.}
\label{fig:two_layer_circuit}
\end{figure}

\subsection{Proof of ~\Cref{prop:transformer_estimator}}

The proof is constructive: we explicitly specify all the weights of a two-layer disentangled transformer $\mathcal{T}$ that computes~\Cref{eq:transformer_estimator}.
We follow the notation from \Cref{sec:models}.

\subsubsection{Layer 1: Copying Heads}
\label{sec:copying}
The first layer uses $k$ attention heads, each implementing a ``copy'' operation that retrieves a specific token from the context. For head $h \in [k]$, we design the attention mechanism to copy the token at relative position $-h$.

\textbf{Positional weights.}
We set the content-based attention matrices to zero, \(\bm{W}_A^{(1,h)} = \bm{0}_{|V| \times |V|}\).
The attention RPE vector $\bm{R}_A^{(1,h)} \in \mathbb{R}^T$ is then set to implement hard attention to relative position $h$:
\begin{equation*}
    (\bm{R}_A^{(1,h)})_{i-j} = 
    \begin{cases}
        +\delta^{(1)} & \text{if } i - j = h\,, \\
        0 & \text{otherwise}\,,
    \end{cases}
\end{equation*}
where $\delta^{(1)} > 0$ is a large constant. This structure ensures that, regardless of the token content, the attention focuses on the position at lag $h$.

\textbf{Attention scores and output.}
The attention score from position $i$ to position $j$ for head $h$ is simply
\begin{align*}
e_{ij}^{(1,h)} &= \begin{cases}
    +\delta^{(1)} & \text{if } i - j = h\,, \\
    0 & \text{otherwise}\,.
\end{cases}
\end{align*}
In the limit $\delta^{(1)} \to \infty$, the softmax yields hard attention: $\mathcal{A}_{ij}^{(1,h)} = \mathbb{I}\{j = i - h\}$ for any \(i > h\).
The output of head $h$ at position $i > h$ is: $\hat{\bm{h}}_i^{(1,h)} = \sum_{j=1}^{i} \mathcal{A}_{ij}^{(1,h)} \, \bm{h}_j^{(0)} = x_{i-h},$ whereas for position \(i=0\), we have \(\hat{\bm{h}}_i^{(1,h)} = x_1\).

Using the concatenation update rule, the representation after layer 1 is for \(i > k\):
$
    \bm{h}_i^{(1)} = \bigl(x_i, x_{i-1}, \ldots, x_{i-k}\bigr) \in \{0,1\}^{(k+1)|V|}\,.
$
This representation encodes both the current token $x_i$ and the full length-$k$ context $(x_{i-1}, \ldots, x_{i-k})$.
The residual stream structure is:
{\small\begin{equation}
    \label{eq:layer1_output}
    \bm{h}_i^{(1)} = 
    \left(\begin{array}{c}
        \rbox[mgold!50]{$x_i$} \\
        \hdashline
        \rbox[mred!50]{$x_{i-1}$} \\
        \rbox[mgreen!50]{$x_{i-2}$} \\
        \vdots \\
        \rbox[mblue!50]{$x_{i-k}$}
    \end{array}\right)
    \begin{array}{l}
        \leftarrow \text{current token (block 0)} \vphantom{\rbox[mgold!50]{$x_i$}} \\
        \leftarrow \text{lag 1} \vphantom{\rbox[mred!50]{$x_{i-1}$}} \\
        \leftarrow \text{lag 2} \vphantom{\rbox[mgreen!50]{$x_{i-2}$}} \\
        \vphantom{\vdots} \\
        \leftarrow \text{lag } k  \vphantom{\rbox[mblue!50]{$x_{i-k}$}}
    \end{array}
\end{equation}}

\subsubsection{Layer 2: Context Matching}
The second layer uses a single attention head that compares the query context to all candidate contexts and aggregates their successor tokens.We set the attention RPE
{\setlength{\arraycolsep}{1.5pt}
\begin{equation*}
    \bm{R}_A^{(2)}=
    \begin{cases}
    \begin{array}{@{}c@{}}
    \begin{array}{@{}ccccccc@{}}
    \scriptstyle 0 & \cdots & \scriptstyle T-k' & \scriptstyle T-k'+1 & \cdots & \scriptstyle T-2 & \scriptstyle T-1
    \end{array}
    \\[-2pt]
    \left(
    \begin{array}{@{}ccccccc@{}}
    \rbox[gray!20]{$0$} &
    \cdots &
    \rbox[gray!20]{$0$} &
    \rbox[mred!20]{$-\delta_1^{(2)}$} &
    \cdots &
    \rbox[mred!20]{$-\delta_1^{(2)}$} &
    \rbox[mblue!20]{$\delta_2^{(2)}$}
    \end{array}
    \right)
    \end{array}
    & \text{w BOS}\,,\\[8pt]
    \begin{array}{@{}c@{}}
    \begin{array}{@{}cccccc@{}}
    \scriptstyle 0 & \cdots & \scriptstyle T-k' & \scriptstyle T-k'+1 & \cdots & \scriptstyle T-1
    \end{array}
    \\[-2pt]
    \left(
    \begin{array}{@{}cccccc@{}}
    \rbox[gray!20]{$0$} &
    \cdots &
    \rbox[gray!20]{$0$} &
    \rbox[mred!20]{$-\delta_1^{(2)}$} &
    \cdots &
    \rbox[mred!20]{$-\delta_1^{(2)}$}
    \end{array}
    \right)
    \end{array}
    & \text{w/o BOS}\,,
    \end{cases}
\end{equation*}}
where \(\delta^{(2)}_1, \delta^{(2)}_2 \in \sR\) are constants. We design the attention matrix $\bm{W}_A^{(2)} \in \mathbb{R}^{(k+1)|V| \times (k+1)|V|}$ to compute a shift-aligned inner product between the query and key contexts, matching each query context token with the corresponding predecessor token of the key.
Partition $\bm{h}_i^{(1)}$ into $(k+1)$ blocks of size $|V|$, indexed by $r \in \{0, 1, \ldots, k\}$, where block $r$ contains $x_{i-r}$. We define:
\begin{equation*}
    \bm{W}_A^{(2)} = (\bm{S} \otimes \bm{I}_{|V|})\,,
\end{equation*}
where $\bm{I}_{|V|}$ is the $|V| \times |V|$ identity matrix and $\bm{S} \in \mathbb{R}^{(k+1) \times (k+1)}$ is a scaled shift matrix with entries $\bm{S}_{r,r+1} = \beta_r$ for $r \in [k]$ and zeros elsewhere. The attention matrix $\bm{W}_A^{(2)}$ has a block-shifted structure that aligns query context positions with key successor positions:
{\small\begin{equation*}
\bm{W}_A^{(2)} =
\begin{pNiceMatrix}[first-row, first-col]
    & x_j & x_{j-1} & x_{j-2} & \cdots & x_{j-k} \\
    x_i & \rbox[gray!30]{$\bm{0}$} & \rbox[mlightblue!50]{$\beta_1 \bm{I}_{|V|}$} & \rbox[gray!30]{$\bm{0}$} & \cdots & \rbox[gray!30]{$\bm{0}$} \\
    x_{i-1} & \rbox[gray!30]{$\bm{0}$} & \rbox[gray!30]{$\bm{0}$} & \rbox[mlightblue!50]{$\beta_2 \bm{I}_{|V|}$} & \cdots & \rbox[gray!30]{$\bm{0}$} \\
    x_{i-2} & \rbox[gray!30]{$\bm{0}$} & \rbox[gray!30]{$\bm{0}$} & \rbox[gray!30]{$\bm{0}$} & \cdots & \rbox[gray!30]{$\bm{0}$} \\
    \vdots & \vdots & \ddots & \ddots & \ddots & \vdots \\
    x_{i-k+1} & \rbox[gray!30]{$\bm{0}$} & \cdots & \rbox[gray!30]{$\bm{0}$} & \cdots & \rbox[mlightblue!50]{$\beta_k \bm{I}_{|V|}$} \\
    x_{i-k} & \rbox[gray!30]{$\bm{0}$} & \cdots & \rbox[gray!30]{$\bm{0}$} & \cdots & \rbox[gray!30]{$\bm{0}$}
\end{pNiceMatrix}
\end{equation*}}

This structure ensures that the attention score computes $\sum_{r=1}^{k} \beta_r x_{i-r+1}^\top x_{j-r}$, comparing the query context $u_i$ with the key's predecessor context $u_{j-1}$.
\par\textbf{Computing attention.}
For a query at position \(T\), the layer-2 score decomposes as
\begin{equation*}
    e_{Tj}^{(2)}
    = \bigl(\bm{h}_T^{(1)}\bigr)^\top \bm{W}_A^{(2)} \bm{h}_j^{(1)}
      + r_{T-j}^{(2)}\,.
\end{equation*}
For valid candidate contexts \(j \ge k'\), the positional term is zero, so
\begin{equation*}
    e_{Tj}^{(2)}
    = \sum_{r=1}^{k} \beta_r \,\mathbb{I}\{x_{T-r+1} = x_{j-r}\}
    = |\bm{\beta}|_{M_j^{(T)}}\,.
\end{equation*}
For the early positions \(j < k'\), the RPE suppresses all scores by sending
\(\delta_1^{(2)} \to \infty\), except possibly the BOS position \(j=1\).
When a BOS token is present:
\begin{equation*}
    e_{T1}^{(2)}  = \sum_{r=1}^{k} \beta_r\, x_{T-r+1}^\top x_{\text{BOS}} + \delta_2^{(2)}\,,
\end{equation*}
where we set \(x_{\text{BOS}} = \tfrac{1}{|V|}\bm{1}_{|V|}\).
Since every component of \(x_{\text{BOS}}\) equals \(1/|V|\), the content-based score is query-independent:
\(\sum_{r=1}^k \beta_r\, x_{T-r+1}^\top x_{\text{BOS}} = |\bm{\beta}|_1 / |V|\).
Setting \(\delta_{2}^{(2)} = \kappa - |\bm{\beta}|_1 / |V|\) gives $e_{T1}^{(2)} = |\bm{\beta}|_1 / |V| + \delta_{2}^{(2)} = \kappa.$
Hence, for all \(j \in [T]\),
\begin{equation*}
    e_{Tj}^{(2)} =
    \begin{cases}
        \kappa & \text{if } j = 1 \text{ and with BOS token}\,,\\
        |\bm{\beta}|_{M_j^{(T)}} & \text{if } j \ge k'\,,\\
        -\infty & \text{otherwise}\,.
    \end{cases}
\end{equation*}

Applying softmax, we obtain
\begin{equation*}
    \mathcal{A}_{Tj}^{(2)} \propto \begin{cases} 
        \exp\left(\kappa\right) & \text{if } j = 1 \text{ and with BOS token}\,,\\
        \exp\left(|\bm{\beta}|_{M_{j}^{(T)}}\right) & \text{if } j \geq k'\,,\\
        0 & \text{otherwise}\,.
    \end{cases}
\end{equation*}
Grouping terms by their match masks:
\begin{equation*}
    \sum_{j=k'}^{T} \exp\bigl(|\bm{\beta}|_{M_j^{(T)}}\bigr) = \sum_{M \subseteq [k]} N_M^{(T)} \, e^{|\bm{\beta}|_M}\,.
\end{equation*}
The normalization constant is
\begin{equation*}
    Z = \sum_{M \subseteq [k]} N_M^{(T)} \, e^{|\bm{\beta}|_M} + e^{\kappa} \mathbb{I}\{e_{T1}^{(2)} > 0\}\,.
\end{equation*}

\textbf{Value aggregation.}
The value vectors are the layer-1 outputs $\bm{h}_j^{(1)}$. The output of the single layer-2 head at position $T$ is:
$
    \hat{\bm{h}}_{T}^{(2,1)} = \sum_{j=1}^{T} \mathcal{A}_{T,j}^{(2)} \, \bm{h}_j^{(1)} \in \mathbb{R}^{(k+1)|V|}.
$
Following the concatenation update rule, the full layer-2 residual stream is $\bm{h}_T^{(2)} = \Concat\bigl(\bm{h}_T^{(1)}, \hat{\bm{h}}_T^{(2,1)}\bigr) \in \mathbb{R}^{2(k+1)|V|}$.
Extracting the first block of the head output $\hat{\bm{h}}_T^{(2,1)}$, we obtain for each $m \in V$
\begin{equation*}
    \hat{\bm{h}}_T^{(2,1)}[m] = \frac{e^{\kappa} / |V| \; \mathbb{I}\{e_{T1}^{(2)} > 0\} + \displaystyle\sum_{M \subseteq [k]} e^{|\bm{\beta}|_M} \, N_M^{(T)}(m)}{e^{\kappa} \; \mathbb{I}\{e_{T1}^{(2)} > 0\} + \displaystyle\sum_{M \subseteq [k]} e^{|\bm{\beta}|_M} \, N_M^{(T)}}\,.
\end{equation*}

\subsubsection{Output Layer}
The readout acts on the full final stream $\bm{h}_T^{(2)} = \Concat\bigl(\bm{h}_T^{(1)}, \hat{\bm{h}}_T^{(2,1)}\bigr)$, of dimension $d_L = 2(k+1)|V|$, and selects block $0$ of the layer-2 head output. We define
\begin{equation*}
    \bm{W}_O =
    \Bigl(\;
        \overbrace{\bm{0}_{|V| \times (k+1)|V|}}^{\text{from }\bm{h}_T^{(1)}}
        \;\Big|\;
        \overbrace{\bigl[\;\rbox[mlightblue!50]{$\bm{I}_{|V|}$}\quad \bm{0}_{|V| \times k|V|}\;\bigr]}^{\text{from }\hat{\bm{h}}_T^{(2,1)}}
    \;\Bigr)
    \ \in\ \mathbb{R}^{|V| \times 2(k+1)|V|}\,,
\end{equation*}
where the only non-zero block (highlighted) is the identity acting on block $0$ of the layer-2 head output $\hat{\bm{h}}_T^{(2,1)}$.
Applying it to the final hidden state recovers the prediction:
\begin{equation*}
    \mathcal{T}(\bm{x})
    = \bm{W}_O\, \bm{h}_T^{(2)}
    = \bigl[\;\bm{0}\;\big|\;\rbox[mlightblue!50]{$\bm{I}_{|V|}$}\;\;\bm{0}\;\bigr]\,\bm{h}_T^{(2)}
    = \bigl(\hat{\bm{h}}_T^{(2,1)}\bigr)_{1:|V|}
    = \sum_{j=1}^{T} \mathcal{A}_{Tj}^{(2)}\, x_j\,,
\end{equation*}
whose $m$-th component is the estimator in \eqref{eq:transformer_estimator}, completing the proof. \qed

\section{Relation to Classical Estimators and Smoothing Techniques}
\label{sec:smoothing}

\Cref{prop:transformer_estimator} establishes that a disentangled two-layer transformer functions as a soft context-matching estimator, with parameters $\beta \in \mathbb{R}^{k}$ and $\kappa \in \mathbb{R}$: $\beta$ controls context-overlap weights, while $\kappa$ controls the BOS-induced pseudo-count contribution.
In this section, we first examine $\kappa$ and how to implement add-$\alpha$-type smoothing.
We then characterize the smoothing induced by finite $\beta$ by drawing parallels to classical techniques, such as Jelinek-Mercer smoothing.

\subsection{Add-$\alpha$-Type Smoothing}

To mitigate the zero-frequency problem inherent in maximum likelihood estimation over sparse data, \emph{additive} (or add-constant) smoothing is widely employed in the literature. This technique redistributes a small portion of probability mass to unseen events, ensuring strictly positive probabilities and preventing numerical instability during log-likelihood calculations. A prominent example is Laplace smoothing, which adds $+1$ to the count of each context.

\Cref{prop:transformer_estimator} can be instantiated to show that disentangled transformers implement such smoothing when a BOS token is present.
\begin{restatable}[Add-$\alpha$-Type Smoothing via BOS Token]{corollary}{addconstant}
    \label{prop:add_constant}
    Set \(\kappa = k\beta + \ln(\alpha |V|)\) and \(\bm{\beta} = \beta \bm{1}_{k}\) for \(\bm{\alpha} = \alpha \bm{1}\) with \(\alpha>0\) (symmetric prior).
    The transformer estimator in \Cref{eq:transformer_estimator} implements add-$\alpha$-type smoothing in the limit of \(\beta \to \infty\):
    \begin{equation}
        \label{eq:add_constant}
            \lim_{\beta \to \infty} \mathcal{T}(\bm{x})(m) = \frac{N_{u_T}^{(T)}(m) + \alpha}{N_{u_T}^{(T)} + \alpha|V|}\,.
    \end{equation}
\end{restatable}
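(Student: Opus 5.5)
The plan is to substitute the stated parameter choices directly into the closed form of \Cref{prop:transformer_estimator} and take the limit $\beta\to\infty$. The structural tool is the observation that, with $\bm{\beta}=\beta\bm{1}_k$, the weight of a mask is $e^{|\bm{\beta}|_M}=e^{\beta|M|}$. The only mask attaining the maximal exponent $k\beta$ is $M=[k]$, the exact-match mask.

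First, I will divide numerator and denominator of \Cref{eq:transformer_estimator} by $e^{k\beta}$. With $\kappa = k\beta+\ln(\alpha|V|)$ we have $e^{\kappa}/e^{k\beta}=\alpha|V|$. The pseudo-count term in the numerator therefore becomes $e^{\kappa}/(|V|e^{k\beta})=\alpha$, and the one in the denominator becomes $\alpha|V|$. This gives, for every finite $\beta$,
\begin{equation*}
\mathcal{T}(\bm{x})(m)=\frac{\alpha + N_{[k]}^{(T)}(m) + \sum_{M\subsetneq[k]} e^{-\beta(k-|M|)} N_M^{(T)}(m)}{\alpha|V| + N_{[k]}^{(T)} + \sum_{M\subsetneq[k]} e^{-\beta(k-|M|)} N_M^{(T)}}\,.
\end{equation*}

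Second, I will pass to the limit. Every proper subset $M\subsetneq[k]$ has $k-|M|\ge 1$, so each factor $e^{-\beta(k-|M|)}\to 0$. The counts $N_M^{(T)}(m)$ and $N_M^{(T)}$ are finite and independent of $\beta$, and the sum over masks is finite (at most $2^k$ terms), so these terms vanish. The denominator converges to $\alpha|V|+N_{[k]}^{(T)}\ge\alpha|V|>0$. Since it is bounded away from zero, the limit of the quotient is the quotient of the limits.

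Finally, I will identify $N_{[k]}^{(T)}(m)$ with $N_{u_T}^{(T)}(m)$, as noted after \Cref{def:match_mask}. This is the one point needing care, because the two definitions index positions differently. In the mask counts, $s$ ranges over $[k',T]$ and $M_s^{(T)}=[k]$ means $(x_{s-k},\dots,x_{s-1})=u_T$ with successor $x_s$. In \Cref{eq:transcount}, the condition is $u_{s-1}=u_T$ with $x_s=m$. These coincide once positions are shifted by the BOS token, so that $s\ge k'$ exactly indexes positions with a full preceding context.

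Summing the identification over $m$ gives $N_{[k]}^{(T)}=N_{u_T}^{(T)}$, which yields \Cref{eq:add_constant}. I expect no real obstacle beyond this index bookkeeping; the limit itself is elementary because the remaining weights decay exponentially while the counts are fixed.
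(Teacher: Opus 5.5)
Your proposal is correct and follows essentially the same route as the paper: substitute $\bm{\beta}=\beta\bm{1}_k$ and $\kappa=k\beta+\ln(\alpha|V|)$, and divide by $e^{k\beta}$ so the BOS term contributes exactly $\alpha$ and $\alpha|V|$. Then let the proper-subset masks vanish and identify $N_{[k]}^{(T)}(m)$ with $N_{u_T}^{(T)}(m)$; your explicit check that the limiting denominator is at least $\alpha|V|>0$ and your BOS index bookkeeping are slightly more careful than the paper's $o(e^{k\beta})$ phrasing, but the argument is the same.
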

The proof is given in \Cref{app:interpolation}.
Crucially, this add-$\alpha$ smoothing is enabled by the BOS token acting as a fixed, sequence-independent sink: its contribution to the prediction is the same for every input, independent of the context.
\subsection{Sub-$n$-gram Interpolation Smoothing}
\label{sec:interpretation}

The estimator in \Cref{eq:transformer_estimator} admits a representation that parallels classical \emph{interpolation smoothing} techniques~\citep{jelinek1980interpolated,chen1999empirical}.
In Jelinek--Mercer (JM) smoothing, the next-token distribution is a \emph{fixed} convex combination of the maximum-likelihood $n$-gram models of every order $0 \le i \le k$,
\begin{equation}
    \label{eq:jelinek_mercer}
    P_{\mathrm{JM}}(m \mid u_T) = \sum_{i=0}^{k} \lambda_i \, \hat{p}_i(m \mid u_T^{(i)})\,,
    \qquad \sum_{i=0}^{k} \lambda_i = 1\,,
\end{equation}
where $u_T^{(i)} \coloneqq (x_{T-i+1},\dots,x_T)$ is the contiguous length-$i$ suffix of the query context $u_T$ (with $u_T^{(0)} = ()$ the empty context, giving the unigram), $\hat{p}_i(m \mid u_T^{(i)})$ is the corresponding empirical transition probability, and the mixing weights $\lambda_i$ are set globally or tuned on held-out data.
The transformer implements an analogous scheme, but interpolates over \emph{cumulative counts} rather than contiguous suffix orders, as we explain below.
\begin{definition}[Cumulative Counts]
    \label{def:occurrences}
    For a fixed query position $t$ and any $M \subseteq [k]$, define \emph{cumulative} counts \(K_M^{(t)}\) and \emph{cumulative transition} counts \(K_M^{(t)}(m)\):
    \begin{equation*}
        \begin{split}
            K_M^{(t)} &\coloneqq \#\bigl\{ s \in [k', t] : M_s^{(t)} \supseteq M \bigr\}\,, \\
            K_{M}^{(t)}(m) &\coloneqq \#\bigl\{ s \in [k', t] : M_s^{(t)} \supseteq M,\ x_s = m \bigr\}\,.
        \end{split}
    \end{equation*}
\end{definition}
Recall that \(N_{M}^{(t)}\) counts a position \(s\) \emph{if and only if} the indices in \(M\) are the only matching positions, i.e., \(M_s^{(t)} = M\).
By contrast, \(K_{M}^{(t)}\) counts the number of times the indices in \(M\) are matching, without any conditions on the indices in \([k] \setminus M\).
Crucially, the \(K\) counts are \emph{nested}, any position counted in \(K_{M'}^{(t)}\) is also counted in \(K_M^{(t)}\) for every \(M \subseteq M'\), just as every \(n\)-gram occurrence is also an occurrence of all its shorter suffixes, whereas the exact counts \(N_M^{(t)}\) \emph{partition} the positions by their precise match pattern.
Interpolation mixes nested sub-models, so it is the cumulative counts \(K_M^{(t)}\), not \(N_M^{(t)}\), that recover the classical \(n\)-gram hierarchy below.
The two families are related by
\begin{equation*}
    K_{M}^{(t)}(m) = \sum_{[k] \supseteq S \supseteq M} N_{S}^{(t)}(m)\,,
\end{equation*}
each cumulative count aggregating the exact-mask counts over all finer patterns. Using this regrouping, we rewrite our estimator in terms of the cumulative counts:
\begin{restatable}[Sub-\(n\)-gram interpolation]{lemma}{subgrams}
    \label{lemma:transformer_estimator}
    The estimator in \Cref{eq:transformer_estimator} can be equivalently rewritten with parameter \(\gamma = e^{\beta} - 1\) when \(\kappa = -\infty, \bm{\beta} = \left(\beta, \ldots, \beta\right)\):
    \protect\footnote{Under the convention \(0^{0} \coloneqq 1\), \Cref{eq:transformer_estimator_2} is a polynomial identity in \(\gamma\) and remains valid at \(\beta = 0\) (\(\gamma = 0\)), where it recovers the unigram estimator \(K_{\emptyset}^{(T)}(m) / K_{\emptyset}^{(T)}\).}
    \begin{equation}
        \label{eq:transformer_estimator_2}
            \gT(\bm{x})(m) = \frac{\displaystyle\sum_{M \subseteq [k]} \gamma^{|M|} \, K_M^{(T)}(m)}{\displaystyle\sum_{M \subseteq [k]} \gamma^{|M|} \, K_M^{(T)}}\,.
    \end{equation}
\end{restatable}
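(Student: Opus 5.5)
The plan is a direct algebraic regrouping, based on the binomial identity $e^{\beta|S|} = (1+\gamma)^{|S|} = \sum_{M \subseteq S} \gamma^{|M|}$. Setting $\kappa = -\infty$ and $\bm{\beta} = \beta \bm{1}_k$ in \Cref{eq:transformer_estimator}, the BOS terms disappear and $|\bm{\beta}|_S = \beta |S|$. The estimator therefore becomes
\begin{equation*}
    \gT(\bm{x})(m) = \frac{\sum_{S \subseteq [k]} e^{\beta|S|} N_S^{(T)}(m)}{\sum_{S \subseteq [k]} e^{\beta|S|} N_S^{(T)}}\,.
\end{equation*}
I will treat the numerator and denominator separately; the same argument applies to both after replacing $N_S^{(T)}(m)$ by $N_S^{(T)}$.

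\textbf{Step 1: expand the weights.} With $\gamma = e^{\beta}-1$, the binomial theorem gives
\begin{equation*}
    e^{\beta|S|} = (1+\gamma)^{|S|} = \sum_{j=0}^{|S|} \binom{|S|}{j}\gamma^j = \sum_{M \subseteq S} \gamma^{|M|}\,.
\end{equation*}
The last equality holds because exactly $\binom{|S|}{j}$ subsets of $S$ have cardinality $j$.

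\textbf{Step 2: swap the sums.} Substituting Step 1 into the numerator gives the double sum $\sum_{S \subseteq [k]} \sum_{M \subseteq S} \gamma^{|M|} N_S^{(T)}(m)$. Exchanging the order of summation over the pairs $M \subseteq S \subseteq [k]$, this equals
\begin{equation*}
    \sum_{M \subseteq [k]} \gamma^{|M|} \sum_{[k] \supseteq S \supseteq M} N_S^{(T)}(m) = \sum_{M \subseteq [k]} \gamma^{|M|} K_M^{(T)}(m)\,,
\end{equation*}
by the stated relation between $K$ and $N$.

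\textbf{Step 3: justify the relation between $K$ and $N$.} If that relation needs its own proof, it follows because $\{s \in [k',T] : M_s^{(T)} \supseteq M\}$ is the disjoint union over $S \supseteq M$ of the sets $\{s : M_s^{(T)} = S\}$, since each position has exactly one mask. The same decomposition handles the denominator, yielding \Cref{eq:transformer_estimator_2}.

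\textbf{Obstacle: the boundary case $\beta = 0$.} Here $\gamma = 0$, and the binomial expansion involves $0^0$. Under the convention $0^0 = 1$, the identity $(1+\gamma)^{|S|} = \sum_{M \subseteq S}\gamma^{|M|}$ is a polynomial identity in $\gamma$, so it remains valid at $\gamma = 0$. Only the $M = \emptyset$ term survives, giving $K_\emptyset^{(T)}(m)/K_\emptyset^{(T)}$, which matches the footnote. Apart from this, the only requirement is that the denominator is nonzero, which holds whenever $T \ge k'$.
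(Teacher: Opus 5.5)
Your proof is correct, but it runs the computation in the opposite direction from the paper's.

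The paper starts from the inverted relation $N_M^{(T)}(m) = \sum_{[k]\supseteq S\supseteq M}(-1)^{|S|-|M|}K_S^{(T)}(m)$, obtained by inclusion--exclusion. It substitutes this into $\sum_{M} e^{\beta|M|}N_M^{(T)}(m)$ and swaps the sums. It then collapses the alternating coefficient $\sum_{i=0}^{|S|}(-1)^{|S|-i}\binom{|S|}{i}e^{\beta i}$ to $(e^{\beta}-1)^{|S|}$ by the binomial theorem.

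You instead expand the weight as $e^{\beta|S|}=(1+\gamma)^{|S|}=\sum_{M\subseteq S}\gamma^{|M|}$. This means you need only the forward relation $K_M^{(T)}(m)=\sum_{S\supseteq M}N_S^{(T)}(m)$. You justify that relation directly from the fact that each position has exactly one mask, whereas the paper only asserts it.

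The two arguments are dual computations on the Boolean lattice of masks: summing over supersets versus inverting that sum. Yours is slightly more elementary, since it never needs inclusion--exclusion or alternating signs. The paper's makes the change of basis explicit: it computes the coefficients $(e^{\beta}-1)^{|S|}$ of the nested counts $K$ as the M\"obius transform of the weights $e^{\beta|M|}$ on the partition counts $N$.

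Both arguments carry over unchanged to non-uniform $\bm{\beta}$, with $\gamma^{|M|}$ replaced by $\prod_{i\in M}(e^{\beta_i}-1)$. Your handling of $\gamma=0$ and of the nonvanishing denominator is also fine.
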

This leads to the following hierarchical interpretation:
\begin{restatable}[Mixtures of \(n\)-grams]{corollary}{mixture}
    \label{corollary:mixture}
    The estimator in \Cref{eq:transformer_estimator} admits the following mixture interpretation:
    \begin{itemize}[topsep=1pt,itemsep=1pt,parsep=0pt,partopsep=0pt]
        \item Chooses an order \(i \in [0,k]\) with weights \(\{\lambda_i^{(T)}\}\)\,,
        \item Chooses a size-\(i\) pattern \(M\) (\(|M| = i\)) with weights \(\{\lambda_M^{(T)}\}\)\,,
        \item Predicts with the pattern model \(\hat{q}_M^{(T)}(m) = \dfrac{K_M^{(T)}(m)}{K_M^{(T)}}\)\,. 
    \end{itemize}
\end{restatable}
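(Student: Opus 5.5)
We start from \Cref{lemma:transformer_estimator}, taking $\kappa=-\infty$ and $\bm\beta=\beta\bm 1$. With $\gamma=e^\beta-1\ge 0$, the estimator is
\[
\gT(\bm x)(m)=\frac{\sum_{M\subseteq[k]}\gamma^{|M|}K_M^{(T)}(m)}{\sum_{M\subseteq[k]}\gamma^{|M|}K_M^{(T)}}.
\]
The corollary then follows by grouping the masks by cardinality and dividing by the denominator. For any $M$ with $K_M^{(T)}>0$, write $K_M^{(T)}(m)=K_M^{(T)}\,\hat q_M^{(T)}(m)$. If $K_M^{(T)}=0$, then $K_M^{(T)}(m)=0$ for all $m$, so such masks drop out and we may leave $\hat q_M^{(T)}$ undefined for them. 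Let $Z^{(T)}=\sum_{M}\gamma^{|M|}K_M^{(T)}$. We then define
\[
\lambda_i^{(T)}\coloneqq\frac{\gamma^{i}\sum_{|M|=i}K_M^{(T)}}{Z^{(T)}},\qquad
\lambda_M^{(T)}\coloneqq\frac{K_M^{(T)}}{\sum_{|M'|=|M|}K_{M'}^{(T)}}.
\]
If an order $i$ has total count zero, its $\lambda_i^{(T)}$ vanishes and the $\lambda_M^{(T)}$ at that level can be set arbitrarily.

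The key steps are as follows. \emph{First}, rewrite the numerator as
\[
\sum_{i=0}^k\gamma^i\sum_{|M|=i}K_M^{(T)}\hat q_M^{(T)}(m).
\]
Dividing by $Z^{(T)}$ turns this into $\sum_i\lambda_i^{(T)}\sum_{|M|=i}\lambda_M^{(T)}\hat q_M^{(T)}(m)$, by direct substitution. \emph{Second}, verify that these are genuine probability weights. Both families are nonnegative because $\gamma\ge0$ and counts are nonnegative. We have $\sum_i\lambda_i^{(T)}=1$ by the definition of $Z^{(T)}$, and $\sum_{|M|=i}\lambda_M^{(T)}=1$ at each level. Each $\hat q_M^{(T)}$ lies in the simplex because $\sum_m K_M^{(T)}(m)=K_M^{(T)}$; this holds since each counted position $s$ has exactly one successor token $x_s$. \emph{Third}, note that $Z^{(T)}>0$ whenever $T\ge k'$. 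The reason is that $M=\emptyset$ always has $K_\emptyset^{(T)}=T-k'+1>0$ and coefficient $\gamma^0=1$, so the mixture is well defined. This also shows that the order-$0$ component, the unigram $\hat q_\emptyset^{(T)}$, is always present.

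The only delicate point is the bookkeeping of degenerate cases: zero counts at some level, and $\gamma=0$, which corresponds to $\beta=0$. At $\gamma=0$ the footnote convention $0^0=1$ gives $\lambda_0^{(T)}=1$, recovering the unigram estimator. It is also worth remarking that the weights $\lambda_i^{(T)}$ and $\lambda_M^{(T)}$ are data-dependent, since they are proportional to the counts. This is exactly what distinguishes the estimator from the fixed Jelinek--Mercer weights in \eqref{eq:jelinek_mercer}. Finally, restricting to contiguous suffix masks $M=[i]$ recovers the classical $n$-gram components $\hat p_i$.
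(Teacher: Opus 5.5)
Your proof is correct and follows the paper's own argument. Like the paper, you group the masks in the $\gamma$-form of \Cref{lemma:transformer_estimator} by cardinality and set $\lambda_i^{(T)}=\gamma^iK_i^{(T)}/\sum_j\gamma^jK_j^{(T)}$ and $\lambda_M^{(T)}=K_M^{(T)}/K_i^{(T)}$; your checks of normalization, of $Z^{(T)}>0$, and of the zero-count and $\gamma=0$ cases are a careful addition the paper omits.

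One caveat: $\gamma=e^\beta-1\ge 0$ is not automatic, since $\beta\in\mathbb{R}$ is allowed. For $\beta<0$ the odd-order weights $\lambda_i^{(T)}$ are negative whenever $K_i^{(T)}>0$, so the decomposition is only an affine combination (although $Z^{(T)}=\sum_M e^{\beta|M|}N_M^{(T)}>0$ still holds). You should therefore state $\beta\ge 0$ as a hypothesis for the nonnegativity claim.
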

The proofs of \Cref{lemma:transformer_estimator,corollary:mixture} are deferred to \Cref{app:interpolation}.
Collecting the subsets in \Cref{eq:transformer_estimator_2} by their size $|M| = i$ casts the transformer in the very same mixture form as \Cref{eq:jelinek_mercer}:
\begin{equation}
    \label{eq:transformer_mixture}
    \begin{gathered}
    \gT(\bm{x})(m) = \sum_{i=0}^{k} \lambda_i^{(T)} \, \hat{q}_i^{(T)}(m)\,, \\[3pt]
    \lambda_i^{(T)} = \frac{\gamma^{i} K_i^{(T)}}{\sum_{j=0}^{k} \gamma^{j} K_j^{(T)}}\,,
    \quad
    \hat{q}_i^{(T)}(m) = \frac{\sum_{|M|=i} K_M^{(T)}(m)}{K_i^{(T)}}\,,
    \end{gathered}
\end{equation}
where $K_i^{(T)} = \sum_{|M|=i} K_M^{(T)}$ and $\gamma = e^{\beta}-1$ as in \Cref{lemma:transformer_estimator}. These order weights $\lambda_i^{(T)}$ and order-$i$ models $\hat{q}_i^{(T)}$ are exactly the quantities of \Cref{corollary:mixture}, which further splits $\hat{q}_i^{(T)} = \sum_{|M|=i} \lambda_M^{(T)} \hat{q}_M^{(T)}$ over the size-$i$ patterns. The parallel with JM is now explicit, and the two estimators differ in only two respects.
\emph{(i) Weights.} JM fixes $\lambda_i$ globally, whereas the transformer sets $\lambda_i^{(T)}$ \emph{per sequence} from the scale factor $\gamma$ and the realized counts $K_i^{(T)}$: for $\beta>0$, the mixture concentrates on the order-$k$ component when order-$k$ matches are present in the context, and shifts to lower orders as such matches become rare.
\emph{(ii) Per-order model.} JM's order-$i$ term is the \emph{contiguous}-suffix MLE $\hat{p}_i(m\mid u_T^{(i)}) = K_{[i]}^{(T)}(m)/K_{[i]}^{(T)}$ with $[i]=\{1,\dots,i\}$, while the transformer's $\hat{q}_i^{(T)}$ averages over \emph{all} $\binom{k}{i}$ subsets of size $i$, including non-contiguous matches.
For $k=1$ both collapse to the same two-component bigram--unigram interpolation, with instance-specific weights in the transformer.

\textbf{Data-dependent smoothing.}
For finite $\beta$, partial matches act as \emph{structured pseudo-counts}.
When the exact context $u_t$ has been rarely observed, the estimator smooths using similar contexts, where similarity is measured via an exponentiated Hamming overlap.
By separating exact matches from partial matches, we obtain:
\begin{equation}
    \label{eq:transformer_bayes_form}
    \mathcal{T}(\bm{x})(m) = \frac{\gamma^k K_{[k]}^{(T)}(m) + \displaystyle\sum_{M \subsetneq [k]} \gamma^{|M|} K_M^{(T)}(m)}{\gamma^k K_{[k]}^{(T)} + \displaystyle\sum_{M \subsetneq [k]} \gamma^{|M|} K_M^{(T)}}.
\end{equation}
Since $K_{[k]}^{(T)}(m) = K_{u_t}(m)$, we can factor out $\gamma^k$ and define the \emph{data-dependent pseudo-count}:
\begin{equation}
\label{eq:pseudo_count}
    \tilde{\alpha}_m^{(T)}(\gamma) := \gamma^{-k} \sum_{M \subsetneq [k]} \gamma^{|M|} K_M^{(T)}(m).
\end{equation}

Finally, \Cref{app:interpolation} gives an approximate value of \(\beta\) that implements add-constant smoothing.
\begin{lemma}[\(\beta\)-value for add-constant smoothing]
    \label{lemma:optimal_beta}
    The identity \(\E\left[\tilde{\alpha}_m^{(T)}(\gamma)\right] = \alpha\) is approximately satisfied by
    \begin{equation*}
        \beta \approx \ln\left(1 + \frac{|V|}{{\left(1 + \frac{\alpha |V|^{k+1}}{T-k-1}\right)}^{1/k}-1}\right)\,.
    \end{equation*}
\end{lemma}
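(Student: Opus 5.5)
The plan is to compute $\E\bigl[\tilde{\alpha}_m^{(T)}(\gamma)\bigr]$ in closed form under a mean-field approximation, set it equal to $\alpha$, and solve for $\gamma$ and then $\beta=\ln(1+\gamma)$. Since \eqref{eq:pseudo_count} is linear in the cumulative counts, $\E\bigl[\tilde{\alpha}_m^{(T)}(\gamma)\bigr]=\gamma^{-k}\sum_{M\subsetneq[k]}\gamma^{|M|}\,\E\bigl[K_M^{(T)}(m)\bigr]$, so it suffices to approximate each $\E[K_M^{(T)}(m)]$.

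\textbf{Approximating the cumulative counts.} By \Cref{def:occurrences}, $K_M^{(T)}(m)=\sum_{s}\mathbb{I}\{M_s^{(T)}\supseteq M,\ x_s=m\}$. I will adopt the following approximation: tokens behave as approximately uniform and independent across positions. This is justified on average over the symmetric prior, since the uniform initialization and exchangeable Dirichlet rows make marginals uniform. I also ignore the correlations coming from overlapping windows and from the query context itself. Under this approximation, a candidate $s$ matches the query on the $|M|$ prescribed lags with probability $|V|^{-|M|}$, and independently $x_s=m$ with probability $1/|V|$. Writing $n$ for the number of effective candidate positions, I take $n=T-k-1$, which is the number of positions in $[k',T]$ up to discarding the boundary term, e.g.\ the query position itself. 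This gives
\[
\E\bigl[K_M^{(T)}(m)\bigr]\approx n\,|V|^{-|M|-1}.
\]

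\textbf{Summing by the binomial theorem.} Grouping the proper subsets by size $i=|M|<k$, I get
\begin{equation*}
\E\bigl[\tilde{\alpha}_m^{(T)}\bigr]\approx\frac{n}{|V|}\,\gamma^{-k}\sum_{i=0}^{k-1}\binom{k}{i}\Bigl(\frac{\gamma}{|V|}\Bigr)^{i}
=\frac{n}{|V|}\,\gamma^{-k}\Bigl[\Bigl(1+\frac{\gamma}{|V|}\Bigr)^{k}-\Bigl(\frac{\gamma}{|V|}\Bigr)^{k}\Bigr]
=\frac{n}{|V|^{k+1}}\Bigl[\Bigl(1+\frac{|V|}{\gamma}\Bigr)^{k}-1\Bigr].
\end{equation*}

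\textbf{Solving for $\beta$.} Setting this expression equal to $\alpha$ gives $(1+|V|/\gamma)^k=1+\alpha|V|^{k+1}/n$. Taking $k$-th roots yields $\gamma=|V|/\bigl((1+\alpha|V|^{k+1}/n)^{1/k}-1\bigr)$. Since $\beta=\ln(1+\gamma)$, this is exactly the stated formula once $n=T-k-1$ is substituted.

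\textbf{Main obstacle.} The algebra is routine; the step that needs care is the independence and uniformity approximation for $\E[K_M^{(T)}(m)]$. Contexts at nearby positions overlap, and the query context is itself random, so the exact expectation carries corrections that depend on the prior and on $k$. I would state the approximation explicitly as the source of the ``$\approx$'', and argue that these corrections are lower order in $T$ while the uniform marginals hold exactly in expectation under the symmetric prior. The precise count of candidate positions is a further $O(1)$ effect that only perturbs $n$.
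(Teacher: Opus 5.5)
Your derivation is correct and follows the paper's computation. It uses the same uniform, independent-context approximation $\E[K_M^{(T)}]\approx (T-k-1)|V|^{-|M|}$, a binomial sum, and the solution of $(1+|V|/\gamma)^k=1+\alpha|V|^{k+1}/(T-k-1)$. The paper differs only in working with the $m$-summed quantity $D^{(T)}$ and the normalizer $Z(\gamma)$, moving $\E[K_{[k]}^{(T)}]$ to the right-hand side rather than dropping the top term. It also first motivates the target $\E[\tilde{\alpha}_m^{(T)}]=\alpha$ by a heuristic first-order-optimality argument, which the lemma as stated does not require.

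One caution about your obstacle paragraph: for $k\ge 2$, the error of that approximation is not lower order in $T$. Label symmetry makes single-token marginals uniform, but not coincidence probabilities. For well-separated positions far from the start, $\Pr(x_a=x_b)\approx\E_{\bm{\pi}}\bigl[\sum_{v}\mu_{\bm{\pi}}(v)^2\bigr]>1/|V|$, where $\mu_{\bm{\pi}}$ is the sampled chain's stationary token marginal. This gives a $T$-independent relative correction already for singleton masks $M=\{r\}$. So you should present it as a modeling assumption, as the paper does when it calls it coarse, rather than promise an asymptotic argument.
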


\paragraph{Relaxation of Katz Back-off}
Back-off is itself a smoothing technique, but one built on a \emph{precise, hard rule} rather than a mixture. Like JM, Katz back-off~\citep{katz1987estimation} is built on the same contiguous-suffix models $\hat{p}_i$: it keeps the highest-order estimate whenever its count is positive and \emph{recursively} falls back to the shorter suffix otherwise. Back-off therefore \emph{selects} a single order per prediction, the most specific one with support, whereas Jelinek--Mercer and the transformer always \emph{mix} all orders.

The transformer estimator \Cref{eq:transformer_mixture} unifies the two: it is a mixture like JM, yet it performs back-off automatically. When the full context $u_T$ is unseen, $K_{[k]}^{(T)}=0$ annihilates the top-order term and the lower orders take over, so the model ``backs off'' with no explicit rule.
The attention-weight scale then controls the transition from Katz's hard selection rule in the $\beta\to\infty$ limit to smooth, differentiable back-off at finite $\beta$.
Unlike JM and Katz back-off, it also incorporates non-contiguous matches.

\section{Experiments}
\label{sec:experiments}
\begin{figure}[!t]
    \centering
    \includegraphics[width=0.36\textwidth]{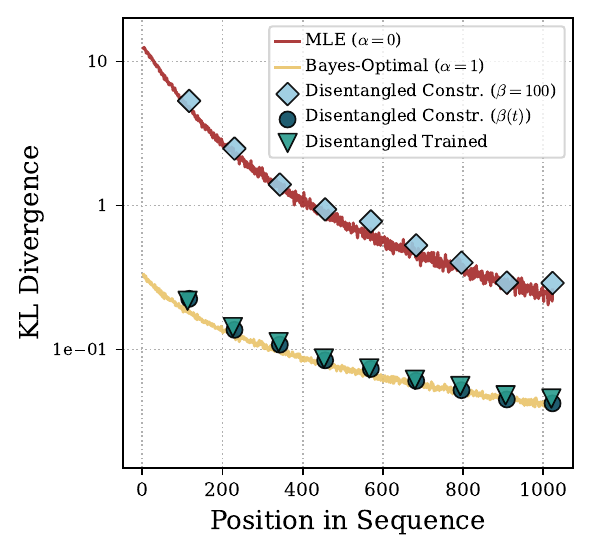}\hfill
    \includegraphics[width=0.30\textwidth]{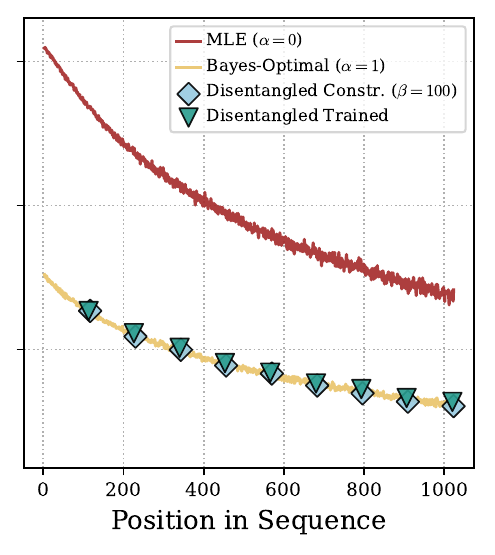}\hfill
    \includegraphics[width=0.30\textwidth]{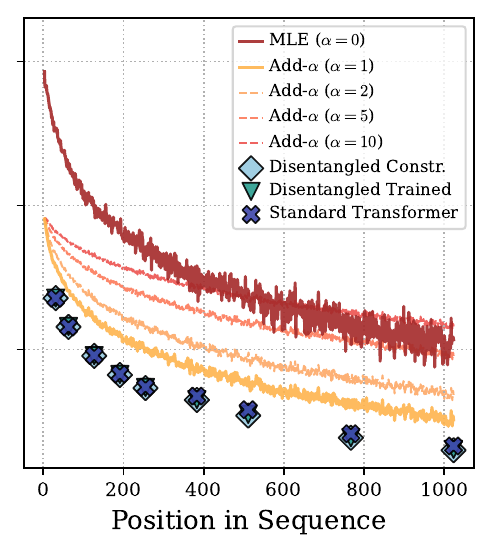}
    \caption{\textbf{KL divergence to the ground-truth transition distribution.} \textbf{Left} and \textbf{middle}: \emph{independent} Dirichlet prior, where add-$\alpha$ smoothing is Bayes-optimal. \textbf{Left} (no BOS): with large $\beta=100$ (blue diamonds) attention collapses to hard matching and tracks the MLE estimator (red); with adaptive $\beta(t)$ (green circles, \Cref{lemma:optimal_beta}) the transformer interpolates across context orders and approaches the Bayes-optimal estimator (yellow). \textbf{Middle} (BOS): even at large $\beta$ the BOS construction (blue) achieves near-Bayes performance via pseudo-count smoothing. In both panels, trained transformers (teal triangles) recover the smoothing behavior. \textbf{Right}: \emph{hierarchical} Dirichlet prior ($\eta_1=\eta_2=5.0$), where add-$\alpha$ smoothing is suboptimal. All models match each other and outperform add-$\alpha$ baselines.}
    \label{fig:kl_vs_position}
\end{figure}
We validate our theoretical findings by comparing the KL divergence between the transformer's predictions and the ground-truth transition distribution on sequences generated from order-$k$ Markov chains, and then check that trained transformers also \emph{implement} the predicted mechanism.
\par\textbf{Experimental setup.}
We study order-$k$ Markov chains with $k=2$, and vocabulary size $|V|=5$.
Transition matrices are drawn from either an \emph{independent} Dirichlet prior with symmetric concentration $\alpha=1$, or a \emph{hierarchical} Dirichlet prior with parameters $\eta_1=\eta_2=5.0$.
We compare three model families. \textbf{Disentangled Construction:} the disentangled two-layer transformer of \Cref{sec:preliminaries} with all weights set to the analytic values predicted by \Cref{prop:transformer_estimator}; only the attention-weight parameter vector $\bm{\beta}$ remains trainable or is set according to \Cref{lemma:optimal_beta}. \textbf{Disentangled Trained:} the same disentangled architecture but with all parameters optimized from initialization. \textbf{Standard Transformer:} a fully standard two-layer transformer with $d_\text{model}=64$, additive residual streams, LayerNorm, an MLP block, learned relative positional biases, and learned token embeddings.
All models are trained for $10^6$ iterations with Adam (lr $10^{-3}$, batch size $32$) on the last-token cross-entropy loss. The BOS embedding is fixed to the neutral vector used in the construction; all other parameters remain trainable. The hierarchical task is run at different sequence lengths.
\begin{figure}[!t]
    \centering
    \begin{minipage}[c]{0.55\textwidth}
        \centering
        \includegraphics[width=\textwidth]{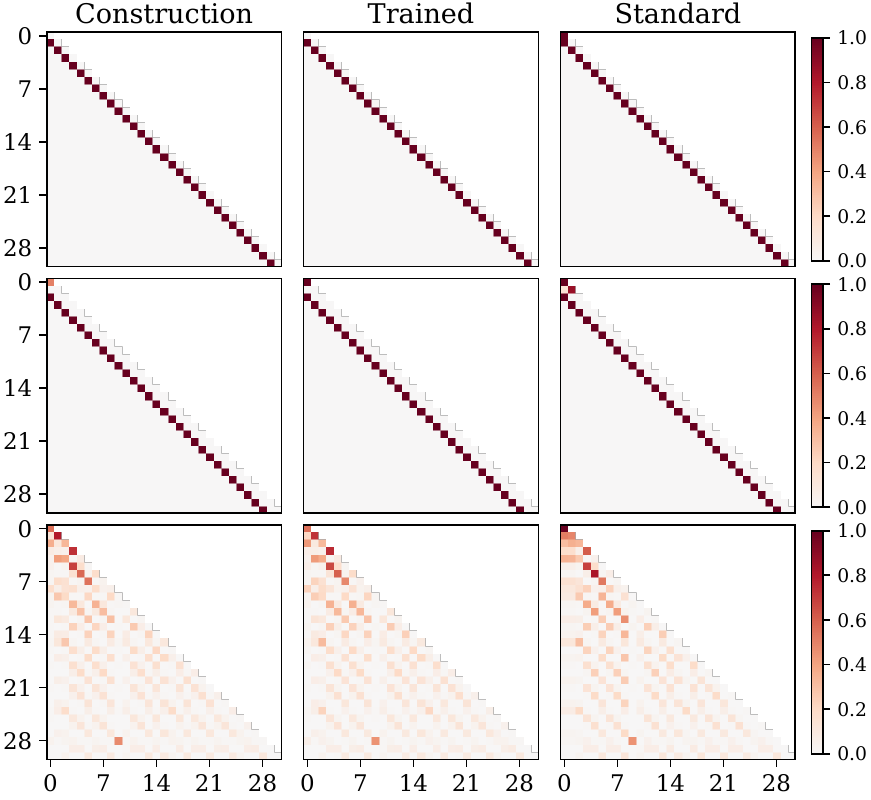}
    \end{minipage}\hfill
    \begin{minipage}[c]{0.43\textwidth}
        \centering
        \includegraphics[width=\textwidth]{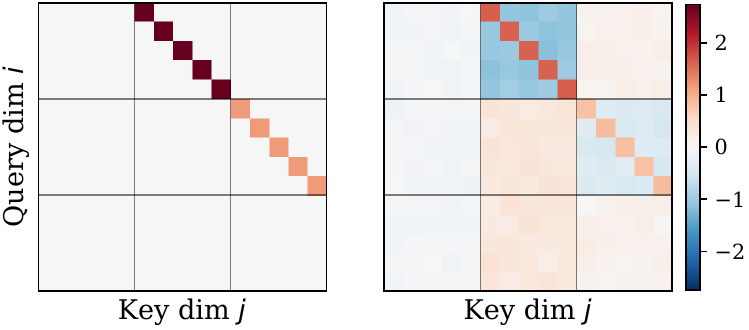}\\[0.4em]
        \includegraphics[width=\textwidth]{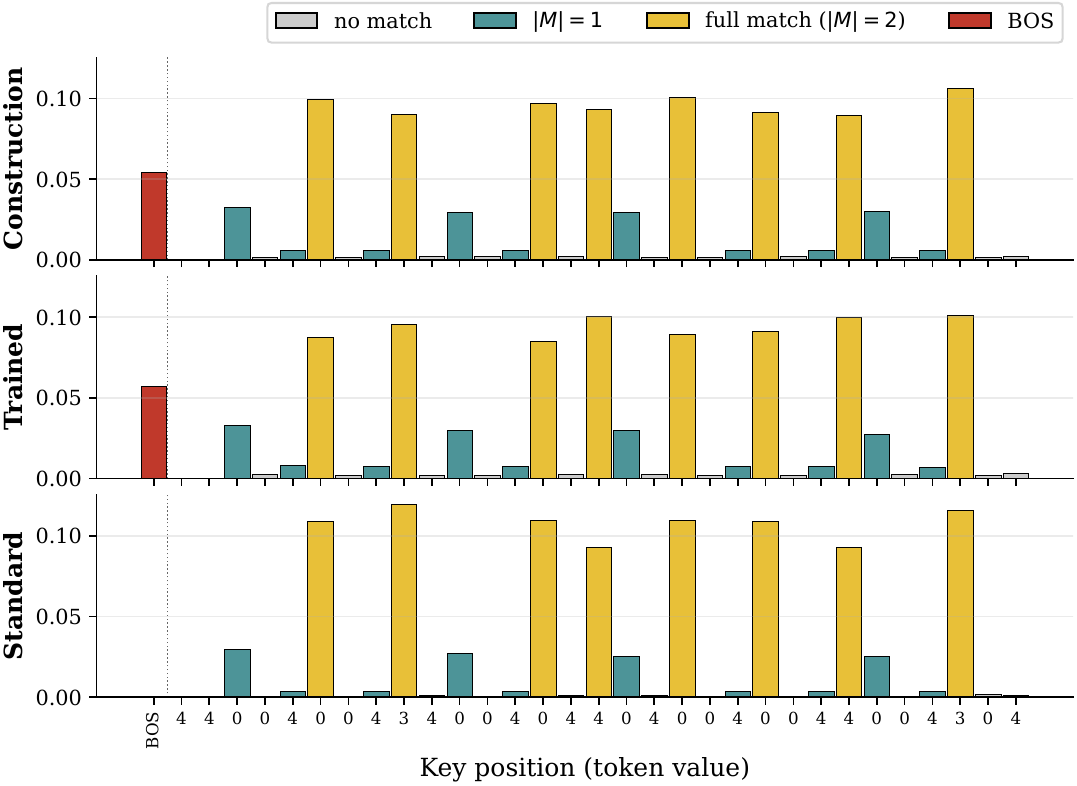}
    \end{minipage}
    \caption{\textbf{Mechanistic evidence on the hierarchical Dirichlet task;} models: Disentangled Construction, Disentangled Trained, Standard Transformer.
    \textbf{Left:} attention scores; rows are Layer~1 head~1 (lag-1 copy), head~2 (lag-2 copy), and Layer~2 (matching).
    \textbf{Top right:} Layer-2 weight matrix $\bm{W}_A^{(2)}$ for the two disentangled models.
    \textbf{Bottom right:} Layer-2 attention weights at the last query position, coloured by overlap: full ($|M|{=}2$, yellow), partial ($|M|{=}1$, teal), none ($|M|{=}0$, grey).
    All three implement the same two-stage circuit: sharp lag-specific copies in Layer~1 and context matching in Layer~2 that spreads mass over full and partial matches even with BOS.}
    \label{fig:mech_main}
\end{figure}
\par\textbf{Independent Dirichlet, no BOS.}
The left panel of \Cref{fig:kl_vs_position} shows the effect of $\beta$ on the estimator's behavior. At large $\beta = 100$ (blue diamonds) attention collapses to hard selection and the model implements the MLE counting estimator (red), which exhibits high KL at short sequence lengths due to sparse counts. Setting $\beta(t)$ adaptively as in \Cref{lemma:optimal_beta} (green circles) produces soft attention over partial context matches, the interpolation mechanism of \Cref{corollary:mixture}, and the KL approaches that of the Bayes-optimal estimator (yellow). Trained transformers (teal triangles) discover the same behavior. Weight visualizations are in \Cref{app:nobos_visualizations} (construction) and \Cref{app:nobos_trained_visualizations} (trained).
\par\textbf{Independent Dirichlet, with BOS.}
As predicted by \Cref{prop:add_constant}, prepending a BOS token (middle panel of \Cref{fig:kl_vs_position}) yields near-Bayes performance \emph{even at large $\beta$} (blue diamonds) regularizing the estimator without requiring soft attention. Trained BOS transformers (teal triangles) reproduce this near-Bayes behavior across all positions. Weights are in \Cref{app:bos_visualizations} (construction) and \Cref{app:bos_trained_visualizations} (trained).
\par\textbf{Hierarchical Dirichlet.}
The hierarchical setting is the natural testbed for our interpolation mechanism: partial matches are informative about the longer context as explained in \Cref{sec:in-context}. \Cref{fig:kl_vs_position} shows all three model families track each other closely and substantially outperform every fixed add-$\alpha$ baseline (dashed). With only two trainable scalars $\beta_1,\beta_2$, the minimal construction already matches the fully trained transformers, supporting soft context matching as the key mechanism. Additional results are in \Cref{app:hierarchical_experiments}.
\begin{figure}[t]
    \centering
    \includegraphics[width=0.6\linewidth]{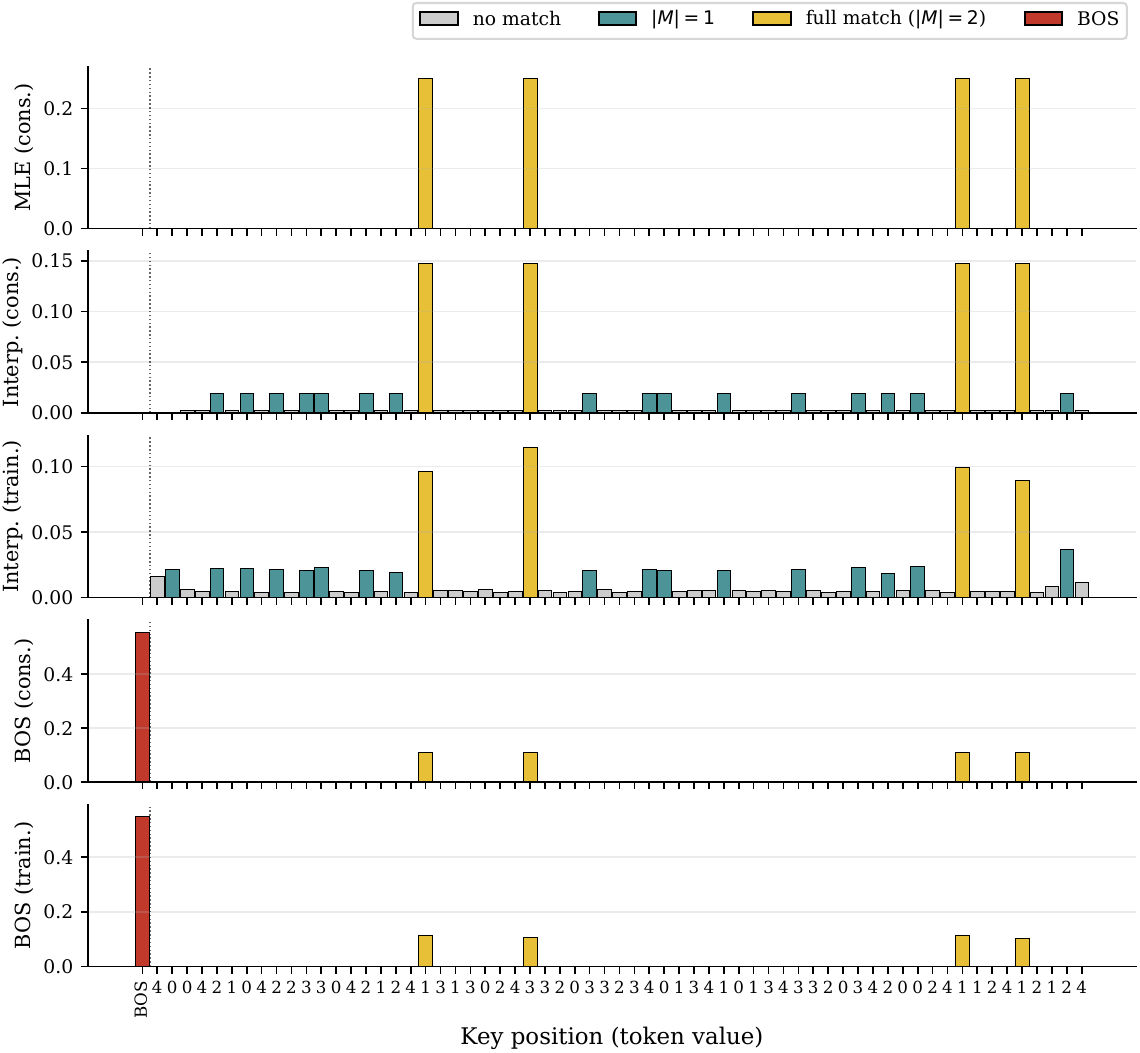}
    \caption{Layer-2 induction-head attention from the last query position on the independent Dirichlet task ($T{=}64$), for the three predicted estimators: MLE (hard $\beta$, construction); interpolation (adaptive $\beta$; construction and trained); and BOS (hard $\beta$; construction and trained). Each bar is a key position, coloured by query--key context overlap. Hard $\beta$ attends only to exact matches; finite $\beta$ spreads mass over partial matches (interpolation); BOS routes a fixed share to the BOS token. The trained transformers reproduce the construction's patterns.}
    \label{fig:mech_indep}
\end{figure}
\par\textbf{Direct mechanistic evidence.} KL divergence results show that trained transformers \emph{behave} like our construction; here we show they also \emph{implement} it. \Cref{fig:mech_indep} makes this concrete on the independent task: it plots the attention weights from the last query position, coloured by the query--key context overlap. The induction head realizes each of the three predicted estimators: hard $\beta$ (MLE) attends only to exact context matches; finite $\beta$ additionally spreads mass over partial matches (interpolation); and the BOS construction attends only to the exact matches plus the BOS token (add-$\alpha$). The trained transformers reproduce the construction's attention for both the interpolation and BOS variants. The same two-stage circuit emerges on the harder hierarchical task (\Cref{fig:mech_main}): Layer~1 heads copy tokens at lag~1 and lag~2, and Layer~2 implements soft context matching. The Layer~2 weight matrix $\bm{W}_A^{(2)}$ of the two disentangled models recovers the block-diagonal shift structure predicted by \Cref{prop:transformer_estimator} (top right). \Cref{app:symmetry} explains why label symmetry naturally favors such identity-based token comparisons. The attention weights from the last query position (bottom right), distribute mass across both full ($|M|=2$, yellow) and partial ($|M|=1$, teal) context matches even when the BOS is present, still implementing the interpolation predicted by~\Cref{lemma:transformer_estimator}, in contrast with the independent case. The standard transformer recovers the same mechanism, confirming that interpolation across partial context matches is a property of the attention rather than an artifact of the disentangled parameterization.

\clearpage
\par\textbf{Asymmetric lag weights.}
The weight matrices learned under the hierarchical prior reveal an asymmetry: the lag-$1$ matching block has larger magnitude than
the lag-$2$ block, corresponding to $\beta_1>\beta_2$
(\Cref{fig:mech_main,fig:hier_layer2_wa}).  In an order-$2$ context, lag~$1$
is the most-recent token and lag~$2$ is the older token.  Thus the model does
not treat the two kinds of partial match equally: it assigns more weight to a
candidate context that agrees with the query on the most-recent token than to
one that agrees only on the older token.

This ordering is consistent with how the hierarchical prior is constructed.
Each order-$2$ transition law is centered at the order-$1$ law indexed by its
suffix, namely its most-recent token.  Two contexts that share this token
therefore share the same latent order-$1$ parent, whereas two contexts that
share only their older token generally have different parents.  Even before
observing a trajectory, the former pair consequently has a larger expected
agreement between its next-token distributions.

This calculation provides an interpretation of the empirical ordering, but
it is not a proof of optimality.  More generally, characterizing the optimal
parameters of the estimator in \Cref{eq:transformer_estimator} under the
population loss remains an open question for both the independent and the
hierarchical Dirichlet priors.

\section{Conclusions}
\label{sec:conclusions}

We characterize induction-head circuits as regularized estimators for in-context prediction on order-$k$ Markov chains.
At finite attention-weight scale, a two-layer disentangled transformer implements a soft context-matching estimator: it aggregates successor tokens from exact and partial context matches, with weights determined by context overlap.
This yields a data-dependent interpolation across context orders, analogous to Jelinek--Mercer smoothing but with weights adapted to each sequence.
A complementary mechanism comes from the BOS token: by providing a sequence-independent attention target, it enables additive pseudo-counts and recovers add-$\alpha$ smoothing.
Empirically, trained disentangled and standard transformers recover the predicted attention patterns.
When pseudo-count smoothing is Bayes-optimal, they approach the Bayes-optimal predictor; when lower-order contexts provide structured evidence (e.g., hierarchical Dirichlet task), they outperform fixed add-$\alpha$ smoothing baselines.
Together, these results connect induction-head mechanisms with classical statistical smoothing, showing that transformers can \emph{regularize} in-context estimation rather than merely count exact matches.

\clearpage
\section*{Acknowledgments}
This work was partially funded by an unrestricted gift from Coefficient Giving, and the grant number 212111 from the Swiss National Science Foundation. Francesco D'Angelo is supported by the Google PhD Fellowship and O{\u{g}}uz Kaan Y{\"u}ksel is supported by the SwissAI Fellowship.

\section*{Impact Statement}
This paper presents work whose goal is to advance the field of Machine Learning. There are many potential societal consequences of our work, none which we feel must be specifically highlighted here.

\setlength{\bibsep}{0.5\baselineskip}
\bibliographystyle{plainnat}
\bibliography{example_paper}

\clearpage
\appendix

\section{Overview and Limitations}
\label{sec:appendix}

The appendix is organized as follows:
\begin{itemize}
  \item In \Cref{app:limitations_impact}, we discuss limitations and the broader impact of our work.
  \item In \Cref{app:related_work}, we discuss related work.
  \item In \Cref{app:experiments_app}, we present additional experiments.
  \item In \Cref{app:interpolation}, we provide the proofs of \Cref{prop:add_constant}, \Cref{lemma:transformer_estimator,corollary:mixture,lemma:optimal_beta}.
  \item In \Cref{app:symmetry}, we discuss \emph{label symmetry} of Markov chain tasks and provide the proofs of \Cref{theorem:invariance,corollary:inductive}.
\end{itemize}

\subsection{Limitations}
\label{app:limitations_impact}

\paragraph{Limitations.}
Our theoretical guarantees are derived for the disentangled two-layer architecture and synthetic Markov sources. The standard-transformer experiments in \Cref{fig:mech_main} suggest that the mechanism generalizes beyond the disentangled setup, but extending the analysis to deeper stacks, large-scale models, and natural text is left for future work.

\section{Related Work}\label{app:related_work}

\paragraph{Mechanisms for in-context learning.}
The empirical discovery of in-context learning in large language models~\citep{brown2020language} led to several complementary explanations of how transformers adapt from prompts.
One line of work studies ICL through the lens of algorithm learning: transformers trained on families of regression problems can learn procedures resembling gradient descent, least squares, or higher-order optimization methods~\citep{garg2022can,akyurek2022learning,von2023transformers,von2023uncovering,zhang2024trained,ahn2023transformers,fu2024transformers, gatmiry2024can}, while transformers trained on discrete-token mixture-of-transition tasks can implement mirror descent to infer latent mixture weights~\citep{dangelo2026transformers}.
Other work emphasizes statistical or Bayesian structure, viewing ICL as implicit inference over latent concepts or task parameters~\citep{xie2022explanation,zhang2024what}.
The emergence and form of these algorithms can depend on the diversity of pretraining tasks, and transformers can also select among candidate algorithms or task families from the prompt~\citep{raventos2023pretraining,bai2023transformers,yadlowsky2023pretraining}.
These perspectives are complementary to ours.
Rather than studying regression or generic latent-task inference, we isolate next-token prediction in Markovian sequences and ask which finite-sample estimator is implemented by the attention circuit.

\paragraph{Induction heads and circuit formation.}
Mechanistic interpretability has shown that transformers can contain recognizable computational circuits~\citep{elhage2021mathematical}, with induction heads serving as a central example of a circuit for copying from repeated contexts~\citep{olsson2022context}.
Subsequent work has investigated how these circuits form during training, how simpler subcircuits interact before induction behavior appears, and how data properties such as burstiness, imbalance, and repetition influence their emergence~\citep{chan2022data,singh2024what,zucchet2026emergence}.
Automated circuit-discovery tools and interpretable-by-design models provide related routes for making such mechanisms explicit~\citep{conmy2023towards,friedman2023learning}.
Our analysis contributes to this line by assigning a statistical role to the induction-head computation: soft context matching does not merely copy from repeated contexts, but implements a smoothed estimator over partial matches.

\paragraph{Markov chains, $n$-grams, and transformer ICL.}
Sequential probabilistic models provide a controlled setting for understanding next-token prediction.
Recent theory gives sample-complexity bounds for next-token prediction on Markovian data~\citep{yuksel2025long,yuksel2025sample,yuksel2025generalization}, while mechanistic studies show that transformers trained on bigrams or Markov chains develop induction-like mechanisms for estimating transition probabilities in context~\citep{bietti2023birth,edelman2024evolution,nichani2024how}.
This picture has been extended to higher-order Markov chains~\citep{chen2024unveiling,rajaraman2024transformers} and causal-structure selection~\citep{dangelo2025selective}.
For continuous autoregressive sequences, trained transformers can first infer a linear transition map in context and then apply it for prediction, with one-layer linear models implementing a gradient-descent step in structured settings~\citep{sander2024how}; for noisy linear dynamical systems, an optimal single linear-attention construction similarly corresponds to one gradient-descent step on a window-size-one autoregression objective, with larger windows connected empirically to generalized preconditioned conjugate gradient methods~\citep{vladarean2026on}.
Other related extensions include loss-landscape analyses~\citep{makkuva2024attention} and near-stationary $n$-gram solutions~\citep{varre2025learning}.
Our work differs in emphasis: instead of treating the learned predictor as hard transition-count estimation, we characterize the soft context-matching estimator induced by attention and show that it interpolates among exact and partial context matches.

\paragraph{Transformers as sequential models.}
A broader theoretical literature studies the representational power and limitations of transformers on sequence tasks.
Transformers are universal approximators of sequence-to-sequence functions under suitable assumptions~\citep{Yun2020Are} and can be computationally powerful models of sequence processing~\citep{perez2021attention,Sanford2024TransformersPC}.
For language-model-like distributions, sparse-attention transformers can represent $n$-gram models exactly~\citep{svete2024transformers}, though other sequential families such as hidden Markov models can expose limitations relative to recurrent architectures~\citep{hu2024limitation}.
Related work also finds interpretable belief-state structure inside transformers trained on hidden-state inference problems~\citep{shai2024transformers}.
For linear attention, asymptotic analyses provide exact characterizations of ICL in high-dimensional limits~\citep{lu2025asymptotic}.
At a more phenomenological level, $n$-gram statistics can approximate some transformer predictions, but this does not by itself explain how the model selects the relevant rule from context~\citep{nguyen2024understanding}.
These results motivate studying not only what sequential distributions transformers can represent, but also which estimators their attention mechanisms favor in finite-context regimes.

\paragraph{Classical smoothing and our position.}
Classical $n$-gram language models confronted the same sparsity problem that appears in finite-context ICL\@: high-order contexts are informative when observed often, but unreliable when their counts are small.
Smoothing methods such as interpolation, backoff, and hierarchical Dirichlet models address this bias-variance tradeoff by borrowing strength from lower-order or prior distributions~\citep{jelinek1980interpolated,katz1987estimation,chen1999empirical,mackay1995hierarchical}.
Recent unbounded $n$-gram models show that count-based methods remain relevant even at modern data scales~\citep{liu2024infinigram}.
Our contribution is to connect these classical estimators to transformer circuits: soft context matching implements interpolation through attention weights, while the BOS token supplies additive pseudo-counts.

\section{Additional Experiments}
\label{app:experiments_app}

In this appendix, we provide detailed visualizations of the attention patterns and internal representations for both the BOS (Beginning-Of-Sequence) and no-BOS constructions described in the main text, as well as for trained transformer models. These visualizations serve two purposes: (i) comparing the learned weights and attention patterns with our theoretical constructions, and (ii) providing a clear visualization of the interpolation mechanism via the induction head plots, where the contrast between soft (partial context matching) and hard ($n$-gram matching) mechanisms is clearly visible. Unless otherwise stated, all visualizations in this section use a single sequence of length $T=64$ drawn from the independent Dirichlet prior ($\alpha=1$); trained models use the converged seed.

\subsection{No-BOS Construction Visualizations}
\label{app:nobos_visualizations}

We visualize the no-BOS construction in two $\beta$ regimes that share the \emph{same} weights but differ in Layer~2: (i) a \emph{fixed} large $\beta$, which collapses Layer~2 to hard $n$-gram matching, and (ii) the \emph{adaptive} schedule $\beta(t)$ of \Cref{lemma:optimal_beta}, which softens Layer~2 into interpolation across context orders. The $\bm{R}_A$ vectors and the Layer~2 weight matrix $\bm{W}_A^{(2)}$ are identical across regimes (up to the overall $\beta$ scale of $\bm{W}_A^{(2)}$), so we repeat them for clarity; the regimes differ only in the Layer~2 post-softmax attention weights and the induction-head bar plot. Throughout this appendix we show Layer~1 attention \emph{after} softmax and Layer~2 attention \emph{before} softmax (scores); the sole exception is this construction comparison, where we show the Layer~2 \emph{post-softmax weights}, since that is precisely where the two regimes differ (the pre-softmax scores are $\beta$-invariant up to scale).

\subsubsection*{Fixed large $\beta$: hard $n$-gram matching}

\begin{figure}[H]
    \centering
    \begin{subfigure}[b]{0.32\textwidth}\centering\includegraphics[width=\textwidth]{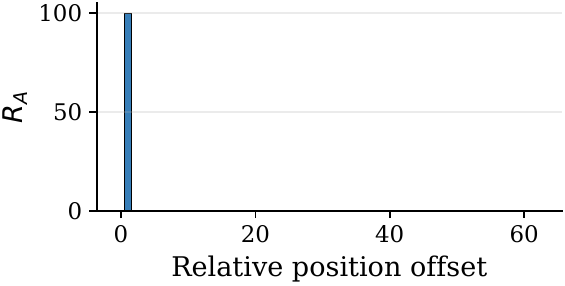}\caption{$\bm{R}_A$ Layer 1, Head 0}\end{subfigure}\hfill
    \begin{subfigure}[b]{0.32\textwidth}\centering\includegraphics[width=\textwidth]{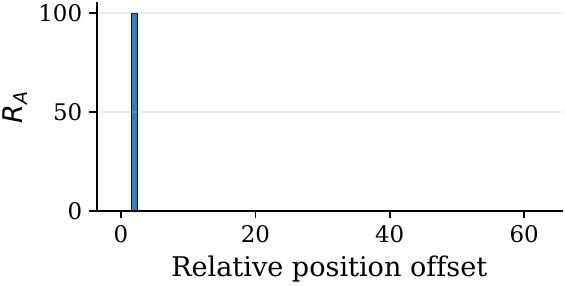}\caption{$\bm{R}_A$ Layer 1, Head 1}\end{subfigure}\hfill
    \begin{subfigure}[b]{0.32\textwidth}\centering\includegraphics[width=\textwidth]{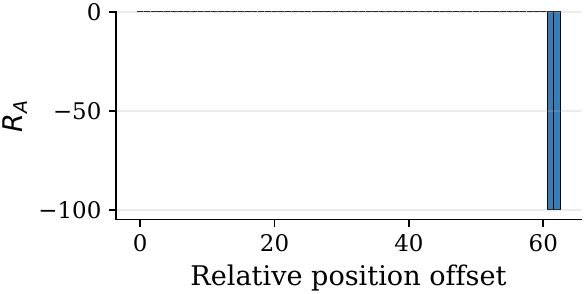}\caption{$\bm{R}_A$ Layer 2}\end{subfigure}
    \\[0.6em]
    \begin{subfigure}[b]{0.4\textwidth}\centering\includegraphics[width=0.7\textwidth]{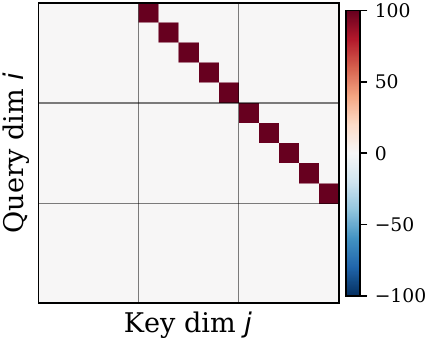}\caption{$\bm{W}_A^{(2)}$ Layer 2}\end{subfigure}
    \caption{No-BOS construction: relative positional encodings $\bm{R}_A$ (Layer~1 heads copy at lag~1 and lag~2) and the Layer~2 weight matrix $\bm{W}_A^{(2)}$ with its block-diagonal shift structure. These weights are shared with the adaptive regime below.}
    \label{fig:app2_nobos_con_fixed_weights}
\end{figure}

\begin{figure}[H]
    \centering
    \begin{subfigure}[b]{0.32\textwidth}\centering\includegraphics[width=\textwidth]{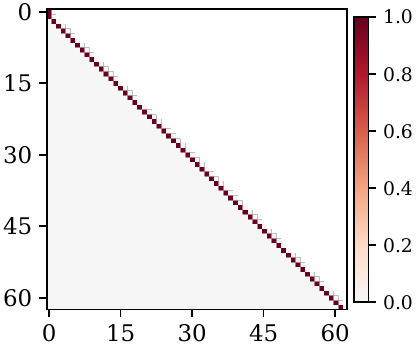}\caption{Layer 1, Head 0 (after softmax)}\end{subfigure}\hfill
    \begin{subfigure}[b]{0.32\textwidth}\centering\includegraphics[width=\textwidth]{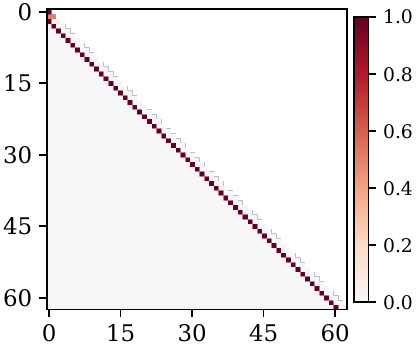}\caption{Layer 1, Head 1 (after softmax)}\end{subfigure}\hfill
    \begin{subfigure}[b]{0.32\textwidth}\centering\includegraphics[width=\textwidth]{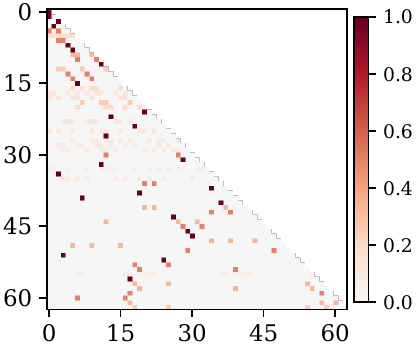}\caption{Layer 2 (after softmax)}\end{subfigure}
    \\[0.6em]
    \begin{subfigure}[b]{\textwidth}\centering\includegraphics[width=0.95\textwidth]{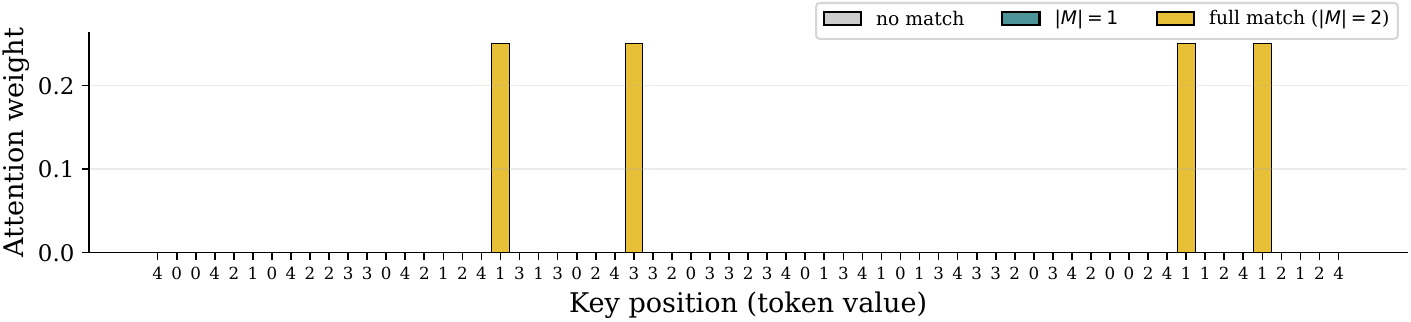}\caption{Layer 2 induction head --- hard matching (large $\beta$)}\end{subfigure}
    \caption{No-BOS construction, \textbf{fixed large $\beta$}: Layer~1 post-softmax copy attention and the Layer~2 post-softmax attention weights, which concentrate on exact $n$-gram matches. The induction-head bar (bottom) shows attention from the last query position, coloured by degree of context match.}
    \label{fig:app2_nobos_con_fixed_attn}
\end{figure}

\subsubsection*{Adaptive $\beta(t)$: interpolation smoothing}

\begin{figure}[H]
    \centering
    \begin{subfigure}[b]{0.32\textwidth}\centering\includegraphics[width=\textwidth]{figures/plots/viz_camera_ready_appendix_seed77/nobos_construction/seqlen_64/cand_8_new/layer1_ra_head0.pdf}\caption{$\bm{R}_A$ Layer 1, Head 0}\end{subfigure}\hfill
    \begin{subfigure}[b]{0.32\textwidth}\centering\includegraphics[width=\textwidth]{figures/plots/viz_camera_ready_appendix_seed77/nobos_construction/seqlen_64/cand_8_new/layer1_ra_head1.pdf}\caption{$\bm{R}_A$ Layer 1, Head 1}\end{subfigure}\hfill
    \begin{subfigure}[b]{0.32\textwidth}\centering\includegraphics[width=\textwidth]{figures/plots/viz_camera_ready_appendix_seed77/nobos_construction/seqlen_64/cand_8_new/layer2_ra.pdf}\caption{$\bm{R}_A$ Layer 2}\end{subfigure}
    \\[0.6em]
    \begin{subfigure}[b]{0.4\textwidth}\centering\includegraphics[width=0.7\textwidth]{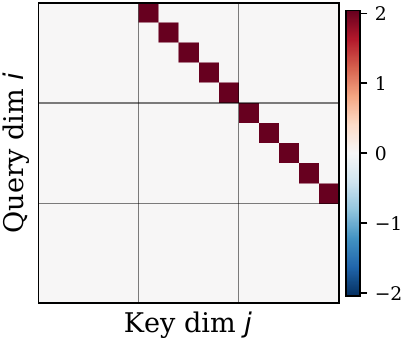}\caption{$\bm{W}_A^{(2)}$ Layer 2}\end{subfigure}
    \caption{No-BOS construction: the same $\bm{R}_A$ and $\bm{W}_A^{(2)}$ as the fixed regime above (repeated for reference; $\bm{W}_A^{(2)}$ differs only by the overall $\beta$ scale).}
    \label{fig:app2_nobos_con_adaptive_weights}
\end{figure}

\begin{figure}[H]
    \centering
    \begin{subfigure}[b]{0.32\textwidth}\centering\includegraphics[width=\textwidth]{figures/plots/viz_camera_ready_appendix_seed77/nobos_construction/seqlen_64/cand_8_new/layer1_head0_attn_weights.pdf}\caption{Layer 1, Head 0 (after softmax)}\end{subfigure}\hfill
    \begin{subfigure}[b]{0.32\textwidth}\centering\includegraphics[width=\textwidth]{figures/plots/viz_camera_ready_appendix_seed77/nobos_construction/seqlen_64/cand_8_new/layer1_head1_attn_weights.pdf}\caption{Layer 1, Head 1 (after softmax)}\end{subfigure}\hfill
    \begin{subfigure}[b]{0.32\textwidth}\centering\includegraphics[width=\textwidth]{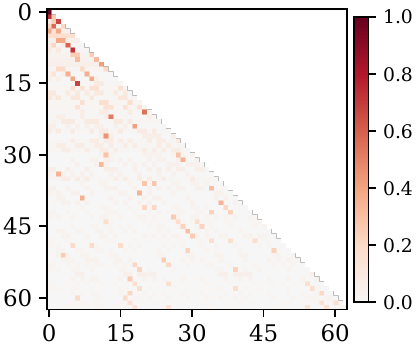}\caption{Layer 2 (after softmax)}\end{subfigure}
    \\[0.6em]
    \begin{subfigure}[b]{\textwidth}\centering\includegraphics[width=0.95\textwidth]{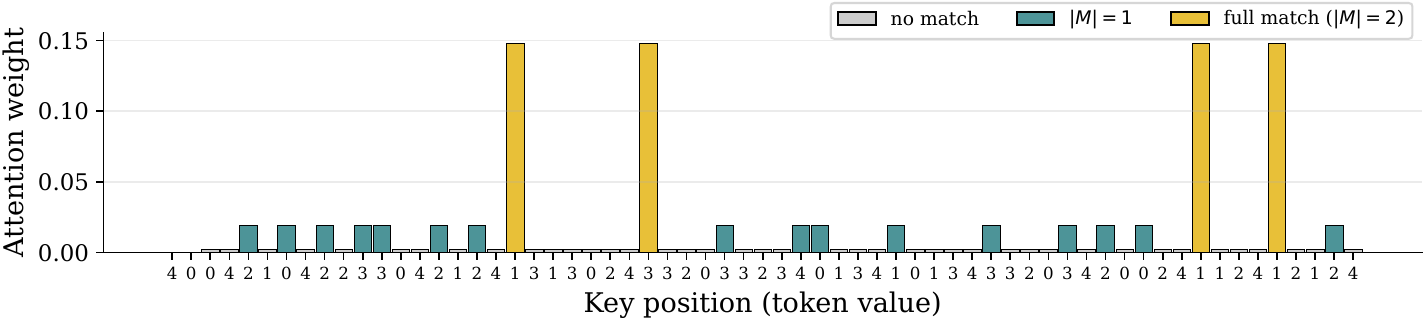}\caption{Layer 2 induction head --- adaptive $\beta(t)$ (interpolation smoothing)}\end{subfigure}
    \caption{No-BOS construction, \textbf{adaptive $\beta(t)$}: identical Layer~1 copy attention, but the Layer~2 post-softmax attention weights now spread over partial matches (the interpolation of \Cref{lemma:transformer_estimator}), and the induction bar distributes mass across full and partial matches.}
    \label{fig:app2_nobos_con_adaptive_attn}
\end{figure}

\FloatBarrier
\clearpage

\subsection{No-BOS Trained Model Visualizations}
\label{app:nobos_trained_visualizations}

We visualize the learned weights and attention patterns from a transformer trained without the BOS token (Layer~1 initialized to hard copy heads but trainable, Layer~2 fully unconstrained). The similarity between the learned weights and the construction (\Cref{app:nobos_visualizations}) validates the theoretical analysis.

\begin{figure}[H]
    \centering
    \begin{subfigure}[b]{0.32\textwidth}\centering\includegraphics[width=\textwidth]{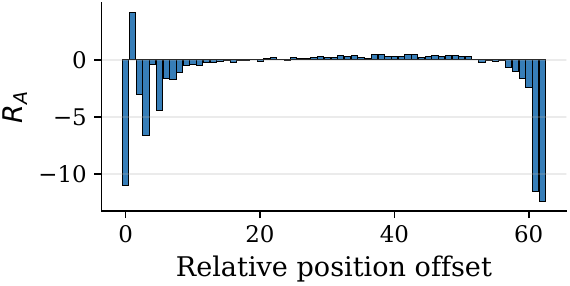}\caption{$\bm{R}_A$ Layer 1, Head 0}\end{subfigure}\hfill
    \begin{subfigure}[b]{0.32\textwidth}\centering\includegraphics[width=\textwidth]{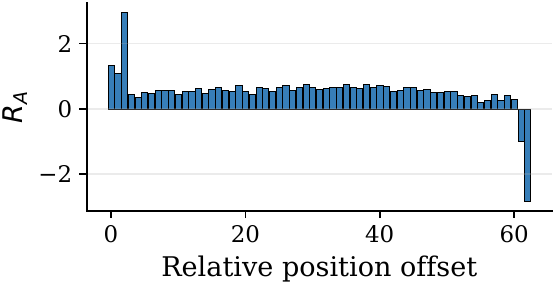}\caption{$\bm{R}_A$ Layer 1, Head 1}\end{subfigure}\hfill
    \begin{subfigure}[b]{0.32\textwidth}\centering\includegraphics[width=\textwidth]{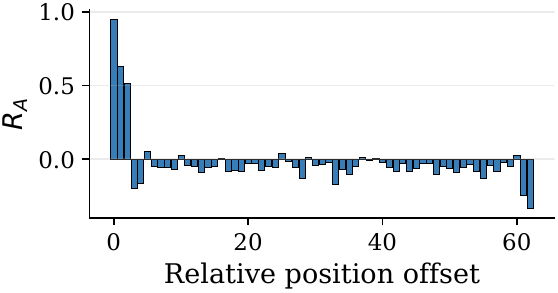}\caption{$\bm{R}_A$ Layer 2}\end{subfigure}
    \begin{subfigure}[b]{0.4\textwidth}\centering\includegraphics[width=0.7\textwidth]{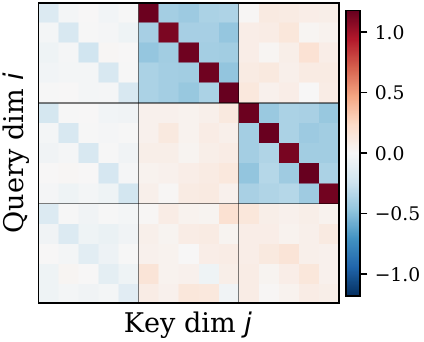}\caption{$\bm{W}_A^{(2)}$ Layer 2}\end{subfigure}
    \caption{No-BOS trained model: learned relative positional encodings $\bm{R}_A$ and Layer~2 weight matrix $\bm{W}_A^{(2)}$. The trained model recovers the lag-specific copy structure in Layer~1 and the block-diagonal shift structure in $\bm{W}_A^{(2)}$.}
    \label{fig:app2_nobos_tr_weights}
\end{figure}

\begin{figure}[H]
    \centering
    \begin{subfigure}[b]{0.32\textwidth}\centering\includegraphics[width=\textwidth]{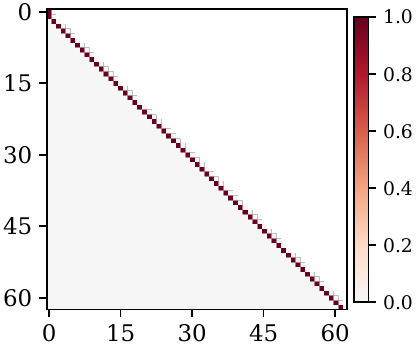}\caption{Layer 1, Head 0 (after softmax)}\end{subfigure}\hfill
    \begin{subfigure}[b]{0.32\textwidth}\centering\includegraphics[width=\textwidth]{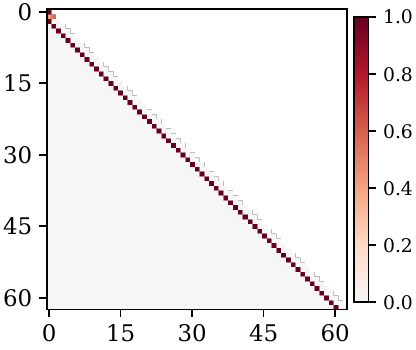}\caption{Layer 1, Head 1 (after softmax)}\end{subfigure}\hfill
    \begin{subfigure}[b]{0.32\textwidth}\centering\includegraphics[width=\textwidth]{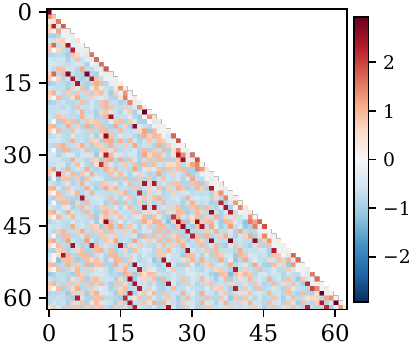}\caption{Layer 2 (before softmax)}\end{subfigure}
    \begin{subfigure}[b]{\textwidth}\centering\includegraphics[width=0.95\textwidth]{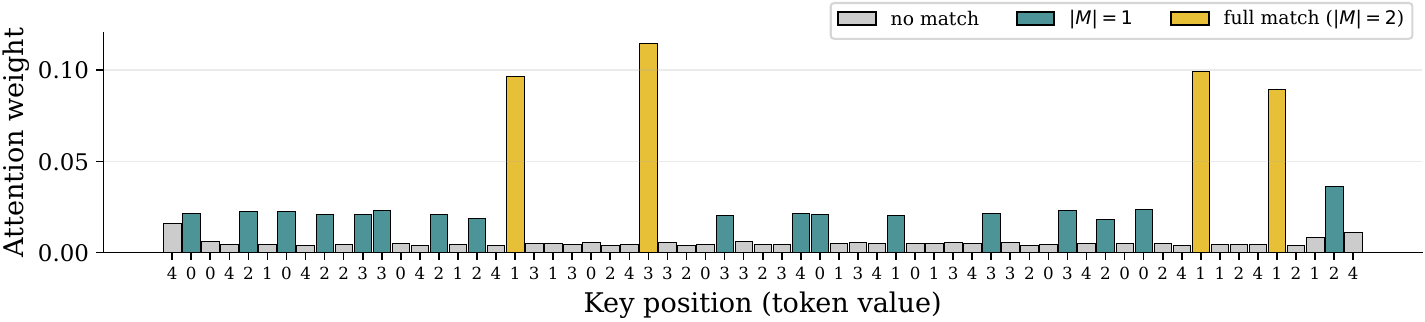}\caption{Layer 2 induction head}\end{subfigure}
    \caption{No-BOS trained model: Layer~1 post-softmax copy attention, Layer~2 pre-softmax scores, and the Layer~2 induction-head attention. The trained model implements soft context matching, distributing attention over both full and partial matches.}
    \label{fig:app2_nobos_tr_attn}
\end{figure}

\FloatBarrier
\clearpage

\subsection{BOS Construction Visualizations}
\label{app:bos_visualizations}

We visualize the weights and attention patterns from our BOS construction. The BOS token provides a dedicated attention target that contributes pseudo-counts, enabling add-$\alpha$-style smoothing even when context matching is sharp.

\begin{figure}[H]
    \centering
    \begin{subfigure}[b]{0.32\textwidth}\centering\includegraphics[width=\textwidth]{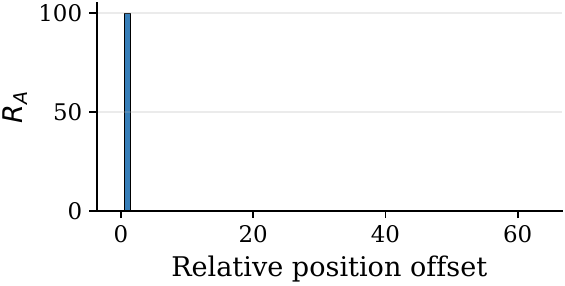}\caption{$\bm{R}_A$ Layer 1, Head 0}\end{subfigure}\hfill
    \begin{subfigure}[b]{0.32\textwidth}\centering\includegraphics[width=\textwidth]{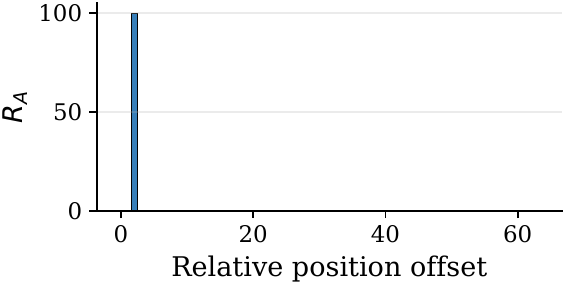}\caption{$\bm{R}_A$ Layer 1, Head 1}\end{subfigure}\hfill
    \begin{subfigure}[b]{0.32\textwidth}\centering\includegraphics[width=\textwidth]{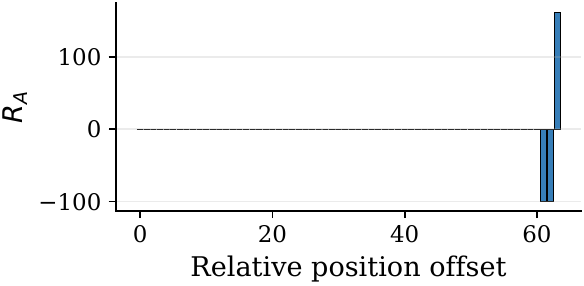}\caption{$\bm{R}_A$ Layer 2}\end{subfigure}
    \begin{subfigure}[b]{0.4\textwidth}\centering\includegraphics[width=0.7\textwidth]{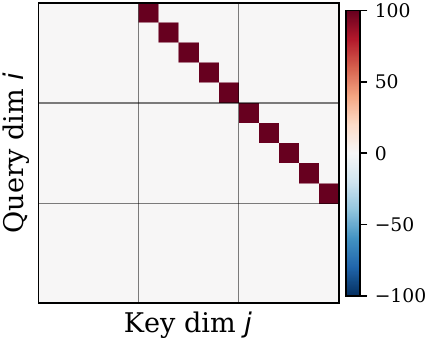}\caption{$\bm{W}_A^{(2)}$ Layer 2}\end{subfigure}
    \caption{BOS construction: relative positional encodings $\bm{R}_A$ (Layer~2 $\bm{R}_A$ carries the BOS pseudo-count boost at the top offset and the first-$k$ mask) and the Layer~2 weight matrix $\bm{W}_A^{(2)}$.}
    \label{fig:app2_bos_con_weights}
\end{figure}

\begin{figure}[H]
    \centering
    \begin{subfigure}[b]{0.32\textwidth}\centering\includegraphics[width=\textwidth]{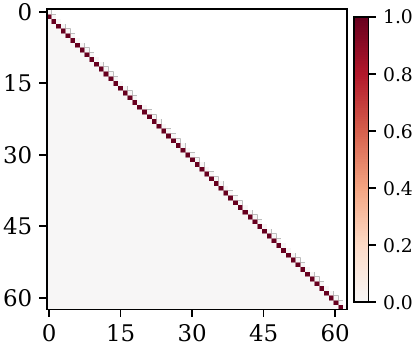}\caption{Layer 1, Head 0 (after softmax)}\end{subfigure}\hfill
    \begin{subfigure}[b]{0.32\textwidth}\centering\includegraphics[width=\textwidth]{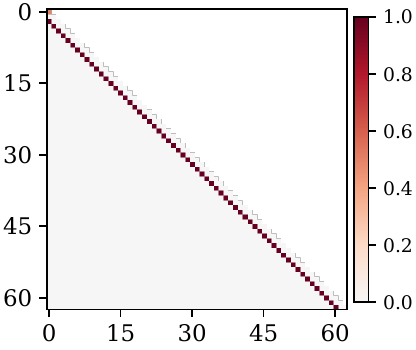}\caption{Layer 1, Head 1 (after softmax)}\end{subfigure}\hfill
    \begin{subfigure}[b]{0.32\textwidth}\centering\includegraphics[width=\textwidth]{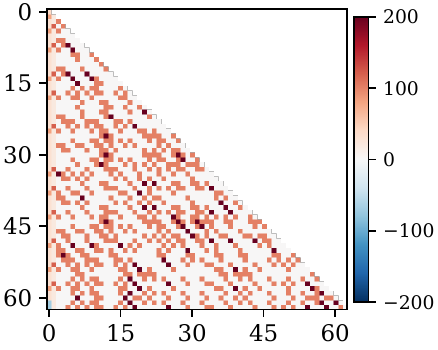}\caption{Layer 2 (before softmax)}\end{subfigure}
    \begin{subfigure}[b]{\textwidth}\centering\includegraphics[width=0.95\textwidth]{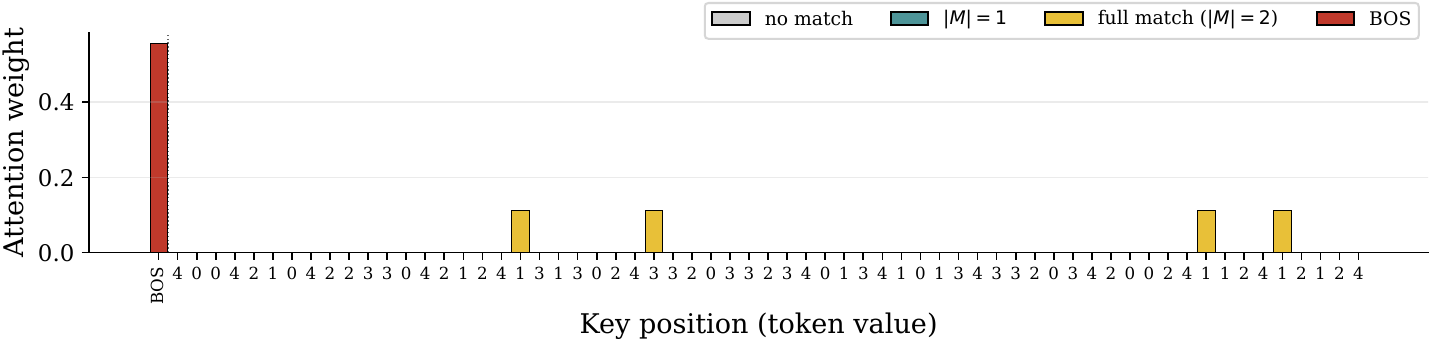}\caption{Layer 2 induction head}\end{subfigure}
    \caption{BOS construction: Layer~1 post-softmax copy attention, Layer~2 pre-softmax scores, and the Layer~2 induction-head attention. The BOS pseudo-count lets the model attend sharply to exact matches while remaining well-regularized.}
    \label{fig:app2_bos_con_attn}
\end{figure}

\FloatBarrier
\clearpage

\subsection{BOS-Trained Model Visualizations}
\label{app:bos_trained_visualizations}

We visualize the learned weights and attention patterns from a transformer trained with the BOS token. Comparing to the BOS construction (\Cref{app:bos_visualizations}) shows how closely the trained model recovers the theoretical construction.

\begin{figure}[H]
    \centering
    \begin{subfigure}[b]{0.32\textwidth}\centering\includegraphics[width=\textwidth]{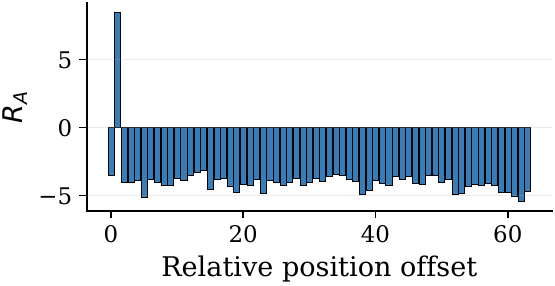}\caption{$\bm{R}_A$ Layer 1, Head 0}\end{subfigure}\hfill
    \begin{subfigure}[b]{0.32\textwidth}\centering\includegraphics[width=\textwidth]{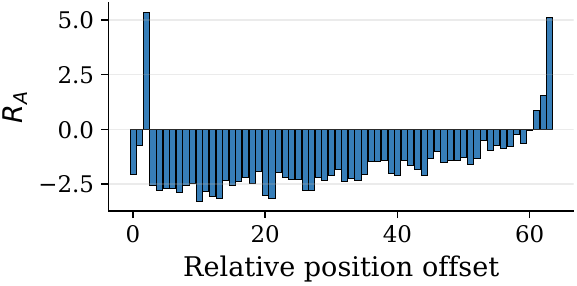}\caption{$\bm{R}_A$ Layer 1, Head 1}\end{subfigure}\hfill
    \begin{subfigure}[b]{0.32\textwidth}\centering\includegraphics[width=\textwidth]{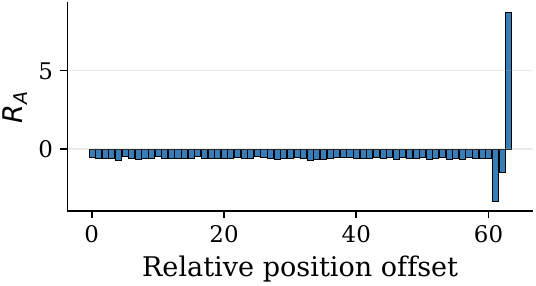}\caption{$\bm{R}_A$ Layer 2}\end{subfigure}
    \begin{subfigure}[b]{0.4\textwidth}\centering\includegraphics[width=0.7\textwidth]{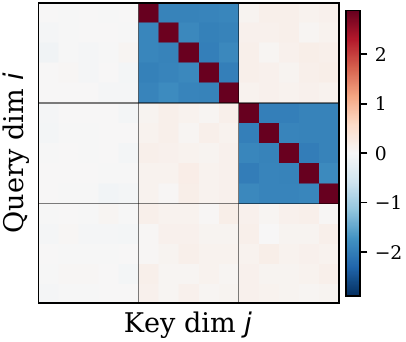}\caption{$\bm{W}_A^{(2)}$ Layer 2}\end{subfigure}
    \caption{BOS trained model: learned relative positional encodings $\bm{R}_A$ and Layer~2 weight matrix $\bm{W}_A^{(2)}$, recovering the construction's structure.}
    \label{fig:app2_bos_tr_weights}
\end{figure}

\begin{figure}[H]
    \centering
    \begin{subfigure}[b]{0.32\textwidth}\centering\includegraphics[width=\textwidth]{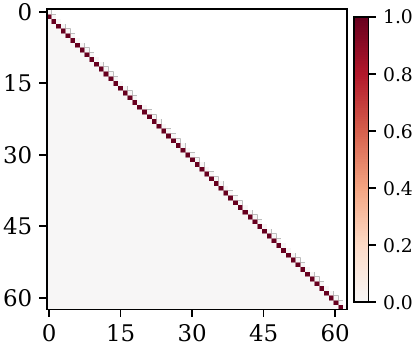}\caption{Layer 1, Head 0 (after softmax)}\end{subfigure}\hfill
    \begin{subfigure}[b]{0.32\textwidth}\centering\includegraphics[width=\textwidth]{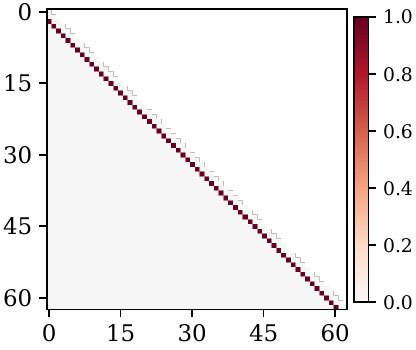}\caption{Layer 1, Head 1 (after softmax)}\end{subfigure}\hfill
    \begin{subfigure}[b]{0.32\textwidth}\centering\includegraphics[width=\textwidth]{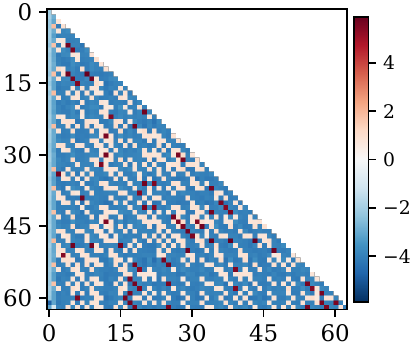}\caption{Layer 2 (before softmax)}\end{subfigure}
    \\[0.6em]
    \begin{subfigure}[b]{\textwidth}\centering\includegraphics[width=0.95\textwidth]{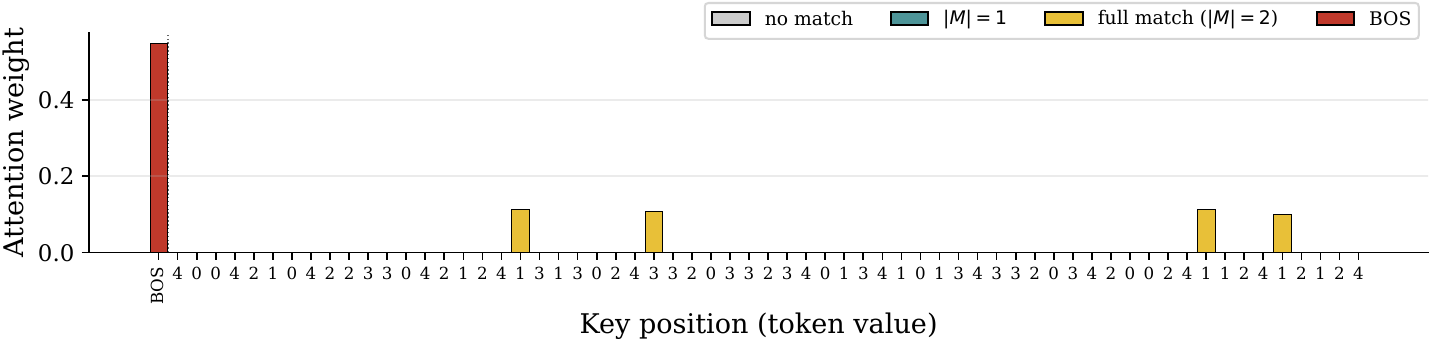}\caption{Layer 2 induction head}\end{subfigure}
    \caption{BOS trained model: Layer~1 post-softmax copy attention, Layer~2 pre-softmax scores, and the Layer~2 induction-head attention, demonstrating the learned soft context-matching mechanism.}
    \label{fig:app2_bos_tr_attn}
\end{figure}

\FloatBarrier
\clearpage

\subsection{Hierarchical Dirichlet Prior: Extended Mechanistic Visualizations}
\label{app:hierarchical_experiments}

The main text establishes the hierarchical-prior results at $T{=}32$: \Cref{fig:mech_main} shows that the BOS construction, the trained disentangled transformer, and the standard transformer all implement the same two-stage interpolation circuit, and the right panel of \Cref{fig:kl_vs_position} shows that all three outperform every add-$\alpha$ baseline. Here we provide the same mechanistic evidence at a \emph{longer} sequence length, $T{=}128$, with each component shown separately---Layer~2 weight matrix, Layer~2 attention, Layer~1 copy heads, and the induction-head decomposition---rather than compressed into a single panel. The setup matches \Cref{sec:experiments} (order-$2$ chains, $|V|{=}5$, $\eta_1{=}\eta_2{=}5.0$): models are trained across nine sequence lengths $T \in \{32, 64, 128, 192, 256, 384, 512, 768, 1024\}$ and $3$ seeds, and we visualize the converged $T{=}128$ models. Recall that for the construction Layer~1 uses frozen hard copy heads and Layer~2 keeps only the two per-lag scalars $\beta_1,\beta_2$ trainable (no closed form is available under this prior), whereas the trained disentangled and standard models optimize all parameters.

\paragraph{Layer~2 weight matrix.}
\Cref{fig:hier_layer2_wa} shows $\bm{W}_A^{(2)}$ for the two disentangled models. The construction (left) has the block-diagonal structure predicted by \Cref{prop:transformer_estimator}---two shift blocks scaled by the learned $\beta_1,\beta_2$---and the trained transformer (right) recovers the same structure, confirming that gradient descent finds the predicted parameterization at $T{=}128$ just as at $T{=}32$.

\begin{figure}[H]
    \centering
    \begin{subfigure}[t]{0.48\textwidth}
        \centering
        \includegraphics[width=0.7\textwidth]{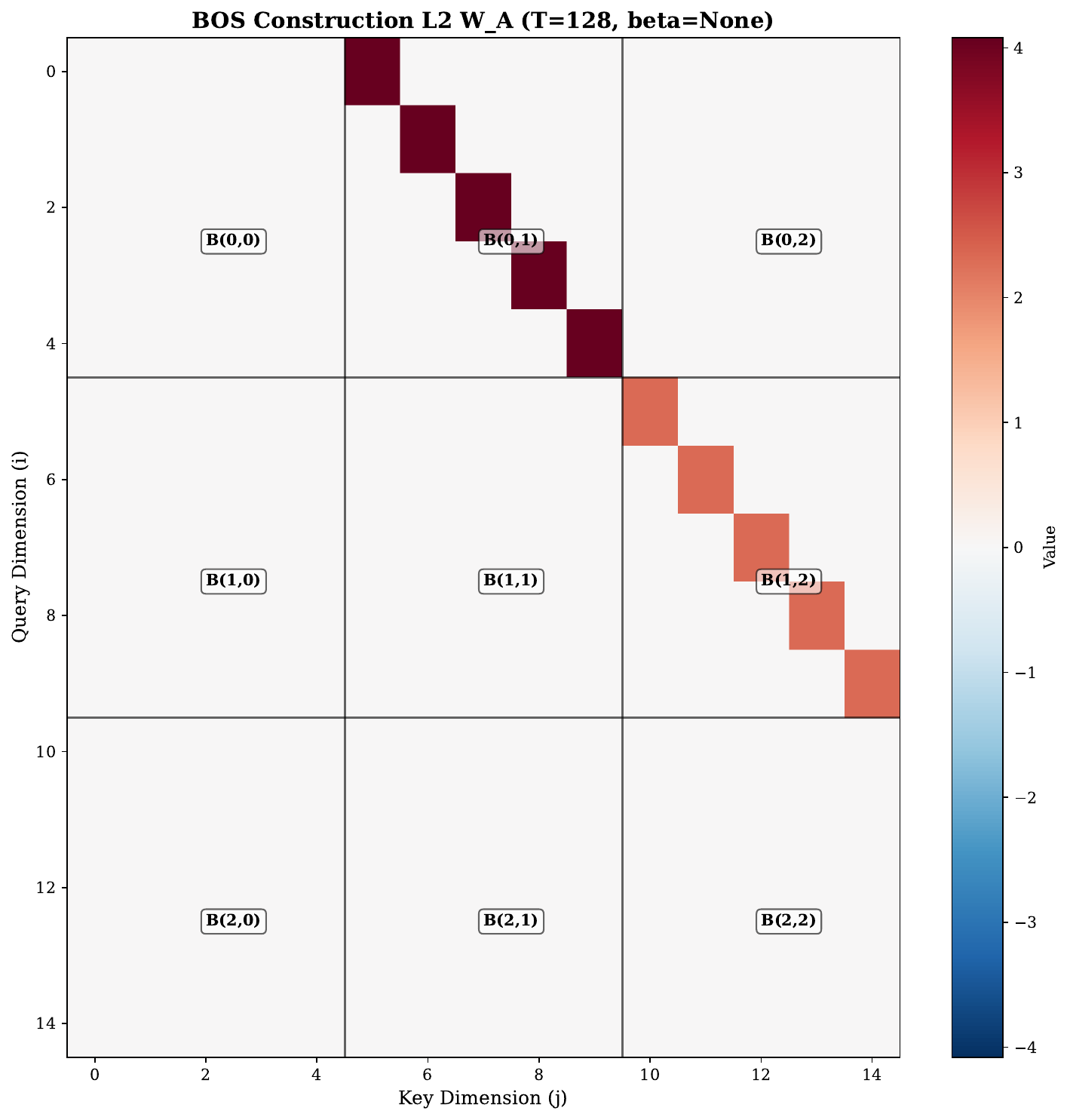}
        \caption{BOS Construction}
    \end{subfigure}
    \hfill
    \begin{subfigure}[t]{0.48\textwidth}
        \centering
        \includegraphics[width=0.7\textwidth]{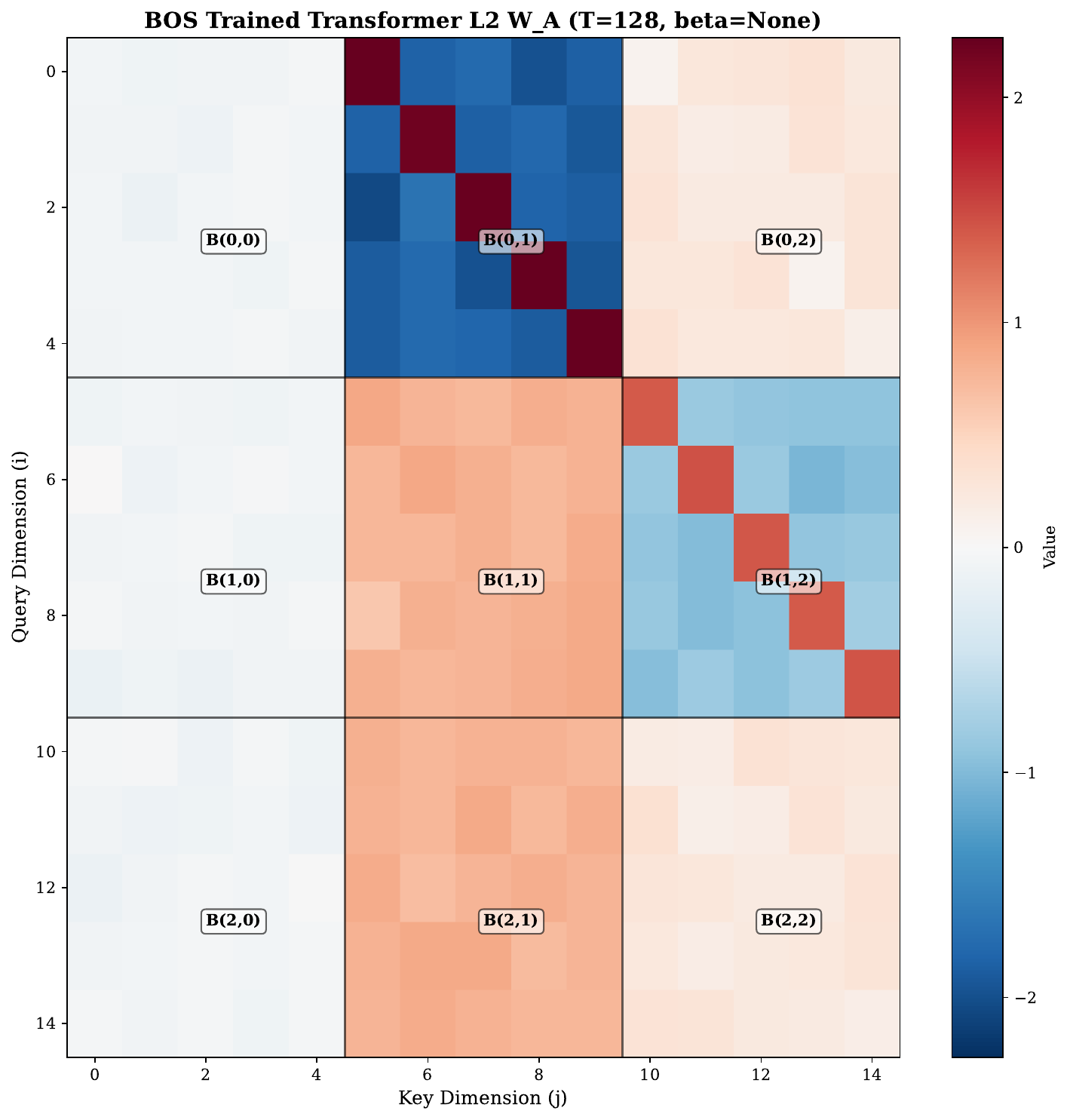}
        \caption{BOS Trained Transformer}
    \end{subfigure}
    \caption{Layer~2 weight matrix $\bm{W}_A^{(2)}$ at $T=128$ for the hierarchical Dirichlet task. Both models exhibit the block-diagonal shift structure predicted by the theory.}
    \label{fig:hier_layer2_wa}
\end{figure}

\paragraph{Attention patterns.}
\Cref{fig:hier_layer2_attn} (Layer~2) and \Cref{fig:hier_layer1_attn} (Layer~1 copy heads) show the attention for all three model families. Layer~1 head~0 copies at lag~1 and head~1 at lag~2---hard by design in the construction and learned nearly identically by the trained disentangled and standard models---while Layer~2 realizes the induction-head pattern in every case, each query attending to keys with matching preceding context.

\paragraph{Induction-head decomposition.}
\Cref{fig:hier_induction} plots, for the last query position, the attention weight on each key position colored by degree of context match. All three models place the most mass on full matches yet allocate non-negligible mass to partial matches---the interpolation across context orders predicted by \Cref{lemma:transformer_estimator}, and the reason they improve over fixed add-$\alpha$ smoothing under the hierarchical prior.

\begin{figure}[H]
    \centering
    \begin{subfigure}[t]{\textwidth}
        \centering
        \includegraphics[width=0.8\textwidth]{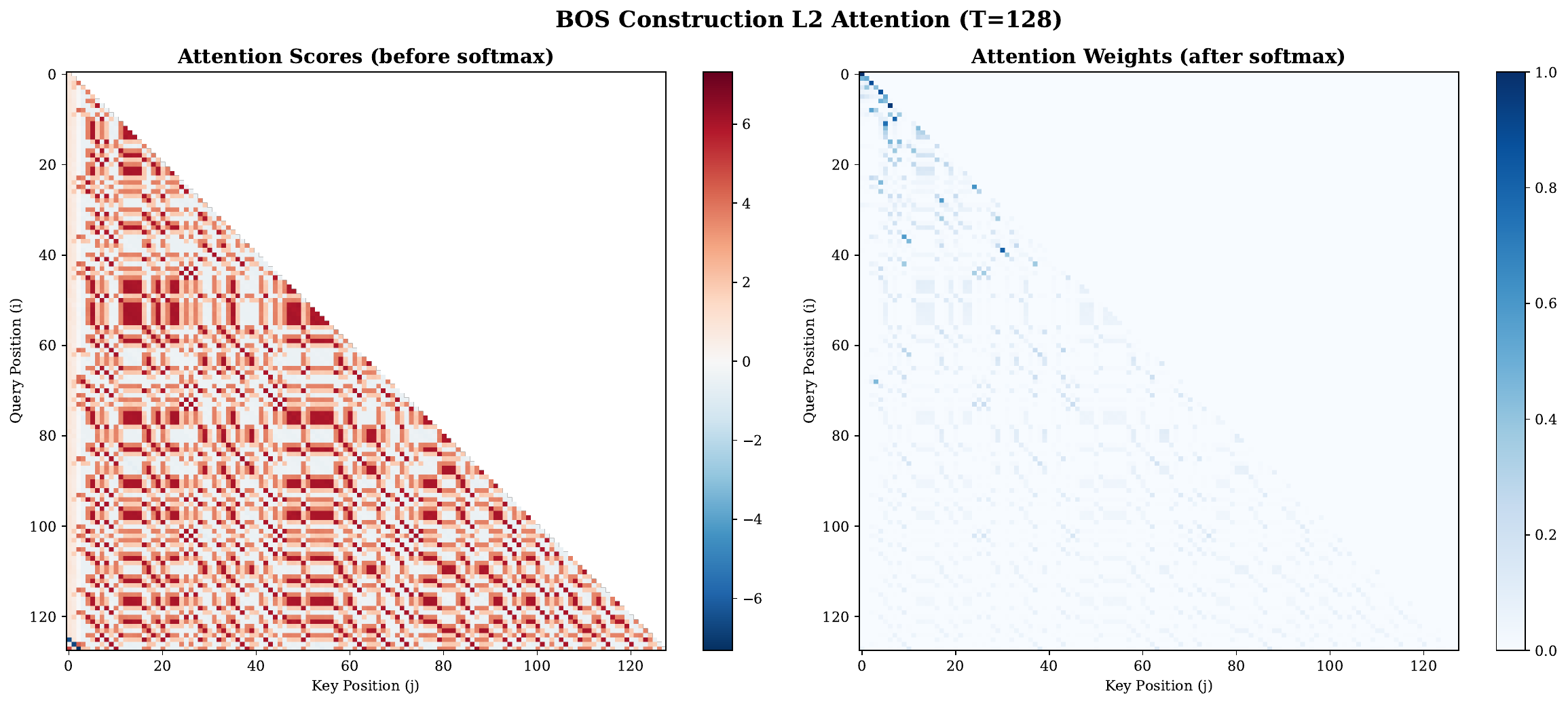}
        \caption{BOS Construction}
    \end{subfigure}
    \begin{subfigure}[t]{\textwidth}
        \centering
        \includegraphics[width=0.8\textwidth]{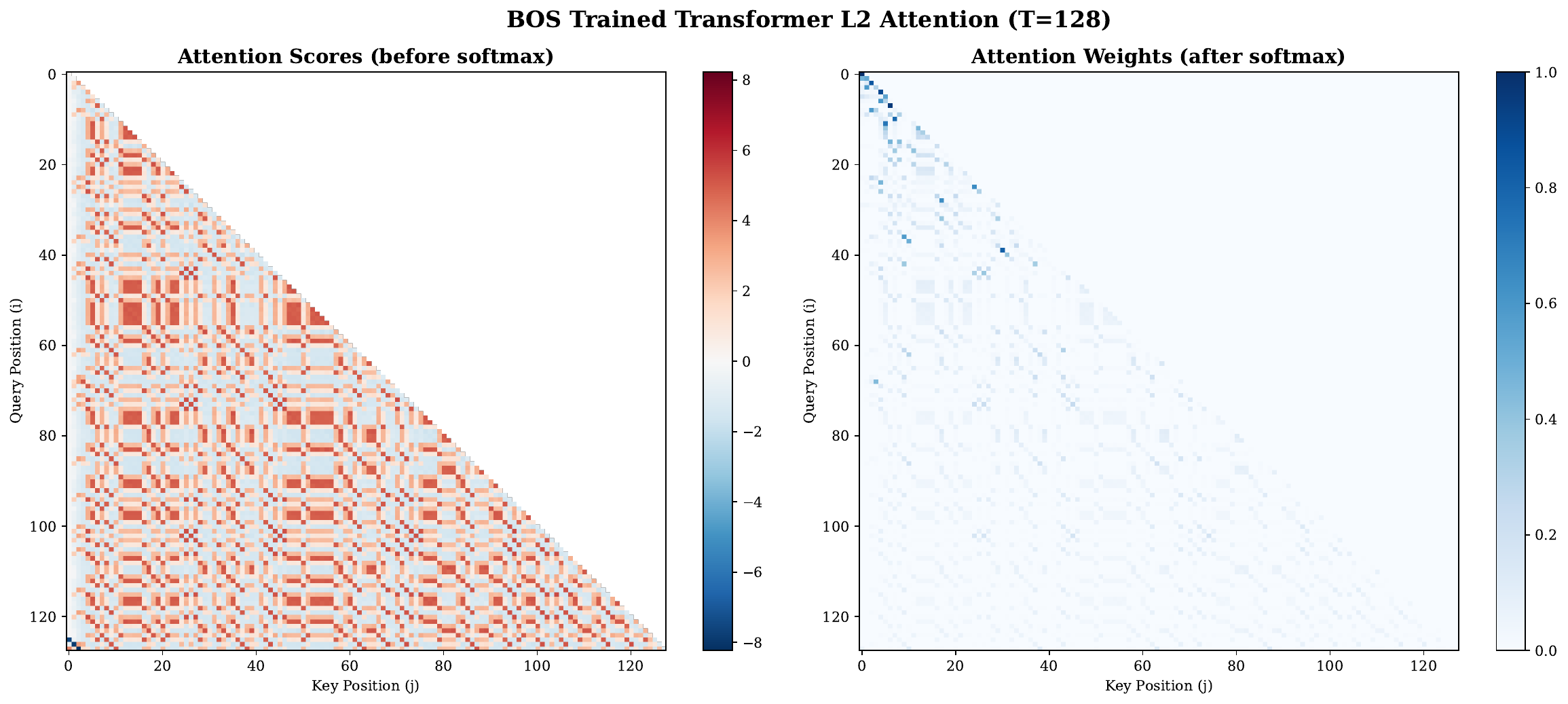}
        \caption{BOS Trained Transformer}
    \end{subfigure}
    \begin{subfigure}[t]{\textwidth}
        \centering
        \includegraphics[width=0.8\textwidth]{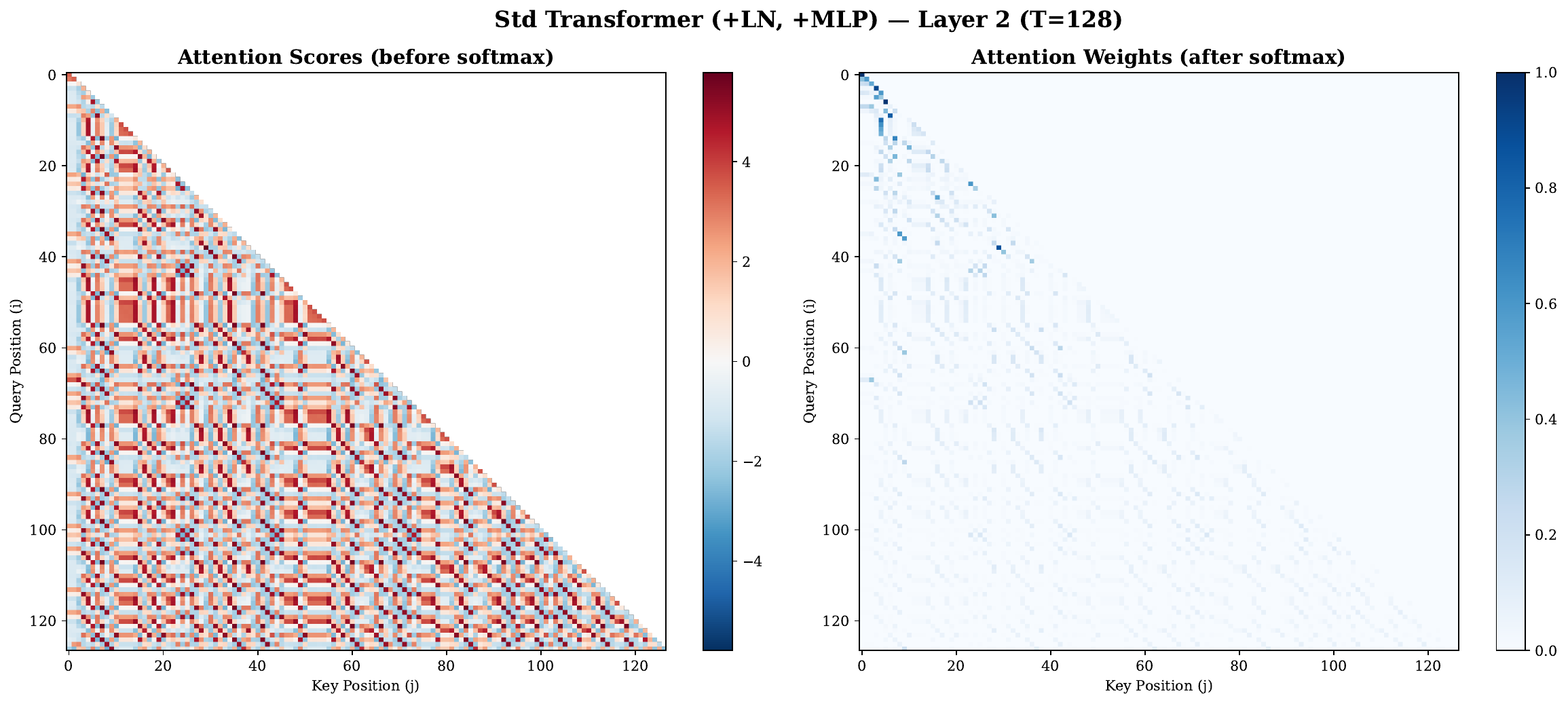}
        \caption{Standard Transformer (+LN, +MLP)}
    \end{subfigure}
    \caption{Layer~2 attention at $T=128$ for the hierarchical Dirichlet task (left: scores before softmax; right: weights after softmax). All models exhibit the induction head pattern, attending to positions with matching context.}
    \label{fig:hier_layer2_attn}
\end{figure}

\begin{figure}[H]
    \centering
    \begin{subfigure}[t]{0.48\textwidth}
        \centering
        \includegraphics[width=\textwidth]{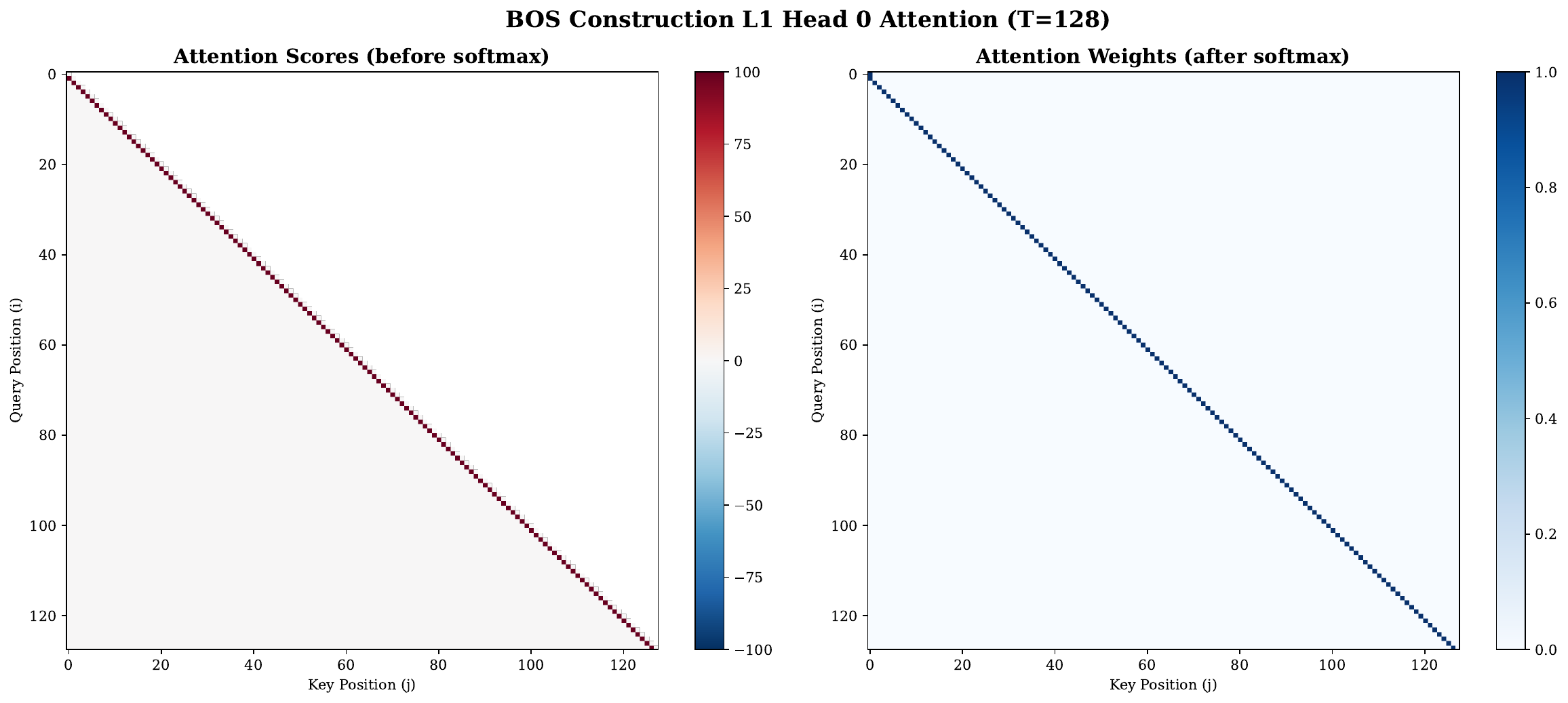}
        \caption{BOS Construction --- Head 0 (lag 1)}
    \end{subfigure}
    \hfill
    \begin{subfigure}[t]{0.48\textwidth}
        \centering
        \includegraphics[width=\textwidth]{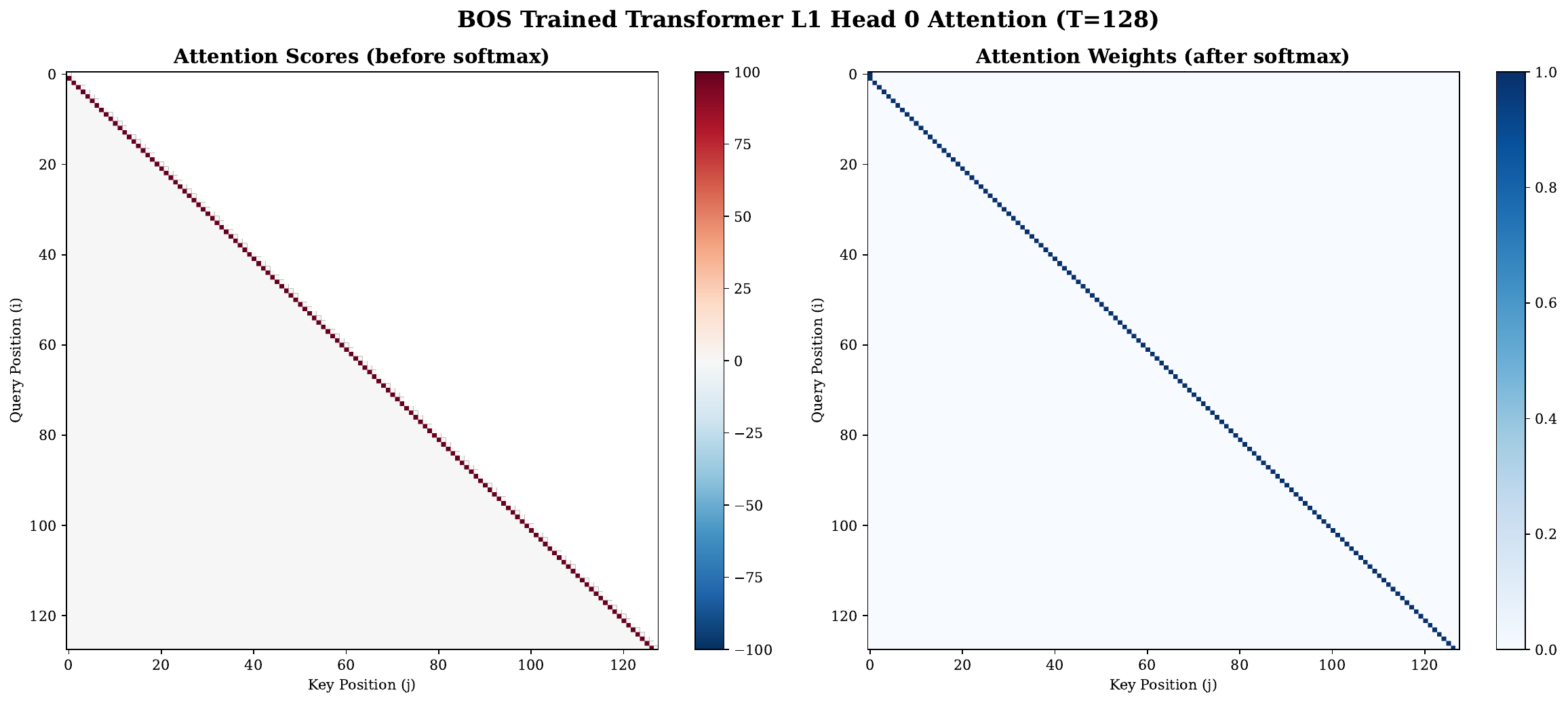}
        \caption{BOS Trained --- Head 0 (lag 1)}
    \end{subfigure}
    \\[0.5em]
    \begin{subfigure}[t]{0.48\textwidth}
        \centering
        \includegraphics[width=\textwidth]{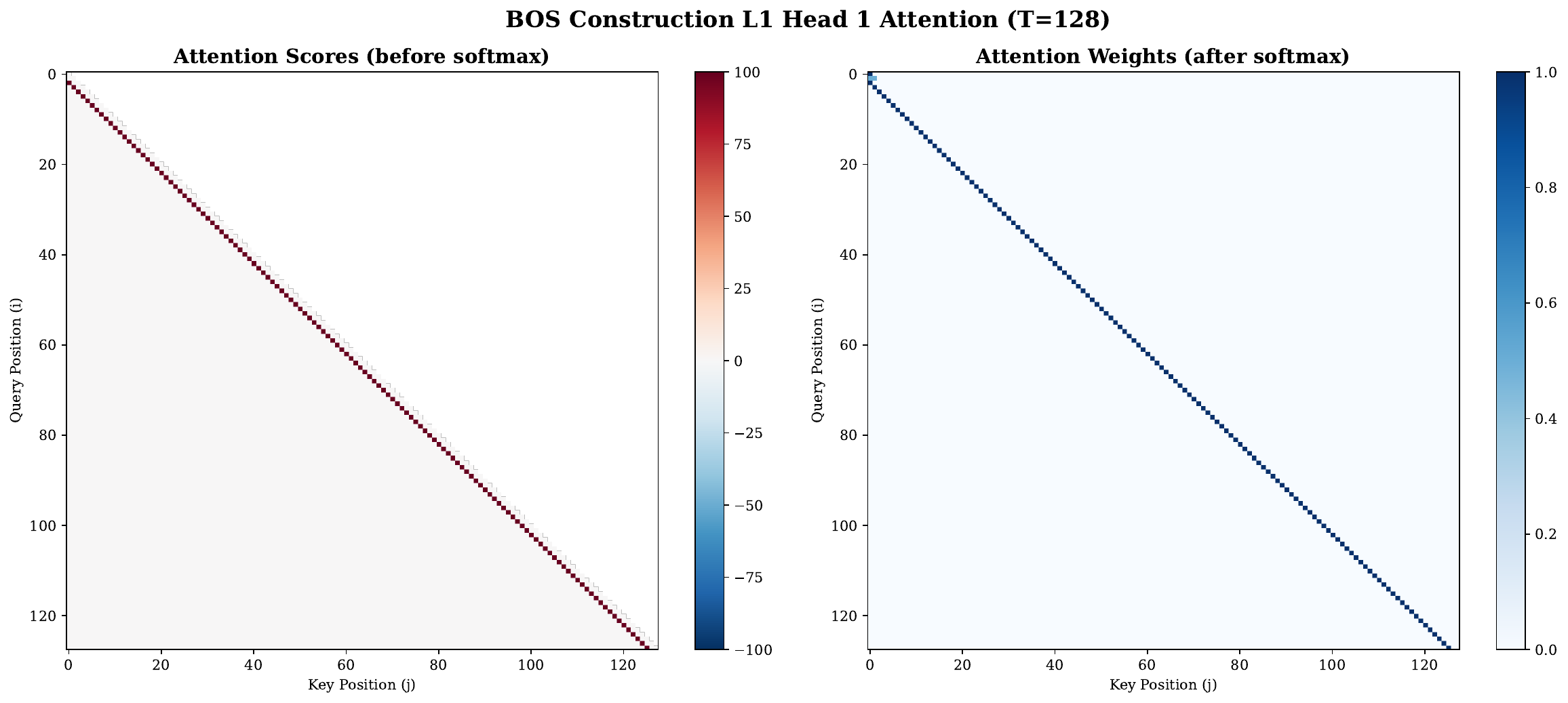}
        \caption{BOS Construction --- Head 1 (lag 2)}
    \end{subfigure}
    \hfill
    \begin{subfigure}[t]{0.48\textwidth}
        \centering
        \includegraphics[width=\textwidth]{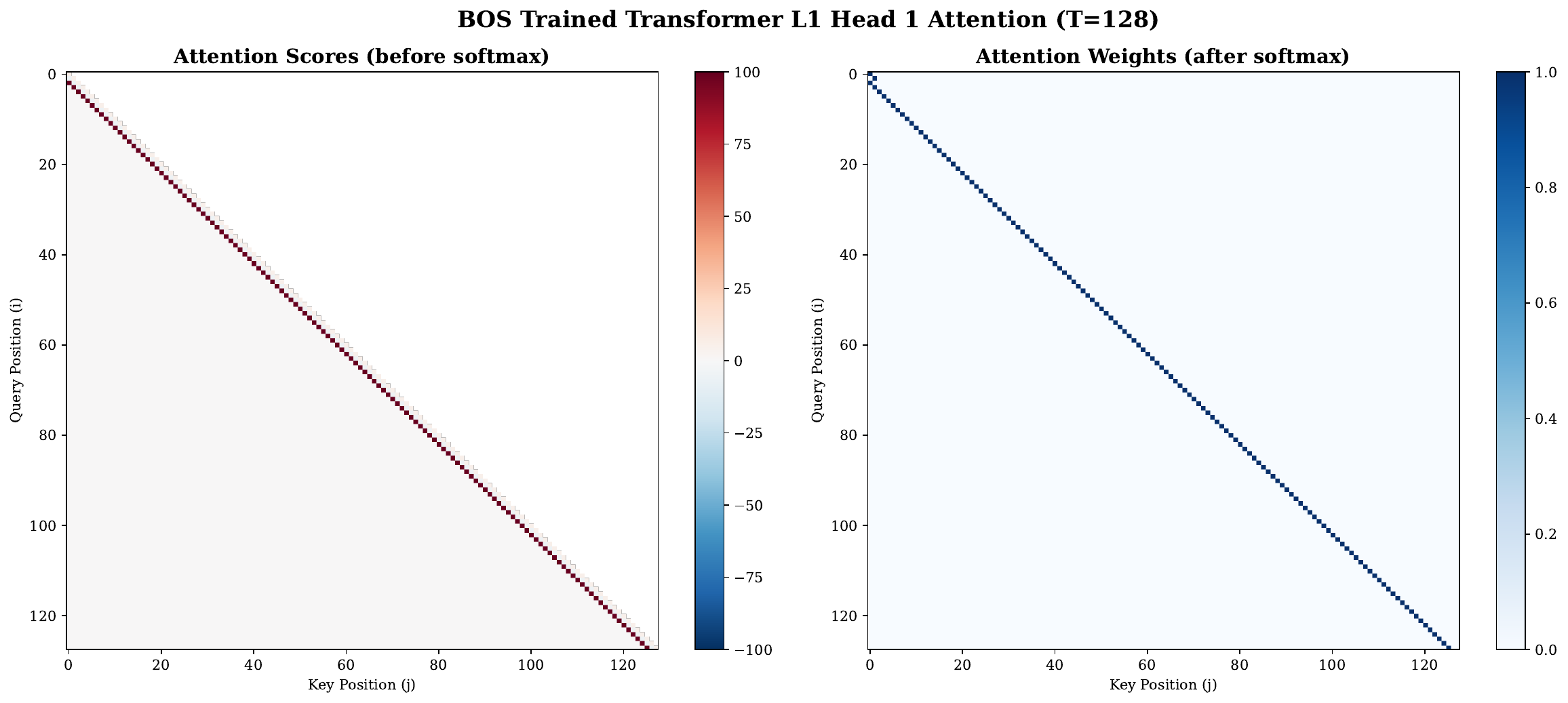}
        \caption{BOS Trained --- Head 1 (lag 2)}
    \end{subfigure}
    \\[0.5em]
    \begin{subfigure}[t]{0.48\textwidth}
        \centering
        \includegraphics[width=\textwidth]{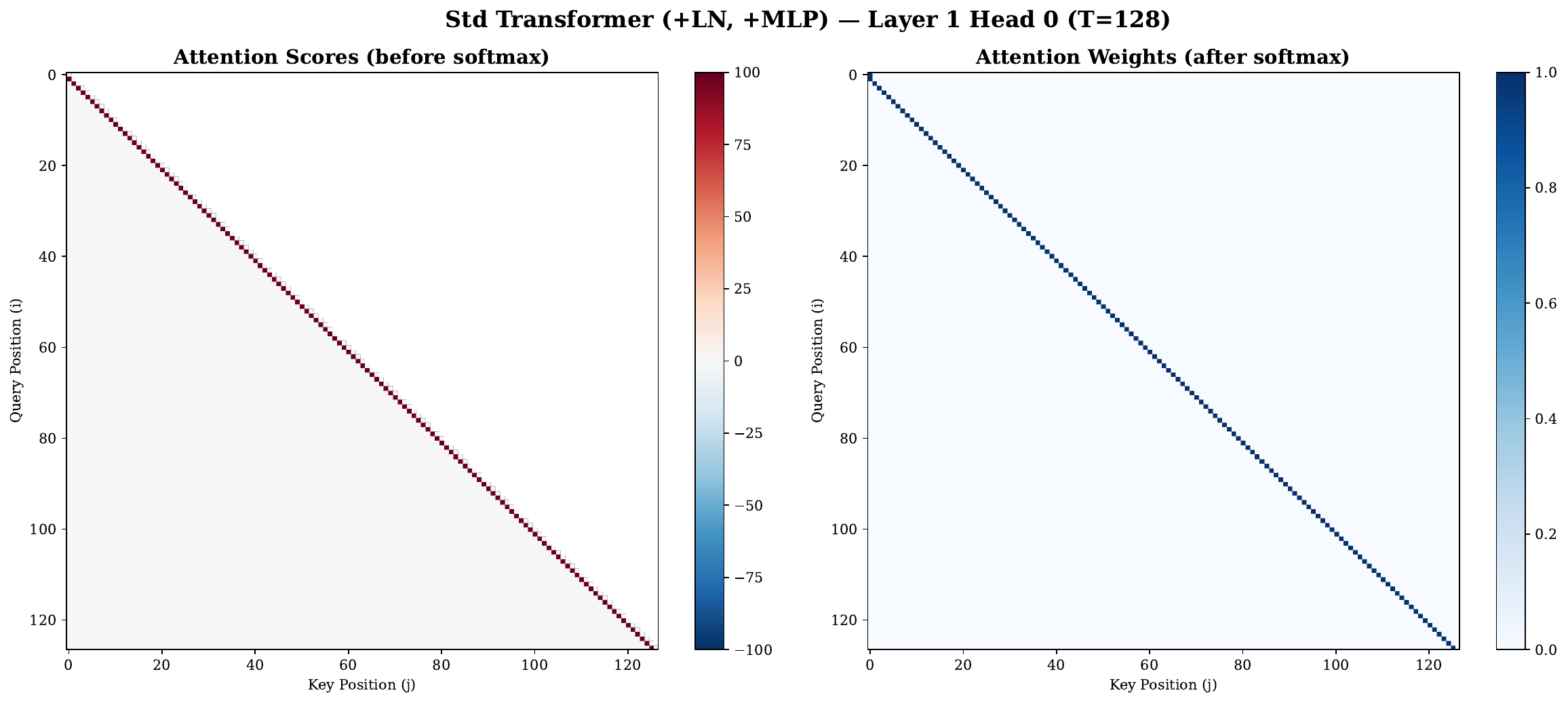}
        \caption{Standard Transformer --- Head 0 (lag 1)}
    \end{subfigure}
    \hfill
    \begin{subfigure}[t]{0.48\textwidth}
        \centering
        \includegraphics[width=\textwidth]{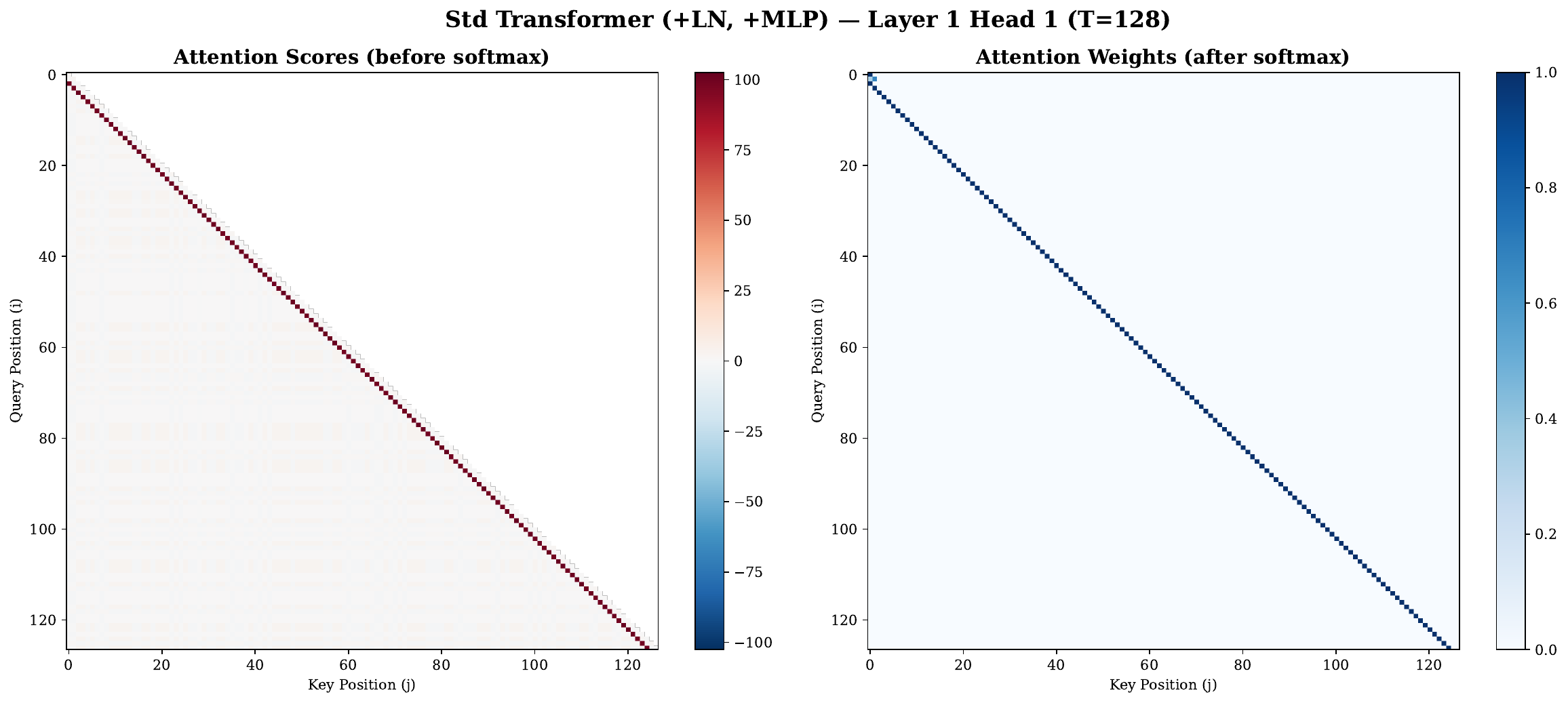}
        \caption{Standard Transformer --- Head 1 (lag 2)}
    \end{subfigure}
    \caption{Layer~1 copy head attention at $T=128$ for the hierarchical Dirichlet task. Head~0 copies the token at lag~1 and Head~1 at lag~2. All three models recover the same sharp positional attention.}
    \label{fig:hier_layer1_attn}
\end{figure}

\begin{figure}[H]
    \centering
    \begin{subfigure}[t]{\textwidth}
        \centering
        \includegraphics[width=\textwidth]{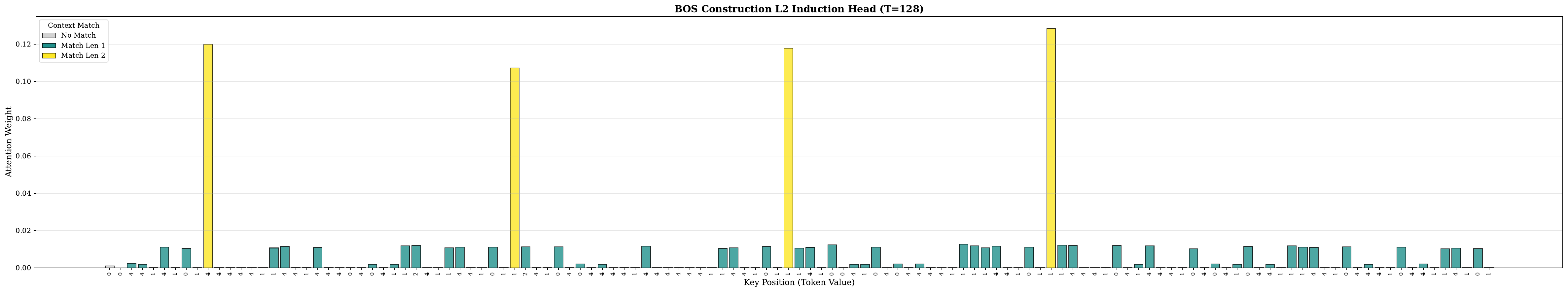}
        \caption{BOS Construction (disentangled)}
    \end{subfigure}
    \\[0.5em]
    \begin{subfigure}[t]{\textwidth}
        \centering
        \includegraphics[width=\textwidth]{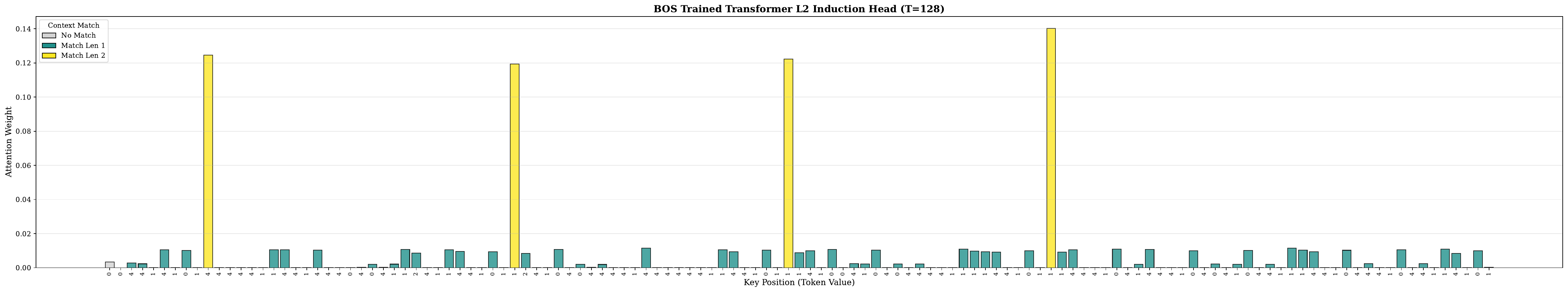}
        \caption{BOS Trained Transformer (disentangled)}
    \end{subfigure}
    \\[0.5em]
    \begin{subfigure}[t]{\textwidth}
        \centering
        \includegraphics[width=\textwidth]{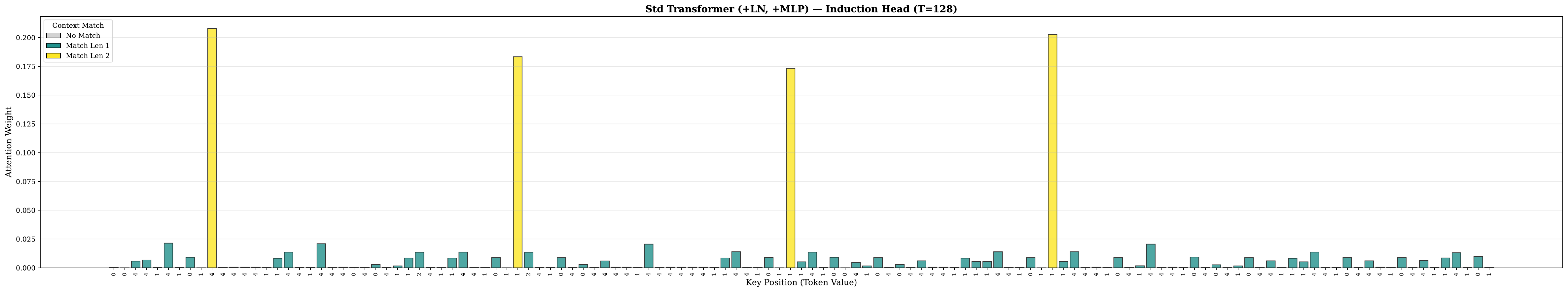}
        \caption{Standard Transformer (+LN, +MLP)}
    \end{subfigure}
    \caption{Induction head analysis at $T=128$ for the hierarchical Dirichlet task: attention weights from the last query position, colored by degree of context match. All three models attend to full matches as well as partial matches, consistent with the interpolation across context orders predicted by the theory.}
    \label{fig:hier_induction}
\end{figure}

\paragraph{Disentangled vs.\ standard architecture.}
The standard-transformer panels above (\Cref{fig:hier_layer2_attn}(c), \Cref{fig:hier_layer1_attn}(e,f), and \Cref{fig:hier_induction}(c)) reproduce the same two-stage mechanism, so it is not an artifact of the disentangled parameterization. The disentangled model merely exposes the circuit directly through one-hot embeddings and concatenated residual streams; a standard transformer (learned embeddings, $d_{\text{model}}{=}64$, LayerNorm, MLP) realizes the same computation through its value matrix for copying and its query--key dot product for context matching.

\FloatBarrier
\clearpage

\section{Proofs for Smoothing and Interpolation}
\label{app:interpolation}

This appendix contains the proofs of the results in \Cref{sec:smoothing}: the add-$\alpha$ smoothing corollary (\Cref{prop:add_constant}), the sub-$n$-gram interpolation lemma (\Cref{lemma:transformer_estimator}), the mixture corollary (\Cref{corollary:mixture}), and the optimal $\beta$ lemma (\Cref{lemma:optimal_beta}).

\subsection{Proof of \Cref{prop:add_constant}}

\addconstant*
\begin{proof}
Starting from the estimator in \Cref{eq:transformer_estimator} with \(\bm{\beta} = \beta \bm{1}_k\) and \(\kappa = k\beta + \ln |\bm{\alpha}|_1 = k\beta + \ln(\alpha |V|)\):
\begin{equation*}
    \mathcal{T}(\bm{x})(m) = \frac{e^{\kappa} / |V| + \displaystyle\sum_{M \subseteq [k]} e^{\beta |M|}\, N_M^{(T)}(m)}{e^{\kappa} + \displaystyle\sum_{M \subseteq [k]} e^{\beta |M|}\, N_M^{(T)}}\,.
\end{equation*}
Since \(e^{\kappa} = e^{k\beta} \cdot \alpha |V|\), the BOS numerator term is \(e^{k\beta} \cdot \alpha |V| / |V| = e^{k\beta}\alpha\).

In the sum over \(M \subseteq [k]\), each term carries weight \(e^{\beta |M|}\).
As \(\beta \to \infty\), the dominant contribution comes from \(M = [k]\) (exact matches), which has weight \(e^{k\beta}\). All terms with \(|M| < k\) are \(o(e^{k\beta})\):
\begin{equation*}
    \sum_{M \subseteq [k]} e^{\beta |M|}\, N_M^{(T)}(m) = e^{k\beta}\, N_{[k]}^{(T)}(m) + o(e^{k\beta})\,.
\end{equation*}
Since \(N_{[k]}^{(T)}(m) = N_{u_T}^{(T)}(m)\) (exact context match counts), we obtain:
\begin{align*}
    \text{numerator} &= e^{k\beta}\,\alpha + e^{k\beta}\, N_{u_T}^{(T)}(m) + o(e^{k\beta})\,, \\
    \text{denominator} &= e^{k\beta}\,\alpha |V| + e^{k\beta}\, N_{u_T}^{(T)} + o(e^{k\beta})\,.
\end{align*}
Dividing both by \(e^{k\beta}\) and taking \(\beta \to \infty\):
\begin{equation*}
    \lim_{\beta \to \infty} \mathcal{T}(\bm{x})(m) = \frac{\alpha + N_{u_T}^{(T)}(m)}{\alpha |V| + N_{u_T}^{(T)}}\,.
\end{equation*}
\end{proof}

\subsection{Proofs of \Cref{lemma:transformer_estimator,corollary:mixture,lemma:optimal_beta}}

Recall that \(K_{M}^{(t)}\) counts the number of times the indices in \(M\) are matching, whereas, \(N_{M}^{(t)}\) counts if \emph{only} the indices in \(M\) are matching.
These two quantities are related by the following identity:
\begin{equation*}
    K_{M}^{(t)}(m) = \sum_{[k] \supseteq S \supseteq M} N_{S}^{(t)}(m)\,.
\end{equation*}
Based on the Principle of Inclusion-Exclusion, we can revert the identity as follows:
\begin{equation*}
    N_{M}^{(t)}(m) = \sum_{[k] \supseteq S \supseteq M} (-1)^{|S|-|M|} K_{S}^{(t)}(m)\,.
\end{equation*}
Using these relationships, we prove \Cref{lemma:transformer_estimator}.

\subgrams*
\begin{proof}
    Let \(\gP(S)\) denote the power set of a given set \(S\):
    \begin{equation*}
        \gP(S) \coloneqq \{\forall S': S' \subseteq S\}\,.
    \end{equation*}
    Let \(\gP(S, i)\) denote the subset of \(\gP(S)\) with a fixed cardinality \(i \in \sN\):
    \begin{equation*}
        \gP(S, i) \coloneqq \{\forall S': S' \subseteq S, |S'| = i\}\,.
    \end{equation*}

    By using this notation, we write the numerator in \Cref{eq:transformer_estimator}:
    \begin{equation*}
        \begin{split}
            \sum_{M \subseteq [k]} e^{\beta |M|} N_M^{(t)}(m) &= \sum_{i=0}^{k} e^{\beta i} \sum_{M \in \gP([k], i)} \, N_M^{(t)}(m) \\
            &= \sum_{i=0}^{k} e^{\beta i} \sum_{M \in \gP([k], i)} \sum_{[k] \supseteq S \supseteq M} (-1)^{|S|-i} K_{S}^{(t)}(m) \\
            &= \sum_{S \subseteq [k]} K_S^{(t)}(m) \left(\sum_{i=0}^{|S|} e^{\beta i} \sum_{M \in \gP(S, i)} (-1)^{|S|-i}\right) \\
            &= \sum_{S \subseteq [k]} K_S^{(t)}(m) \left(\sum_{i=0}^{|S|} (-1)^{|S|-i} \binom{|S|}{i} e^{\beta i} \right) \\
            &= \sum_{S \subseteq [k]} \left(e^{\beta} - 1\right)^{|S|} K_S^{(t)}(m)\,,
        \end{split}
    \end{equation*}
    where the last step applies the binomial theorem; the resulting identity remains valid at \(\beta = 0\) under the convention \(0^{0} \coloneqq 1\).
    Now, the estimator in \Cref{eq:transformer_estimator} can be rewritten with \(\gamma = e^{\beta} - 1\).
\end{proof}

\mixture*
\begin{proof}
    The form in \Cref{lemma:transformer_estimator} is a mixture of
    \begin{equation*}
        \hat{q}_i^{(T)}(m) \coloneqq \frac{\displaystyle\sum_{M \in \gP([k], i)}  \, K_M^{(T)}(m)}{\displaystyle\sum_{M \in \gP([k], i)} \, K_M^{(T)}}\,,
    \end{equation*}
    with weights \(\lambda_i^{(T)}\) which are defined as follows
    \begin{equation*}
        \lambda_i^{(T)} \coloneqq \frac{\gamma^i K_i^{(T)}}{\displaystyle\sum_{i=0}^k \gamma^i K_i^{(T)}}\,, \enspace \text{for} \enspace K_i^{(T)} \coloneqq \displaystyle\sum_{M \in \gP([k], i)} \, K_M^{(T)}\,.
    \end{equation*}
    Moreover, \(\hat{q}_i^{(T)}(m)\) is itself a mixture of
    \begin{equation*}
        \hat{q}_M^{(T)}(m) \coloneqq \dfrac{K_M^{(T)}(m)}{K_M^{(T)}}\,, \enspace \text{with weights} \enspace \lambda_M^{(T)} \coloneqq \frac{K_M^{(T)}}{K_i^{(T)}}\,.
    \end{equation*}
\end{proof}

\subsection{Approximation to Optimal \(\beta\).}

This section approximates the optimal choice of \(\beta\) in \Cref{eq:transformer_estimator} for Markov chains with a uniform Dirichlet prior.
Recall that we have defined \(\gamma = e^{\beta} - 1\) in \Cref{app:interpolation}.
We first derive an approximation of \(\gamma\) and convert it to an approximation for \(\beta\).

As explained in \Cref{sec:in-context}, the Bayes-optimal predictor is
\begin{equation}
    \label{eq:bayes_predictor}
    \frac{N_{u_T}(m) + \alpha}{N_{u_T} + |V| \alpha}\,.
\end{equation}
where \(N_{u_T} = \sum_{j \in V} N_{u_T}(j)\) denotes the count of full query matches given context \(u_T\).
Consequently, \Cref{eq:bayes_predictor} minimizes the risk within the parametric family defined by:
\begin{equation*}
    \left\{\frac{N_{u_T}(m) + a}{N_{u_T} + |V| a}: a > 0\right\}\,.
\end{equation*}
Differentiating the loss with respect to the parameter \(\{a\}\) and evaluating at the optimum \(a = \alpha\) yields the first-order condition
\begin{equation*}
    \E\left[- \dfrac{1}{N_{u_T}(m) + \alpha} + \dfrac{|V|}{N_{u_T} + |V| \alpha}\right] = 0\,.
\end{equation*}

We have shown that \Cref{eq:transformer_estimator} can be rewritten as \Cref{eq:transformer_estimator_2}. Dividing the numerator and denominator by \(\gamma^k\) and separating the exact-match term \(M = [k]\) gives
\begin{equation*}
    \gT(\bm{x})(m) = \frac{K_{[k]}^{(T)}(m) + \displaystyle\sum_{M \subset [k]} \gamma^{|M|-k} \, K_M^{(T)}(m)}{K_{[k]}^{(T)} + \displaystyle\sum_{M \subset [k]} \gamma^{|M|-k} \, K_M^{(T)}}\,.
\end{equation*}
Taking the gradient with respect to \(\gamma\) and setting it to \(0\):
\begin{equation}\label{eq:transformer_gradient}
    \E\left[- \frac{\displaystyle\sum_{M \subset [k]} (|M|-k) \gamma^{|M|-k-1}  \, K_M^{(T)}(m)}{K_{[k]}^{(T)}(m) + \displaystyle\sum_{M \subset [k]} \gamma^{|M|-k} \, K_M^{(T)}(m)} + \frac{\displaystyle\sum_{M \subset [k]} (|M|-k) \gamma^{|M|-k-1}  \, K_M^{(T)}}{K_{[k]}^{(T)} + \displaystyle\sum_{M \subset [k]} \gamma^{|M|-k} \, K_M^{(T)}}\right] = 0\,.
\end{equation}

\paragraph{Taylor Expansion.}
We approximately solve this equation for \(\gamma\) by the following Taylor approximation.
\begin{lemma}[First-Order Approximation for Ratios]
    \label{lemma:first_order}
    Let \(X\) and \(Y\) be random variables with means \(\mu_X, \mu_Y\).
    The first-order Taylor approximation for the expectation of the function \(f(X, Y) = \frac{X}{c + Y}\) around the point \((\mu_X, \mu_Y)\) is given by:
    \begin{equation}
        \label{eq:ratio_approximation}
        \E\left[\frac{X}{c + Y}\right] \approx \frac{\mu_X}{c + \mu_Y}\,.
    \end{equation}
\end{lemma}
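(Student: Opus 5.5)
The claim in \Cref{lemma:first_order} is the delta-method statement for a smooth function of two random variables, so I will prove it by Taylor expanding \(f(x,y) = x/(c+y)\) to first order around \((\mu_X,\mu_Y)\) and then taking expectations. Throughout I assume \(c + \mu_Y > 0\) and that \(c+Y\) stays bounded away from zero on the support of \(Y\). Both hold in our use case, where \(c\) is a nonnegative count-derived constant plus a positive pseudo-count term and \(Y\) is a nonnegative count. These assumptions make \(f\) continuously differentiable on a convex region containing all realizations.

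\textbf{Step 1: expansion.} The partial derivatives are \(\partial_x f = 1/(c+y)\) and \(\partial_y f = -x/(c+y)^2\). Writing \(\Delta_X = X-\mu_X\) and \(\Delta_Y = Y-\mu_Y\), this gives
\begin{equation*}
    f(X,Y) = \frac{\mu_X}{c+\mu_Y} + \frac{\Delta_X}{c+\mu_Y} - \frac{\mu_X\,\Delta_Y}{(c+\mu_Y)^2} + R(X,Y)\,.
\end{equation*}
The remainder \(R\) is quadratic in \((\Delta_X,\Delta_Y)\). Its Lagrange form involves \(\partial_{xy}f = -1/(c+y)^2\) and \(\partial_{yy}f = 2x/(c+y)^3\), evaluated at an intermediate point.

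\textbf{Step 2: expectations.} Since \(\E[\Delta_X] = \E[\Delta_Y] = 0\), both linear terms vanish in expectation. Dropping \(\E[R]\) then yields \cref{eq:ratio_approximation}. This is exactly what \emph{first-order approximation} means here, so the identity holds up to the neglected second-order term.

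\textbf{Step 3: size of the error.} I would make the approximation quantitative by bounding \(\E[R]\). The second-order terms give
\begin{equation*}
    \E[R] \approx -\frac{\mathrm{Cov}(X,Y)}{(c+\mu_Y)^2} + \frac{\mu_X\,\mathrm{Var}(Y)}{(c+\mu_Y)^3}\,.
\end{equation*}
For count variables in the Markov setting, the means grow linearly in \(T\) and the variances at most linearly. The error is therefore \(O(1/T)\) relative to the main term, which scales like \(\Theta(1)\).

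The main obstacle is Step 3. Bounding the Lagrange remainder requires controlling \(1/(c+\xi_Y)^3\) at a random intermediate point \(\xi_Y\). I would handle this with the uniform lower bound on \(c+Y\) assumed above, or by localizing on a high-probability event where \(Y\) is near \(\mu_Y\) and applying a crude bound on its complement. Since the lemma is stated only as an approximation \(\approx\), Steps 1 and 2 suffice to justify the statement, and Step 3 serves as supporting commentary.
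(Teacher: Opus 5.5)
Your Steps 1--2 are exactly the paper's proof: expand $f(x,y)=x/(c+y)$ to first order around $(\mu_X,\mu_Y)$, note that the linear term has zero mean, and drop the remainder, so the argument is correct and matches the paper. Step 3 goes beyond the paper and is not needed, but its $O(1/T)$ claim holds at best for a fixed kernel $\bm{\pi}$: the paper's expectations also average over the prior, and under that average the count variances grow like $T^2$ (the long-run symbol frequencies depend on the sampled $\bm{\pi}$), so the second-order term $\mu_X\mathrm{Var}(Y)/(c+\mu_Y)^3$ need not be small relative to $\mu_X/(c+\mu_Y)$.
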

\begin{proof}
    Consider the function \(f(x, y) = x(c + y)^{-1}\).
    We perform a multivariate Taylor expansion of \(f(x, y)\) around the mean vector \(\bm{\mu} = (\mu_X, \mu_Y)\).
    The first-order expansion is:
    \begin{equation*}
        f(x, y) \approx f(\bm{\mu}) + (\nabla f(\bm{\mu}))^\top (\mathbf{z} - \bm{\mu})\,,
    \end{equation*}
    where \(\mathbf{z} = (x, y)^\top\).
    Taking the expectation of the Taylor expansion eliminates the first-order term as \(\E[\mathbf{z} - \bm{\mu}] = 0\). The expectation of the linear form yields the result.
\end{proof}

Let \(A^{(T)}(m), B^{(T)}(m), C^{(T)}, D^{(T)}\) denote the following random variables
\begin{equation*}
    \begin{split}
        A^{(T)}(m) &\coloneqq \displaystyle\sum_{M \subset [k]} (|M|-k) \gamma^{|M|-k-1}  \, K_M^{(T)}(m)\,, \\
        B^{(T)}(m) &\coloneqq \displaystyle\sum_{M \subset [k]} \gamma^{|M|-k} \, K_M^{(T)}(m)\,, \\
        C^{(T)} &\coloneqq \sum_{m \in V} A^{(T)}(m)\,, \\
        D^{(T)} &\coloneqq \sum_{m \in V} B^{(T)}(m)\,.
    \end{split}
\end{equation*}

We apply \Cref{lemma:first_order} to approximate \Cref{eq:transformer_gradient} around the means of \(A^{(T)}(m), B^{(T)}(m), C^{(T)}, D^{(T)}\) conditioned on \(K_{[k]}^{(T)}(m)\) and \(K_{[k]}^{(T)}\):
\begin{equation*}
    - \frac{A^{(T)}(m)}{K_{[k]}^{(T)}(m) + B^{(T)}(m)} + \frac{C^{(T)}}{K_{[k]}^{(T)} + D^{(T)}} \approx - \frac{\E\left[A^{(T)}(m) \mid K_{[k]}^{(T)}(m)\right]}{K_{[k]}^{(T)}(m) + \E\left[B^{(T)}(m) \mid K_{[k]}^{(T)}(m)\right]} + \frac{\E\left[C^{(T)} \mid K_{[k]}^{(T)}\right]}{K_{[k]}^{(T)} + \E\left[D^{(T)} \mid K_{[k]}^{(T)}\right]}\,.
\end{equation*}
Lastly, as \(K_{[k]}^{(T)}(m)\) and \(K_{[k]}^{(T)}\) are informative only for exact \(k\)-matches and \(A^{(T)}(m), B^{(T)}(m), C^{(T)}, D^{(T)}\) are measuring sub-\(k\)-matches, we assume the following concentrations around the population means:
\begin{equation*}
    \begin{split}
        \E\left[A^{(T)}(m) \mid K_{[k]}^{(T)}(m)\right] &\approx \E\left[A^{(T)}(m)\right]\,, \quad \E\left[C^{(T)} \mid K_{[k]}^{(T)}\right] \approx \E\left[C^{(T)}\right]\,, \\
        \E\left[B^{(T)}(m) \mid K_{[k]}^{(T)}(m)\right] &\approx \E\left[B^{(T)}(m)\right]\,, \quad \E\left[D^{(T)} \mid K_{[k]}^{(T)}\right] \approx \E\left[D^{(T)}\right]\,.
    \end{split}
\end{equation*}

To approximate the optimal value of \(\gamma\), we solve the following equation:
\begin{equation}
    \label{eq:approx_gradient_condition}
    \E\left[- \frac{\E\left[A^{(T)}(m)\right]}{K_{[k]}^{(T)}(m) + \E\left[B^{(T)}(m)\right]} + \frac{\E\left[C^{(T)}\right]}{K_{[k]}^{(T)} + \E\left[D^{(T)}\right]}\right] = 0\,.
\end{equation}
Observe that \(K_{[k]}^{(T)}(m)\) is precisely \(N_{u_T}(m)\) and \(K_{[k]}^{(T)}\) is equal to \(\sum_{j \in V} N_{u_T}(j)\).
Consequently, it is sufficient to satisfy the following identities for an arbitrary constant \(c\):
\begin{equation}\label{eq:sufficient_conditions}
    \E\left[A^{(T)}(m)\right] = c\,, \quad \E\left[B^{(T)}(m)\right] = \alpha\,, \quad \E\left[C^{(T)}\right] = c |V|\,, \quad \E\left[D^{(T)}\right] = \alpha |V|\,.
\end{equation}
Given the symmetry of the prior and the model, the latter two conditions are satisfied provided the first two hold, and vice versa.
As \(c\) is arbitrary, we just need to verify \(\E\left[B^{(T)}(m)\right] = \alpha\) or \(\E\left[D^{(T)}\right] = \alpha |V|\).

\paragraph{First-order \(k=1\).}

Unfortunately, \(\E\left[K_M^{(T)}(m)\right]\) is not tractable for high-order Markov chains.
For \(k=1\), we can solve \Cref{eq:approx_gradient_condition} as follows.
As the prior is uniform over all elements in \(V\), we have\footnote{We count the \(T-k-1\) candidate positions \(s \in [k', T-1]\) that strictly precede the query, each contributing \(\Pr(x_s = m) = 1/|V|\); this is the candidate range of \Cref{def:occurrences} excluding the query position \(T\) itself.}
\begin{equation*}
    \E\left[K_{\emptyset}^{(T)}(m)\right] = \dfrac{T-k-1}{|V|}\,,
\end{equation*}
which leads to the following:
\begin{equation*}
    \E\left[B^{(T)}(m)\right] = \gamma^{-1} \dfrac{T-k-1}{|V|}\,.
\end{equation*}
This is satisfied by
\begin{equation*}
    \gamma = \dfrac{T-k-1}{\alpha|V|}\,, \quad \text{or} \quad \beta = \ln \left(1 + \dfrac{(T-k-1)}{\alpha |V|}\right) \,.
\end{equation*}

\paragraph{Higher-orders \(k > 1\).}
For higher-orders, we note that
\begin{equation*}
    \E\left[D^{(T)} + K_{[k]}^{(T)}\right] = \gamma^{-k} \E\left[Z(\gamma)\right]\,, \quad \text{for} \quad Z(\gamma) = \sum_{M \subseteq [k]} \gamma^{|M|} \, K_M^{(T)}\,.
\end{equation*}
\(Z(\gamma)\) is exactly the normalization constant in \Cref{eq:transformer_estimator}.
To satisfy \Cref{eq:sufficient_conditions}, we need
\begin{equation}\label{eq:sufficient_condition_higher}
    \gamma^{-k} \E\left[Z(\gamma)\right] = \left(\E\left[K_{[k]}^{(T)}\right] + \alpha |V|\right)\,.
\end{equation}
The right-hand side is independent on the choice of \(\gamma\) and the left-hand side is monotonically decreasing in \(\gamma\).
Therefore, the value that solves \Cref{eq:sufficient_conditions} can be found empirically in two ways.
First, it is possible to identify the value of \(\gamma\) by a simple line search.
Alternatively, it is possible to estimate \(\E\left[K_M^{(T)}\right]\) and solve the roots of the polynomial in \Cref{eq:sufficient_condition_higher}.

To get a closed-form, we use the following approximation:
\begin{equation}
    \label{eq:approx_expectation}
    \E\left[K_M^{(T)}\right] \approx \dfrac{T-k-1}{|V|^{|M|}}\,.
\end{equation}
This is derived by assuming that each length-\(k\) context has the same marginal probability and the contexts at different positions are independent of each other.
Then, \Cref{eq:sufficient_condition_higher} yields
\begin{equation*}
    \gamma^{-k} \approx \dfrac{\dfrac{T-k-1}{|V|^k} + \alpha |V|}{\left(T-k-1\right)\left(\dfrac{\gamma}{|V|} + 1\right)^k}\,.
\end{equation*}
Solving for \(\gamma\), we obtain:
\begin{equation*}
    \gamma \approx \dfrac{|V|}{\sqrt[k]{1 + \dfrac{\alpha |V|^{k+1}}{T-k-1}}-1}\,.
\end{equation*}

Note that these approximations are valid when the random variables \(A^{(T)}(m), B^{(T)}(m), C^{(T)}, D^{(T)}\) have well-concentrated around their mean and the second-order error term omitted in \Cref{eq:ratio_approximation} is small.
Moreover, for higher-order approximations, \Cref{eq:approx_expectation} is a coarse approximation.

\paragraph{Pseudo-count view.}
Our approach above is equivalent to assuming that \(\tilde{\alpha}_m^{(T)}\) concentrates around its expected value, \(f(\beta) \coloneqq \E\left[\tilde{\alpha}_m^{(T)}\right]\) and approximating the function \(f\) by simplifying the distributional properties of the data.
That is, we have chosen \(\beta\) such that \(\E\left[\tilde{\alpha}_m^{(T)}\right] \approx \alpha\).
Thus, we expect our approach to work well whenever the sequences are long enough so that \Cref{eq:transformer_bayes_form} matches \Cref{eq:bayes_optimal_markov}.

\clearpage
\clearpage
\section{Label-Permutation Symmetry}
\label{app:symmetry}

This appendix observes a symmetry property of the in-context Markov chain tasks in \Cref{sec:in-context} and explains its implication for disentangled transformers trained by gradient flow.
For label-symmetric tasks and symmetric initialization, gradient flow preserves label-permutation invariance.
In the two-layer copy-and-match setup, this invariance constrains the content-based comparisons to equality tests between copied tokens, matching the basic comparison used by the soft context-matching construction in \Cref{prop:transformer_estimator}.
Thus, the result characterizes the constraint imposed by symmetry, but does not identify which symmetry-compatible parameters are selected by training or optimal for prediction.

\subsection{Definitions}

Let \(\mathbb{P}\) be a distribution over sequences \(\bm{x} \in V^T\).
\begin{definition}[Label symmetry]
    We say that \(\mathbb{P}\) has \emph{label symmetry} if for any permutation \(\sigma: V \to V\),
    \begin{equation}\label{eq:label_symmetry}
        \mathbb{P}\left(x_1, \ldots, x_{T}\right)
        =
        \mathbb{P}\left(\sigma(x_1), \ldots, \sigma(x_T)\right).
    \end{equation}
\end{definition}
Label symmetry states that the distribution depends on the equality pattern among tokens rather than on the token identities themselves.

\begin{definition}[Label-permutation invariant predictor]
    For a predictor \(\gT\) and a permutation \(\sigma: V \to V\), define
    \begin{equation}\label{eq:permuted_transformer}
        \gT^{(\sigma)}(x_1, \ldots, x_T)
        \coloneqq
        \sigma^{-1}\left(\gT(\sigma(x_1), \ldots, \sigma(x_T))\right).
    \end{equation}
    We say that \(\gT\) is \emph{label-permutation invariant} if \(\gT^{(\sigma)} = \gT\) for every permutation \(\sigma: V \to V\).
    We denote this class by
    \begin{equation*}
        \gM \coloneqq \left\{\gT: \gT^{(\sigma)} = \gT \,, \enspace \forall \sigma: V \to V\right\}.
    \end{equation*}
\end{definition}

\subsection{Markov Chains Have Label Symmetry}

Let \(\bm{\pi}\) be a kernel for an order-\(k\) Markov chain and let \(\sigma: V \to V\) be a permutation.
Define the permuted kernel \(\bm{\pi}^{(\sigma)}\) by
\begin{equation*}
    \bm{\pi}^{(\sigma)}_{(c_1,\ldots,c_k)}(m)
    \coloneqq
    \bm{\pi}_{(\sigma(c_1),\ldots,\sigma(c_k))}(\sigma(m)),
    \quad \forall (c_1,\ldots,c_k) \in V^k,\ m \in V.
\end{equation*}
Given a set \(S\) of Markov kernels, write \(S^{(\sigma)}\) for the corresponding set of permuted kernels.
Assume that the prior over kernels is invariant under label permutations:
\begin{equation}
    \label{eq:prior_symmetry}
    \sP\left(\bm{\pi} \in S\right) = \sP\left(\bm{\pi} \in S^{(\sigma)}\right)
    \quad \text{for all measurable } S.
\end{equation}
Assume also that the initial context distribution \(\rho\) over \(x_{1:k}\) is label-symmetric:
\begin{equation}
    \label{eq:init_symmetry}
    \rho(x_1,\ldots,x_k)
    =
    \rho(\sigma(x_1),\ldots,\sigma(x_k)).
\end{equation}
This holds for the uniform initialization used in \Cref{sec:in-context}.
Then the induced sequence distribution satisfies label symmetry:
\begin{equation*}
    \begin{split}
        \sP\left(x_{1}, \ldots, x_T\right)
        &= \E_{\bm{\pi}} \sP\left(x_{1}, \ldots, x_T \mid \bm{\pi}\right) \\
        &= \E_{\bm{\pi}} \sP\left(x_{1}, \ldots, x_T \mid \bm{\pi}^{(\sigma)}\right) \\
        &= \E_{\bm{\pi}} \sP\left(\sigma(x_{1}), \ldots, \sigma(x_T) \mid \bm{\pi}\right) \\
        &= \sP\left(\sigma(x_{1}), \ldots, \sigma(x_T)\right).
    \end{split}
\end{equation*}
The first equality marginalizes over the sampled transition kernel.
The second equality uses the prior symmetry in \Cref{eq:prior_symmetry}: averaging over \(\bm{\pi}\) is the same as averaging over \(\bm{\pi}^{(\sigma)}\).
For the third equality, expanding the conditional probability gives:
\begin{equation*}
    \sP(x_1,\ldots,x_T \mid \bm{\pi}^{(\sigma)})
    =
    \rho(x_1,\ldots,x_k)
    \prod_{t=k+1}^{T}
    \bm{\pi}^{(\sigma)}_{(x_{t-k},\ldots,x_{t-1})}(x_t).
\end{equation*}
Using \Cref{eq:init_symmetry} and the definition of \(\bm{\pi}^{(\sigma)}\), this becomes
\begin{equation*}
    \rho(\sigma(x_1),\ldots,\sigma(x_k))
    \prod_{t=k+1}^{T}
    \bm{\pi}_{(\sigma(x_{t-k}),\ldots,\sigma(x_{t-1}))}(\sigma(x_t))
    =
    \sP(\sigma(x_1),\ldots,\sigma(x_T)\mid \bm{\pi}).
\end{equation*}
The final equality marginalizes over \(\bm{\pi}\) again.

The independent Dirichlet prior with symmetric concentration \(\mathrm{Dirichlet}(\alpha,\ldots,\alpha)\) satisfies \Cref{eq:prior_symmetry}.
Indeed, its density
\begin{equation*}
    f(\mathbf{p}) = \frac{1}{Z(\alpha)} \prod_{i=1}^{|V|} p_i^{\alpha - 1}
\end{equation*}
is invariant under coordinate permutations.
The joint density of an independently sampled Markov kernel is a product of such row densities, and a label permutation only permutes the coordinates within each row and the rows within the context space \(V^k\).
The hierarchical Dirichlet prior in \Cref{sec:in-context} satisfies the same condition when each level uses symmetric concentration parameters: the base distribution is permutation invariant, and the parent-child relation between contexts is preserved by relabeling.

\subsection{Gradient Flow Preserves Label-Permutation Invariance}

We first define the corresponding relabeling operation on the parameters of the disentangled transformer in \Cref{sec:models}.
Let \(P_\sigma\) be the permutation matrix associated with \(\sigma\).
Since the residual stream is a concatenation of \(|V|\)-dimensional token blocks, let \(B_l^{(\sigma)}\) denote the block-diagonal matrix that applies \(P_\sigma\) to each token block in \(\bm{h}^{(l)}\).
For parameters \(\theta\), define \(\theta^{(\sigma)}\) by
\begin{equation*}
    \bigl(\bm{W}_A^{(l,h)}\bigr)^{(\sigma)}
    =
    \bigl(B_{l-1}^{(\sigma)}\bigr)^\top
    \bm{W}_A^{(l,h)}
    B_{l-1}^{(\sigma)},
    \qquad
    \bm{W}_O^{(\sigma)}
    =
    P_\sigma^\top \bm{W}_O B_L^{(\sigma)}\,.
\end{equation*}
The relative positional encodings are left unchanged, since they are indexed by relative position rather than by token label.
With this definition, relabeling the parameters implements the relabeled predictor:
\begin{equation}
    \label{eq:parameter_relabeling}
    \gT_{\theta^{(\sigma)}} = \gT_{\theta}^{(\sigma)}\,.
\end{equation}
Indeed, if \(\bm{h}_{t,\theta}^{(l)}(\sigma(\bm{x}))\) is the representation produced by \(\theta\) on the relabeled input, then the representation produced by \(\theta^{(\sigma)}\) on \(\bm{x}\) is \(\bigl(B_l^{(\sigma)}\bigr)^\top \bm{h}_{t,\theta}^{(l)}(\sigma(\bm{x}))\).
This holds at the input layer by the definition of one-hot relabeling.
If it holds at layer \(l-1\), the content-based attention scores agree because
\begin{equation*}
    \left(\bigl(B_{l-1}^{(\sigma)}\bigr)^\top q\right)^\top
    \bigl(B_{l-1}^{(\sigma)}\bigr)^\top
    \bm{W}_A^{(l,h)}
    B_{l-1}^{(\sigma)}
    \left(\bigl(B_{l-1}^{(\sigma)}\bigr)^\top k\right)
    =
    q^\top \bm{W}_A^{(l,h)} k\,,
\end{equation*}
and the RPE terms are unchanged.
The values are the previous-layer representations themselves, so the same block-wise relabeling passes through the attention-weighted sum and the residual update.
Finally, the definition of \(\bm{W}_O^{(\sigma)}\) maps the final representation to \(P_\sigma^\top\) times the original logits on the relabeled input, which is exactly the output relabeling in \Cref{eq:permuted_transformer}.

\begin{restatable}[Label-permutation invariance]{theorem}{symmetry}
    \label{theorem:invariance}
    Let \(\mathbb{P}\) be any distribution with label symmetry.
    Let \(\gT_{\theta(0)}\) be a disentangled transformer whose initialization is fixed by every parameter relabeling above: \(\theta(0)^{(\sigma)}=\theta(0)\) for all permutations \(\sigma:V\to V\).
    Equivalently, the initialization satisfies
    \begin{equation*}
        \bm{W}_{A}^{(l,h)}
        =
        \bigl(B_{l-1}^{(\sigma)}\bigr)^\top
        \bm{W}_{A}^{(l,h)}
        B_{l-1}^{(\sigma)}\,,
        \quad \text{for all } l, h,
        \qquad
        \bm{W}_O
        =
        P_\sigma^\top \bm{W}_O B_L^{(\sigma)}\,,
    \end{equation*}
    for every \(\sigma\), with the RPE parameters unrestricted.
    A simple sufficient initialization satisfying both conditions for every label permutation is to set \(\bm{W}_{A}^{(l,h)}(0)=\bm{0}\) for every layer \(l\) and head \(h\), and to set \(\bm{W}_O(0)=\bm{0}\).
    Then the gradient flow \(\theta(r)\) induced by the population next-token prediction loss satisfies
    \begin{equation*}
        \gT_{\theta(r)} \in \gM\,, \quad \text{for all } r > 0.
    \end{equation*}
\end{restatable}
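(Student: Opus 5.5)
The plan is the standard symmetry argument for gradient flow: show that the population loss is invariant under the parameter relabeling $\theta \mapsto \theta^{(\sigma)}$, deduce that the gradient field is equivariant under this linear map, and conclude by uniqueness of ODE solutions that a trajectory started at a fixed point of every relabeling stays fixed. Label-permutation invariance of the predictor then follows from \Cref{eq:parameter_relabeling}.

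\textbf{Step 1: loss invariance.} Write $\gL(\theta) = \E_{\bm{x},x_{T+1}}[-\log \gT_\theta(\bm{x})(x_{T+1})]$, where $(\bm{x},x_{T+1})$ is a length-$(T+1)$ sequence from a label-symmetric $\sP$. By \Cref{eq:parameter_relabeling} and \Cref{eq:permuted_transformer},
\[
\gT_{\theta^{(\sigma)}}(\bm{x})(x_{T+1}) = \gT_\theta(\sigma(\bm{x}))(\sigma(x_{T+1})).
\]
Applying \Cref{eq:label_symmetry} to the full sequence, the change of variables $(\bm{x},x_{T+1}) \mapsto (\sigma(\bm{x}),\sigma(x_{T+1}))$ preserves the law. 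Hence $\gL(\theta^{(\sigma)}) = \gL(\theta)$.

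\textbf{Step 2: gradient equivariance.} The map $R_\sigma: \theta \mapsto \theta^{(\sigma)}$ is linear: it conjugates by permutation matrices and leaves the RPE parameters unchanged. It is also orthogonal with respect to the Frobenius inner product, since $B^{(\sigma)}$ and $P_\sigma$ are orthogonal. Differentiating $\gL \circ R_\sigma = \gL$ gives $R_\sigma^\top \nabla\gL(R_\sigma\theta) = \nabla\gL(\theta)$, that is, $\nabla\gL(R_\sigma\theta) = R_\sigma \nabla\gL(\theta)$.

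\textbf{Step 3: invariance of the trajectory.} If $\theta(r)$ solves $\dot\theta = -\nabla\gL(\theta)$, then $\tilde\theta(r) = R_\sigma\theta(r)$ solves the same ODE. It also starts at $R_\sigma\theta(0) = \theta(0)$. Uniqueness of solutions then gives $R_\sigma\theta(r) = \theta(r)$ for all $r$, and by \Cref{eq:parameter_relabeling} we get $\gT^{(\sigma)}_{\theta(r)} = \gT_{\theta(r)}$ for every $\sigma$, which is the claim. For the sufficient initialization, zero matrices are trivially fixed by conjugation, and the RPE parameters are untouched by $R_\sigma$.

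\textbf{Main obstacle.} The delicate point is regularity for uniqueness in Step 3. I would argue that $\gL$ is smooth (hence locally Lipschitz gradient): softmax and the attention maps are analytic in the parameters, the output is a finite expectation over a finite sequence space, and $-\log\mathrm{softmax}$ is smooth. This needs the readout to be interpreted as logits followed by a softmax, as in the relabeling discussion. If the output is instead read as a raw probability vector, as in \Cref{prop:transformer_estimator}, the logarithm may be undefined. In that case I would either restrict to the domain where the loss is finite or avoid uniqueness altogether: because $\gL$ is $R_\sigma$-invariant, the vector field preserves the fixed subspace $\{\theta : R_\sigma\theta = \theta\ \forall\sigma\}$, so the flow restricted to this subspace is a valid solution, and uniqueness holds wherever the field is locally Lipschitz.
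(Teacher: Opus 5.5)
Your proposal is correct and follows essentially the same route as the paper: loss invariance under the orthogonal relabeling map, equivariance of the gradient, and invariance of the relabeling-fixed set under the flow. Your ODE-uniqueness argument for $R_\sigma\theta(r)$ is a more explicit version of the paper's tangency step. Your remark that $\gL$ is smooth, which requires reading the output as logits followed by a softmax, supplies the local-Lipschitz regularity that the paper's conclusion uses but never states.
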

\begin{proof}
    Let \(\gL\) be the population next-token prediction loss:
    \begin{equation*}
        \gL(\theta)
        =
        \mathbb{E}_{\bm{x} \sim \mathbb{P}}
        \left[-\log \gT_{\theta}(x_T \mid \bm{x}_1^{T-1})\right].
    \end{equation*}
    By label symmetry,
    \begin{equation*}
        \begin{split}
            \gL(\theta)
            &=
            \mathbb{E}_{\bm{x} \sim \mathbb{P}}
            \left[-\log \gT_{\theta}(\sigma(x_T) \mid \sigma(\bm{x}_1^{T-1}))\right] \\
            &=
            \mathbb{E}_{\bm{x} \sim \mathbb{P}}
            \left[-\log \gT_{\theta^{(\sigma)}}(x_T \mid \bm{x}_1^{T-1})\right] \\
            &=
            \gL(\theta^{(\sigma)}).
        \end{split}
    \end{equation*}
    Let \(T_\sigma\) denote the linear map \(\theta \mapsto \theta^{(\sigma)}\) on the parameter vector.
    The map \(T_\sigma\) is orthogonal, because it acts by left and right multiplication by permutation matrices on \(\bm{W}_A\) and \(\bm{W}_O\), and leaves the RPE parameters unchanged.
    The identity above says that \(\gL(\theta)=\gL(T_\sigma\theta)\).
    To differentiate this identity explicitly, fix an arbitrary perturbation \(u\) and consider the one-dimensional path \(\theta_0+\epsilon u\).
    Since
    \begin{equation*}
        \gL(\theta_0+\epsilon u)
        =
        \gL\bigl(T_\sigma(\theta_0+\epsilon u)\bigr)
        =
        \gL(T_\sigma\theta_0+\epsilon T_\sigma u)\,,
    \end{equation*}
    differentiating with respect to \(\epsilon\) at \(\epsilon=0\) gives
    \begin{equation*}
        \langle \nabla\gL(\theta_0),u\rangle
        =
        \langle \nabla\gL(T_\sigma\theta_0),T_\sigma u\rangle
        =
        \langle T_\sigma^\top \nabla\gL(T_\sigma\theta_0),u\rangle\,.
    \end{equation*}
    Because this holds for every \(u\),
    \begin{equation*}
        \nabla \gL(\theta_0)
        =
        T_\sigma^\top \nabla \gL(T_\sigma\theta_0)\,.
    \end{equation*}
    Multiplying by \(T_\sigma\) and using \(T_\sigma\theta_0=\theta_0^{(\sigma)}\), we obtain
    \begin{equation*}
        T_\sigma \nabla \gL(\theta_0)
        =
        \nabla \gL(\theta_0^{(\sigma)})\,,
    \end{equation*}
    which is the equivariance of the full gradient.

    Now suppose \(T_\sigma\theta_0=\theta_0\).
    Then the equivariance relation implies
    \begin{equation*}
        T_\sigma\bigl(-\nabla\gL(\theta_0)\bigr)
        =
        -\nabla\gL(T_\sigma\theta_0)
        =
        -\nabla\gL(\theta_0)\,.
    \end{equation*}
    Thus the gradient-flow vector field is tangent to the fixed-point set \(\{\theta:T_\sigma\theta=\theta\}\).
    Since the initialization satisfies \(T_\sigma\theta(0)=\theta(0)\) for every \(\sigma\), gradient flow remains in this fixed-point set for every \(r>0\); the argument applies to every \(\sigma\).
    Therefore \(\theta(r)^{(\sigma)}=\theta(r)\) for every \(\sigma\).
    Using \Cref{eq:parameter_relabeling},
    \begin{equation*}
        \gT_{\theta(r)}^{(\sigma)}
        =
        \gT_{\theta(r)^{(\sigma)}}
        =
        \gT_{\theta(r)}\,,
    \end{equation*}
    so \(\gT_{\theta(r)}\in\gM\) for all \(r>0\).
\end{proof}

\subsection{Consequence for Two-Layer Context Matching}

We use one elementary fact about permutation-invariant blocks.
\begin{lemma}
    \label{lemma:permutation_invariant_matrix}
    If \(A \in \mathbb{R}^{|V| \times |V|}\) satisfies
    \begin{equation*}
        P_\sigma A P_\sigma^\top = A\,, \quad \forall \sigma: V \to V,
    \end{equation*}
    then there exist scalars \(c_{\mathrm{diag}},c_{\mathrm{all}}\in\mathbb{R}\) such that
    \begin{equation}
        \label{eq:permutation_invariant}
        A = c_{\mathrm{diag}} \bm{I}_{|V| \times |V|} + c_{\mathrm{all}} \bm{1}_{|V|} \bm{1}_{|V|}^\top.
    \end{equation}
\end{lemma}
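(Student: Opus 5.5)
The plan is to read the invariance condition entrywise and use only two kinds of permutations: transpositions, and permutations mapping one ordered pair of distinct labels to another. Writing $P_\sigma e_j = e_{\sigma(j)}$, the condition $P_\sigma A P_\sigma^\top = A$ is equivalent to
\begin{equation*}
    A_{\sigma(i)\sigma(j)} = A_{ij}\,, \quad \text{for all } i,j\in V \text{ and all permutations } \sigma\,.
\end{equation*}
So $A$ is constant on each orbit of the diagonal action of the symmetric group on $V\times V$. It then suffices to show there are at most two orbits: the diagonal $\{(i,i)\}$ and the off-diagonal $\{(i,j): i\neq j\}$.

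\emph{Diagonal orbit.} For any $i,i'\in V$, the transposition $\sigma=(i\,i')$ gives $A_{i'i'}=A_{ii}$. Hence all diagonal entries equal a common value $a$.

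\emph{Off-diagonal orbit.} Take two ordered pairs $(i,j)$ and $(i',j')$ with $i\neq j$ and $i'\neq j'$. There is a permutation $\sigma$ with $\sigma(i)=i'$ and $\sigma(j)=j'$: prescribe these two values, which is injective because $i\neq j$ and $i'\neq j'$, and extend by any bijection of $V\setminus\{i,j\}$ onto $V\setminus\{i',j'\}$. Hence all off-diagonal entries equal a common value $b$. When $|V|=1$ this case is vacuous.

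\emph{Conclusion.} Set $c_{\mathrm{all}}=b$ and $c_{\mathrm{diag}}=a-b$. Then $A=c_{\mathrm{diag}}\bm{I}_{|V|}+c_{\mathrm{all}}\bm{1}_{|V|}\bm{1}_{|V|}^\top$, because every off-diagonal entry of the right-hand side is $b$ and every diagonal entry is $(a-b)+b=a$. When $|V|=1$, take $c_{\mathrm{all}}=0$ and $c_{\mathrm{diag}}=a$.

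The only step needing care is making the indexing convention of $P_\sigma$ consistent, whether $\sigma$ or $\sigma^{-1}$ appears. This does not matter, since the condition is imposed for all $\sigma$, and the set of all $\sigma$ is closed under inversion.
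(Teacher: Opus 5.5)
Your proof is correct and takes essentially the same route as the paper: transpositions equate the diagonal entries, a permutation sending one ordered pair of distinct labels to another equates the off-diagonal entries, and then $c_{\mathrm{all}}$ is the common off-diagonal value and $c_{\mathrm{diag}}$ is the diagonal-minus-off-diagonal difference. Your explicit construction of the pair-mapping permutation, the entrywise reading of the invariance condition, and the $|V|=1$ remark add care that the paper leaves implicit.
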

\begin{proof}
    First take any two indices \(i,j \in V\), and let \(\sigma\) be the transposition that swaps \(i\) and \(j\).
    Since \(P_\sigma A P_\sigma^\top=A\), the diagonal entries \(A_{ii}\) and \(A_{jj}\) must be equal.
    Thus all diagonal entries share a common value.
    Next take any two ordered pairs \((i,j)\) and \((i',j')\) with \(i\neq j\) and \(i'\neq j'\).
    There is a permutation sending \(i\) to \(i'\) and \(j\) to \(j'\), so invariance gives \(A_{ij}=A_{i'j'}\).
    Thus all off-diagonal entries share a common value.
    Writing the common off-diagonal value as \(c_{\mathrm{all}}\) and the difference between the common diagonal and off-diagonal values as \(c_{\mathrm{diag}}\) gives \Cref{eq:permutation_invariant}.
\end{proof}

\begin{restatable}[Generalized context matching under label symmetry]{corollary}{inductive}
    \label{corollary:inductive}
    Consider a two-layer disentangled transformer satisfying the assumptions of \Cref{theorem:invariance}.
    Suppose the first layer implements the copying heads from \Cref{sec:copying} throughout training, and the second-layer content-based attention matrix is trained by gradient flow on a label-symmetric task.
    Then each \(|V| \times |V|\) token-comparison block of the second-layer content-based attention matrix remains equivalent, up to an input-independent additive attention shift, to a scalar multiple of the identity.
    Consequently, if the value and output maps aggregate successor tokens as in \Cref{prop:transformer_estimator}, the resulting predictor has the generalized soft context-matching form
    \begin{equation*}
            \gT(\bm{x})(m)
            =
            \frac{\displaystyle\sum_{M \subseteq [k] \times [k]} e^{|\bm{\beta}|_{M}} \, N_{M}^{(T)}(m)}
            {\displaystyle\sum_{M \subseteq [k] \times [k]} e^{|\bm{\beta}|_{M}} \, N_{M}^{(T)}}\,,
    \end{equation*}
    where
    \begin{equation*}
        M_s^{(t)} \coloneqq
        \bigl\{ (r_1, r_2) \in [k] \times [k] : x_{t-r_1} = x_{s-r_2} \bigr\}
        \subseteq [k] \times [k],
    \end{equation*}
    \(N_{M}^{(T)}(m)\) counts previous positions \(s\) with \(M_s^{(T)}=M\) and successor token \(x_s=m\), \(N_M^{(T)} = \sum_{m \in V} N_M^{(T)}(m)\), and
    \begin{equation*}
        |\bm{\beta}|_{M} \coloneqq \sum_{(a,b) \in M} \beta_{a,b}.
    \end{equation*}
\end{restatable}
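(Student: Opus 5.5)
The plan is to combine \Cref{theorem:invariance} with \Cref{lemma:permutation_invariant_matrix}, and then rerun the score computation from the proof of \Cref{prop:transformer_estimator} with a general block structure.

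First, by \Cref{theorem:invariance}, gradient flow from a relabeling-fixed initialization keeps \(\theta(r)^{(\sigma)}=\theta(r)\) for every \(\sigma\) and every \(r\). For the second layer this means \(\bm{W}_A^{(2)} = (B_1^{(\sigma)})^\top \bm{W}_A^{(2)} B_1^{(\sigma)}\). Since \(B_1^{(\sigma)}\) is block diagonal with \(P_\sigma\) on each of the \(k+1\) token blocks, this constraint decouples blockwise: every \(|V|\times|V|\) block \(A_{ab}\), with query block \(a\) and key block \(b\), satisfies \(P_\sigma^\top A_{ab} P_\sigma = A_{ab}\) for all \(\sigma\). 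Applying \Cref{lemma:permutation_invariant_matrix} (with \(\sigma^{-1}\) in place of \(\sigma\)) gives \(A_{ab} = \beta_{a,b}\bm{I}_{|V|} + c_{a,b}\bm{1}\bm{1}^\top\).

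Next, I will show that the all-ones part is an input-independent shift. Because the layer-1 representation consists of one-hot blocks (copying heads in the hard limit), \(x^\top \bm{1}\bm{1}^\top y = 1\) for any one-hot \(x,y\). Hence \(\sum_{a,b} c_{a,b}\) adds the same constant to every score \(e_{Tj}^{(2)}\) with \(j\ge k'\). Softmax cancels this constant, so each block is equivalent to \(\beta_{a,b}\bm{I}\). The score then equals \(\sum_{a,b}\beta_{a,b}\,\mathbb{I}\{x_{T-a}=x_{j-b}\}\), after reindexing the blocks to the lags \((r_1,r_2)\in[k]\times[k]\) used in the definition of \(M_s^{(t)}\).

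Two kinds of blocks need separate handling. Blocks involving the query's current token, or the key's current token \(x_j\) (block 0), must be either absorbed into the indexing or assumed to vanish under the statement's hypothesis that values and output aggregate successor tokens as in \Cref{prop:transformer_estimator}. I will treat them as absorbed, matching the paper's shift-by-one convention. Once this is done the score is \(|\bm{\beta}|_{M_j^{(T)}}\).

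The final step reuses the softmax and grouping computation from \Cref{prop:transformer_estimator}, with the RPE masking early positions and no BOS term. Grouping positions by the generalized mask \(M\subseteq[k]\times[k]\) yields the stated ratio of \(N_M^{(T)}(m)\) and \(N_M^{(T)}\).

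I expect the main obstacle to be the bookkeeping in the second and third steps. Specifically, I need to check that the \(\bm{1}\bm{1}^\top\) terms are truly position-independent over all valid keys, which requires one-hot representations throughout, including at early positions that are masked anyway. I also need to reconcile the block indices, including the current-token block, with the \((r_1,r_2)\) lags in the definition of \(M_s^{(t)}\).
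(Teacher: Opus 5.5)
Your proposal is correct and follows the paper's proof essentially step for step. The steps are: invariance of $\bm{W}_A^{(2)}$ from \Cref{theorem:invariance}, blockwise reduction via \Cref{lemma:permutation_invariant_matrix}, removal of the $\bm{1}\bm{1}^\top$ part by softmax because the copied blocks are one-hot, and grouping keys by match mask.

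The block-$0$ bookkeeping you flag is a genuine issue, and the paper's proof passes over it silently by summing only over $[k]\times[k]$. However, the value/output hypothesis constrains what is aggregated, not the attention scores, so it cannot make the extra row and column of blocks vanish. The clean fix is either to let the mask range over $\{0,\dots,k\}\times\{0,\dots,k\}$, where your argument goes through verbatim, or to assume explicitly that those blocks are zero.
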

\begin{proof}
    By \Cref{theorem:invariance}, the second-layer content-based attention matrix remains invariant under simultaneous permutation of token labels.
    In the copied representation, this permutation acts separately on each \(|V|\)-dimensional token block, so each \(|V| \times |V|\) token-comparison block is permutation invariant.
    By \Cref{lemma:permutation_invariant_matrix}, each block has the form
    \begin{equation*}
        c_{\mathrm{diag}} \bm{I}_{|V| \times |V|}
        +
        c_{\mathrm{all}} \bm{1}_{|V|} \bm{1}_{|V|}^\top.
    \end{equation*}
    In the copied representation from \Cref{sec:copying}, each token block is one-hot and has \(L_1\) norm \(1\).
    The \(\bm{1}_{|V|}\bm{1}_{|V|}^{\top}\) term therefore adds the same constant to the attention score for every candidate position \(s\), for a fixed query and block.
    This additive shift is removed by the softmax normalization, so each block is equivalent, for attention weights, to a scalar multiple of the identity.

    Let \(\beta_{r_1,r_2}\) be the scalar associated with the block comparing query lag \(r_1\) to key lag \(r_2\).
    The contribution of this block to the second-layer attention score is \(\beta_{r_1,r_2}\) exactly when the copied query token at lag \(r_1\) equals the copied key token at lag \(r_2\), and is \(0\) otherwise.
    Summing over all block pairs, the context-matching score is therefore proportional to
    \begin{equation*}
        \sum_{(r_1,r_2) \in [k]\times[k]}
        \beta_{r_1,r_2}\,\mathbb{I}\{x_{t-r_1}=x_{s-r_2}\}.
    \end{equation*}
    Grouping candidate positions \(s\) by the induced match mask \(M_s^{(T)}\) gives attention weights proportional to \(e^{|\bm{\beta}|_M}\).
    If the value and output maps aggregate successor-token one-hot vectors as in \Cref{prop:transformer_estimator}, normalizing these weighted mask-conditioned transition counts gives the displayed generalized soft context-matching estimator.
\end{proof}

The corollary generalizes the aligned match mask in \Cref{def:match_mask}: instead of comparing lag \(r\) only with lag \(r\), the most general symmetry-compatible second-layer comparison may assign separate weights to pairs of lags \((r_1,r_2)\).
This should not be interpreted as saying that every trained transformer must use this estimator.
It says that, under the stated architecture and gradient-flow idealization, label symmetry constrains the content-based attention computation to compare token equality rather than token identities.
Heuristically, for the Markov prediction task, we expect the useful parameters to concentrate on the offset-aligned comparisons used in \Cref{prop:transformer_estimator}: these are the super-diagonal blocks in the construction's block notation, which compare the query context to the predecessor context of each candidate successor token. This provides intuition for why the main-text construction and the trained models in \Cref{fig:mech_main} exhibit identity-based, offset-aligned comparison structure.

\end{document}